\def\eqref#1{equation~\ref{#1}}
\def\1{\bm{1}}
\DeclareMathAlphabet{\mathsfit}{\encodingdefault}{\sfdefault}{m}{sl}
\SetMathAlphabet{\mathsfit}{bold}{\encodingdefault}{\sfdefault}{bx}{n}
\definecolor{grey}{rgb}{0.9,0.9,0.9}
\theoremstyle{plain}
\newtheorem{theorem}{Theorem}
\newtheorem{proposition}{Proposition}
\theoremstyle{definition}
\theoremstyle{remark}
\newtheorem*{proposition*}{Proposition}
\newtheorem*{theorem*}{Theorem}
\newcommand{\tabhead}[1]{{\bfseries#1}}
\newcommand{\md}{\text{d}}
\DeclareMathOperator{\lbint}{[\![}
\DeclareMathOperator{\rbint}{]\!]}
\DeclareMathOperator{\trainset}{\mathcal{D}_{\text{tr}}}
\DeclareMathOperator{\adaptset}{\mathcal{D}_{\text{ad}}}
\newcommand{\tone}{NCF-$\mathit{t}_1$\xspace}
\newcommand{\ttwo}{NCF-$\mathit{t}_2$\xspace}
\newcommand{\lv}{\textbf{LV}\xspace}
\newcommand{\go}{\textbf{GO}\xspace}
\newcommand{\sm}{\textbf{SM}\xspace}
\newcommand{\bt}{\textbf{BT}\xspace}
\newcommand{\gs}{\textbf{GS}\xspace}
\newcommand{\ns}{\textbf{NS}\xspace}
\newcommand{\rebut}[1]{#1}
\definecolor{rebutcolor}{rgb}{0,0,0}
\definecolor{codegreen}{rgb}{0,0.6,0}
\definecolor{codegray}{rgb}{0.5,0.5,0.5}
\definecolor{codepurple}{rgb}{0.58,0,0.82}
\definecolor{backcolour}{rgb}{0.95,0.95,0.92}
\lstdefinestyle{mystyle}{
    backgroundcolor=\color{backcolour},   
    commentstyle=\color{codegreen},
    keywordstyle=\color{magenta},
    numberstyle=\tiny\color{codegray},
    stringstyle=\color{codepurple},
    basicstyle=\ttfamily\scriptsize,
    breakatwhitespace=false,         
    breaklines=true,                 
    captionpos=b,                    
    keepspaces=true,                 
    numbers=left,                    
    numbersep=5pt,                  
    showspaces=false,                
    showstringspaces=false,
    showtabs=false,                  
    tabsize=2,
}
\newcommand\DoToC{%
  \startcontents
  \printcontents{}{1}{{\begin{center}\parbox{0.99\textwidth}{\centering\textsc{\textbf{\Large Supplementary Material for Neural Context Flows for Meta-Learning of Dynamical Systems}}}\end{center}\vskip3pt\hrule\vskip5pt}}
  \vskip4pt\hrule\vskip5pt
}
\renewcommand{\cite}{\citep}
\title{Neural Context Flows for Meta-Learning of Dynamical Systems}
\author{Roussel Desmond Nzoyem \\
School of Computer Science\\
University of Bristol\\
\texttt{\footnotesize rd.nzoyemngueguin@bristol.ac.uk} \\
\And \hspace*{-0.6cm}
David A.W. Barton \\
\hspace*{-0.6cm} {\small School of Engineering Mathematics and Technology} \\
\hspace*{-0.6cm} University of Bristol\\
\hspace*{-0.6cm} \texttt{\footnotesize david.barton@bristol.ac.uk} \\
\And
Tom Deakin \\
School of Computer Science\\
University of Bristol\\
\texttt{\footnotesize tom.deakin@bristol.ac.uk} \\
}
\newcolumntype{K}[1]{>{\centering\arraybackslash}p{#1}}
\renewcommand{\cite}{\citep}
\begin{document}

\maketitle

\begin{abstract}
Neural Ordinary Differential Equations (NODEs) often struggle to adapt to new dynamic behaviors caused by parameter changes in the underlying physical system, even when these dynamics are similar to previously observed behaviors. This problem becomes more challenging when the changing parameters are unobserved, meaning their value or influence cannot be directly measured when collecting data. To address this issue, we introduce Neural Context Flow (NCF), a robust and interpretable Meta-Learning framework that includes uncertainty estimation. NCF uses Taylor expansion to enable contextual self-modulation, allowing context vectors to influence dynamics from other domains while also modulating themselves. After establishing theoretical guarantees, we empirically test NCF and compare it to related adaptation methods. Our results show that NCF achieves state-of-the-art Out-of-Distribution performance on 5 out of 6 linear and non-linear benchmark problems. Through extensive experiments, we explore the flexible model architecture of NCF and the encoded representations within the learned context vectors. Our findings highlight the potential implications of NCF for foundational models in the physical sciences, offering a promising approach to improving the adaptability and generalization of NODEs in various scientific applications.
\end{abstract}

\section{Introduction}
\label{introduction}

\begin{wrapfigure}[17]{r}{0.33\textwidth}
\vspace*{-0.60cm}
  \begin{center}
    \includegraphics[width=\linewidth]{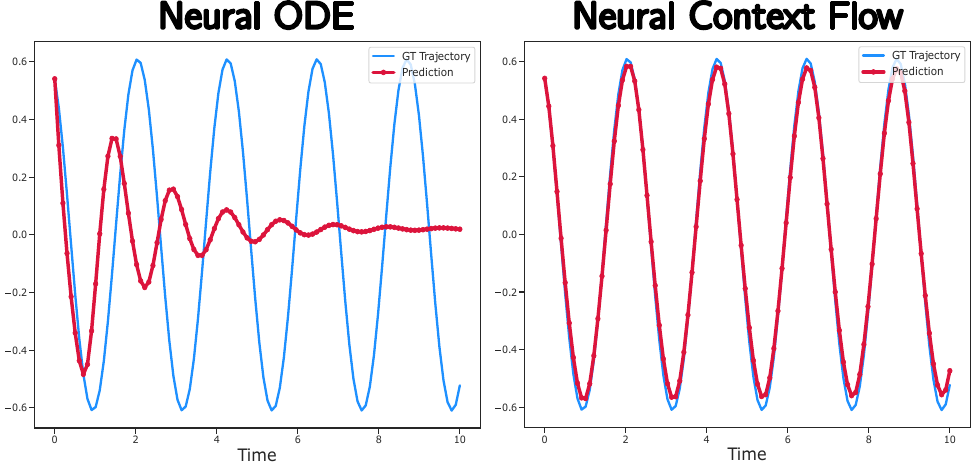}    
  \end{center}
\vspace*{-0.45cm}
  \caption{Predicted angle $\alpha$ for the simple pendulum $\frac{\md^2 \alpha}{\md t^2} + g \sin \alpha = 0 $. The OFA Neural ODE that disregards context fails to generalize its oscillation frequency $\sfrac{\sqrt{g}}{2 \pi}$ to unseen environments, due in part to merged inhomogeneous training data. Our work investigates Neural Context Flows and related Meta-Learning methods to overcome this issue.}
    \label{fig:trajscompare}
\end{wrapfigure}


A prototypical autonomous dynamical system describes the continuous change, through time $t \in \mathbb{R}^+$, of a quantity $x(t) \in \mathbb{R}^d$. Its dynamics are influenced by its parameters $c \in \mathbb{R}^{d_c}$, according to the (ordinary) differential equation
\begin{align} \label{eq:ode}
    \frac{\md x}{\md t}(t) = f(x(t), c),
\end{align}
where $f: \mathbb{R}^d \times \mathbb{R}^{d_c} \rightarrow \mathbb{R}^d $ is the \textbf{vector field}. Learning a dynamical system from data is synonymous with approximating $f$, a task neural networks have been remarkably good at in recent years \citep{chen2018neural,kidger2022neural}.

Consider the challenge of reconstructing the mechanical motion of an undamped pendulum given limited data from two distinct \textbf{environments}: Earth and Mars. Disregarding the physical variations between these environments (e.g., gravitational constants, ambient temperatures, etc.), one might employ a One-For-All (OFA) approach to learn a single environment-agnostic vector field from all available data \cite{yin2021leads}. This model would struggle to adequately fit such heterogeneous data and would face difficulties generalizing to data from novel environments, e.g., the Moon. \cref{fig:trajscompare} illustrates this issue, using gravity as the underlying physical parameter. Alternatively, learning individual vector fields for each environment with a One-Per-Env (OPE) approach would miss inter-domain commonalities, proving both time-intensive and inhibiting rapid adaptation to new environments \cite{yin2021leads}. Given these constraints, it becomes imperative to develop a methodology that can effectively \emph{learn what to learn} from the aggregate data while simultaneously accounting for the \emph{unique} properties of each environment.

In Scientific Machine Learning \cite{cuomo2022scientific} \rebut{(SciML)}, the problem of \textbf{generalization} has largely been tackled by injecting domain knowledge. It is commonly understood that adding a term in \cref{eq:ode} that captures as much of the dynamics as possible leads to lower evaluation losses \cite{yin2021augmenting}. For such terms to be added, however, it is essential to have knowledge of the physical parameters that change, which may then either be directly estimated, or predicted by a neural network within the vector field \cite{rackauckas2020universal}. We are naturally left to wonder how to efficiently learn a generalizable dynamical system when such physics are absent.


Under constantly changing experimental conditions, two major obstacles to learning the parameter dependence of a vector field can be \rebut{isolated}: \textbullet{} (P1) \textbf{Limited data} -- SciML models can be data-intensive \cite{hey2020machine,yin2021augmenting}, and limited data in each environment may not be enough to learn a vector field suitable for all environments; \textbullet{} (P2) \textbf{Unobserved parameters} -- during the data collection and modeling processes, one might be unfamiliar with the basic physics of the system.
Solving these two problems would contribute to the efficient generalization of the learned models, 
particularly to related but Out-of-Distribution (OoD) datasets. Fast OoD adaptation would massively reduce cost and complement recent efforts towards foundational models in the physical sciences \cite{subramanian2024towards,mccabe2023multiple,herde2024poseidon}.

Neural Ordinary Differential Equations (Neural ODEs) \cite{chen2018neural,weinan2017proposal,haber2017stable} have emerged as a \rebut{powerful backbone} for learning ordinary, stochastic, and controlled differential equations \cite{kidger2020neural}. Trained using Differentiable Programming techniques \cite{nzoyem2023comparison, ma2021comparison,rackauckas2020universal}, they have demonstrated \rebut{broad utility with outstanding results} in areas like chemical engineering \cite{owoyele2022chemnode}, geosciences \cite{shen2023differentiable}, and climate modeling \cite{kochkov2024neural}. \rebut{In the increasingly popular SciML subfield of solving parametric PDEs \cite{li2020fourier,takamoto2023learning,li2023identification,subramanian2024towards,koupai2024geps}, Neural ODEs occupy a place of choice due to their flexibility and performance \cite{yin2021leads,lee2021parameterized,kirchmeyer2022generalizing}. That said, existing methods seeking to generalize Neural ODEs to various parameter-defined environments fail to leverage information from environments other than the ones of interest. Not to mention the pervasive lack of interpretability and ways of accounting for the model's uncertainty.}

This work presents \textbf{Neural Context Flows} (NCFs), a novel approach for multi-environment generalization of dynamical systems based on Neural ODEs. By leveraging the regularity of the vector field with respect to unobserved parameters, NCFs parameterize an environment-agnostic vector field and environment-specific latent context vectors to modulate the vector field. The vector field is Taylor-expanded about these context vectors, effectively allowing information to flow between environments. Our contribution is threefold:

\begin{enumerate}
    \item[(1)] We introduce a Meta-Learning methodology for enhancing the generalizability of dynamical systems. Our approach effectively addresses problems (P1) and (P2), challenging the prevailing notion that standard Deep Learning techniques are inherently inefficient for Out-of-Distribution (OoD) adaptation \cite{mouli2023metaphysica}.

    \item[(2)] We present an \emph{interpretable} framework for Multi-Task representation learning, incorporating a straightforward method for \emph{uncertainty \rebut{estimation}}. This work extends the emerging trend of explainable linearly parameterized physical systems \cite{blanke2024interpretable} to non-linear settings, thus broadening its applicability. For affine systems, we provide a concise proof for the identifiability of their underlying parameters.

    \item[(3)] We provide a curated set of benchmark problems specifically designed for Meta-Learning of dynamical systems. This collection encompasses a diverse range of problems frequently encountered in the physical science literature, all accessible through a \emph{unified} interface.
\end{enumerate}


\section{Related Work}
\label{related}

Learning data-driven Neural ODEs that generalize across parameters is only a recent endeavor. To the best of the authors' knowledge, all attempts to solve (P1) and (P2)
have relied on Multi-Task and Meta-Learning to efficiently adapt to new parameter values, thus producing methods with varying levels of interpolation and extrapolation capabilities.

\paragraph{Multi-Task Learning.}
Multi-Task Learning (MTL) describes a family of techniques where a model is trained to jointly perform multiple tasks. In Scientific Machine Learning, one of the earliest methods to attack this generalization problem is LEADS \cite{yin2021leads}. In LEADS, the vector field is decomposed into shared dynamics $f_{\phi}$ and environment-specific $g^e_{\psi}$ components
\begin{align}
    \frac{\md x^e}{\md t} = f_{\phi}(x) + g^e_{\psi}(x),
\end{align}
where the superscript $e$ identifies the environment in which the dynamical system evolves, and $ \{\phi,\psi \}$ are learnable neural network weights.

While LEADS excels at interpolation tasks, it performs poorly during extrapolation \cite{kirchmeyer2022generalizing}. Furthermore, it requires retraining a new network $g^e_{\psi}$ each time a new environment is encountered, which can be constraining in scenarios where adaptation is frequently required. Before LEADS, other MTL approaches had been proposed outside the context of dynamical systems \cite{caruana1997multitask,rebuffi2017learning,rebuffi2018efficient,lee2021parameterized}. Still, they do not address the crucial adaptation to new tasks, which is our focus.


\paragraph{Meta-Learning.}
Another influential body of work looked at Meta-Learning: a framework in which, in addition to the MTL joint training scheme, shared representation is learned for rapid adaptation to unseen tasks with only minimum data \cite{wang2021bridging}. The seminal MAML \cite{finn2017model} popularized Gradient-Based Meta-Learning (GBML) by nesting an inner gradient loop in the typical training process. Since then, several variants aimed at avoiding over-fitting and reducing cost have been proposed, e.g. ANIL \cite{raghu2019rapid} and CAVIA \cite{zintgraf2019fast}. The latter is a contextual learning approach \cite{garnelo2018conditional} that partitions learnable parameters into some that are optimized on each environment, and others shared across environments, i.e., \emph{meta-trained}.

DyAd \cite{wang2022meta} is one of the earliest Meta-Learning approaches to target generalizable dynamical systems. It learns to represent time-invariant features of a trajectory by an \emph{encoder} network, followed by a \emph{forecaster} network to learn shared dynamics across the different environments. 
DyAd only performs well under weak supervision, that is when the underlying (observed) parameters are made known to the loss function via penalization.


Arguably the most successful Meta-Learning method for physical systems is CoDA \cite{kirchmeyer2022generalizing}, which assumes that the underlying system is described by parametrized differential equations whose form is shared by all environments. However, these equations differ by the values of the vector field's weights, which are produced by a (linear) \emph{hypernetwork}. For the environment indexed by $e$, these weights are computed by\footnote{The general CoDA formulation encompasses the GMBL adaptation rule \cite{kirchmeyer2022generalizing}. Furthermore, MTL models can be identified to \cref{eq:coda} with $\theta^c = \mathbf{0}$.}
\begin{align} \label{eq:coda}
    \theta^e = \theta^c + W\xi^e,
\end{align}
where $\theta^c$ and $W$ are shared across environments, and $\xi^e \in \mathbb{R}^{d_{\xi}}$ is an environment-specific latent context vector (or simply \textbf{context}). While it achieves state-of-the-art performance on many physical systems, the main limitation of CoDA is its hypernetwork approach, which might hinder parallelism and \rebut{memory} scaling to large root or target networks. In practice, methods based on hypernetworks require more computational resources to \rebut{backpropagate and} train, and exhibit a more complex optimization landscape \cite{chauhan2023brief}. 




\paragraph{Taylor Expansion.} This local approximation strategy finds numerous applications in SciML, notably helping establish single-trajectory geometrical requirements for linear and affine system identification \cite{duan2020identification}. Recently, \citet{blanke2024interpretable} modeled the variability of linearly parameterized dynamical systems with an affine function of low-dimensional environment-specific context vectors. They empirically showed that this improved interpretability, generalization abilities, and computation speed. In our work, we similarly explain \emph{non-linear} systems, thus generalizing existing work with higher-order Taylor expansion. Additionally, we extract benefits such as massive parallelizability and uncertainty \rebut{estimation}.

\section{Neural Context Flow}
\label{method}

A training dataset $\trainset := \big\{ x^e_{i}(\cdot)\big\}_{ i \in \lbint 1, S \rbint }^{e \in \lbint 1,m \rbint}$ is defined as a set of trajectories collected
from $m$ related environments, with $S$ trajectories per environment, each of length $N \in \mathbb{N}^*$ over a time horizon $T>0$. Given $\trainset$, we aim to find the neural network weights $\theta$ that parameterize a vector field $f_{\theta}$, along with several context vectors $ \{ \xi^e \}_{e \in \lbint 1,m \rbint}$ that modulate its behavior such that 
\begin{align} \label{eq:neuralode}
    \frac{\md x_i^e}{\md t}(t) = f_{\theta}(x_i^e(t), \xi^e), \qquad \forall t \in \left[ 0,T \right], \quad \forall i \in \lbint 1, S \rbint, \quad \forall e \in \lbint 1,m \rbint.
\end{align}
We learn a single vector field for all environments in our training set $\trainset$. The same vector field will be reused, unchanged, for future testing and adaptation to environments in \rebut{a similarly-defined} $\adaptset$. 

The vector field $f_{\theta}$ is assumed to not only be \emph{continuous}, but also \emph{smooth} in its second argument $\xi$. Exploiting this \rebut{constraint}, we ``collect'' information from other environments by Taylor-expanding $f_{\theta}$ around any other $\{\xi^j\}_{j \in \mathrm{P}}$, where $\mathrm{P} \subseteq \lbint 1,m \rbint$ is a \textbf{context pool}\footnote{We note that $\mathrm{P}$ might include $e$ itself, and is reconstituted at each evaluation of \cref{eq:loss}. Its size $p$ is constant, each element indexing a \emph{distinct} environment for computational efficiency (see \cref{subsec:whatsinapool}).} containing $p:=| \mathrm{P}| \geq 1$ environment indices. This gives rise, for fixed $e$ and $i$, to $p$ Neural ODEs
\begin{align} \label{eq:method}
    \begin{dcases}
    \frac{\md x^{e,j}_i}{\md t}(t) = T^k_{f_{\theta}}(x^{e,j}_i(t), \xi^e, \xi^j), \\
    x^{e,j}_i(0) = x^{e}_{i}(0), 
    \end{dcases}
    \qquad \forall j \in \mathrm{P},
\end{align}
where $x^{e,j}_i(\cdot) \in \mathbb{R}^d$, and $\xi^e,\xi^j \in \mathbb{R}^{d_\xi}$. $T^k_{f_{\theta}}(\cdot, \xi^e, \xi^j)$ denotes the $k$-th order Taylor expansion of $f_{\theta}$ at $\xi^e$ around $\xi^j$.
In particular, $T^1_{f_{\theta}}$ can be written as
\begin{align} \label{eq:talor_1}
    T^1_{f_{\theta}}(x^{e,j}_i, \xi^e, \xi^j) &= f_{\theta}(x^{e,j}_i, \xi^j) 
    + \nabla_{\xi} f (x^{e,j}_i, \xi^j)  (\xi^e - \xi^j) 
    + o(\Vert \xi^e - \xi^j \Vert ),
\end{align}
where $o(\cdot)$ captures negligible residuals. \cref{eq:talor_1} directly consists of a Jacobian-Vector Product (JVP), making its implementation memory-efficient. Since higher-order Taylor expansions of vector-valued functions do not readily display the same property, we provide the following proposition to facilitate the second-order approximation.


\begin{proposition}[Second-order Taylor expansion with JVPs] \label{prop:t2}
    Assume $f: \mathbb{R}^d \times \mathbb{R}^{d_{\xi}} \rightarrow \mathbb{R}^{d}$ is $\mathcal{C}^2$ wrt its second argument. Let $x \in \mathbb{R}^d, \xi \in \mathbb{R}^{d_{\xi}} $, and define $g: \bar \xi \mapsto \nabla_{\xi} f(x, \bar \xi)(\xi - \bar \xi) $. The second-order Taylor expansion of $f$ around any $ \tilde \xi \in \mathbb{R}^{\xi}$ is then expressed as
    \begin{align}
        f(x,\xi) &= f(x,\tilde \xi) + \frac{3}{2} g(\tilde \xi) 
        + \frac{1}{2} \nabla g(\tilde \xi) (\xi - \tilde \xi) 
        + o(\Vert \xi - \tilde \xi \Vert^2).
    \end{align}
\end{proposition}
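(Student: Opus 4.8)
The plan is to reduce everything to the ordinary coordinate-wise second-order Taylor expansion of $f(x,\cdot)$ and then show that the peculiar combination $\tfrac32 g(\tilde\xi) + \tfrac12 \nabla g(\tilde\xi)(\xi-\tilde\xi)$ reassembles the usual first- and second-order terms. Write $J(\bar\xi) := \nabla_\xi f(x,\bar\xi) \in \mathbb{R}^{d\times d_\xi}$ for the Jacobian in the second argument and, for each output coordinate $l$, let $H_l(\bar\xi)$ denote the $d_\xi\times d_\xi$ Hessian of $f_l(x,\cdot)$. Since $f$ is $\mathcal{C}^2$ in $\xi$, Taylor's theorem gives, with $\delta := \xi - \tilde\xi$, that the $l$-th coordinate of $f(x,\xi)$ equals $f_l(x,\tilde\xi) + (J(\tilde\xi)\delta)_l + \tfrac12 \delta^\top H_l(\tilde\xi)\delta + o(\|\delta\|^2)$. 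The target is to recover exactly this expression from the right-hand side of the proposition.

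First I would record $g(\tilde\xi) = J(\tilde\xi)\delta$, which is immediate from the definition of $g$. The substantive step is to differentiate $g(\bar\xi) = J(\bar\xi)(\xi - \bar\xi)$ with respect to $\bar\xi$ and evaluate at $\tilde\xi$. Because $g$ is a product of the $\bar\xi$-dependent matrix $J(\bar\xi)$ and the $\bar\xi$-dependent vector $(\xi - \bar\xi)$, the product rule yields two contributions: differentiating $J$ produces, coordinate-wise, the Hessian contracted against $(\xi - \bar\xi)$, while differentiating the factor $(\xi - \bar\xi)$ produces $-J(\bar\xi)$. Concretely, a short index computation gives, for each $l$ and each $b$,
\begin{equation*}
\frac{\partial g_l}{\partial \bar\xi_b}(\tilde\xi) = \sum_a \big(H_l(\tilde\xi)\big)_{ba}\,\delta_a - J_{lb}(\tilde\xi),
\end{equation*}
so that contracting against $\delta$ gives $(\nabla g(\tilde\xi)\delta)_l = \delta^\top H_l(\tilde\xi)\delta - (J(\tilde\xi)\delta)_l$, i.e. $\nabla g(\tilde\xi)\delta = Q(\delta) - g(\tilde\xi)$, where $Q(\delta)$ collects the coordinate-wise quadratic forms $\delta^\top H_l(\tilde\xi)\delta$.

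With these two identities in hand, the conclusion is pure bookkeeping. Substituting into the claimed right-hand side, $\tfrac32 g(\tilde\xi) + \tfrac12 \nabla g(\tilde\xi)\delta = \tfrac32 g(\tilde\xi) + \tfrac12 Q(\delta) - \tfrac12 g(\tilde\xi) = g(\tilde\xi) + \tfrac12 Q(\delta)$, which is precisely $J(\tilde\xi)\delta + \tfrac12 Q(\delta)$, the genuine first- plus second-order Taylor term; adding $f(x,\tilde\xi)$ and absorbing the $o(\|\delta\|^2)$ remainder completes the proof. I expect the only delicate point to be the product-rule differentiation of the matrix--vector map $g$: one must track that the $(\xi-\bar\xi)$ factor, when differentiated, contributes a $-J(\tilde\xi)$ term, and that the Hessian contraction --- a genuine $3$-tensor since $f$ is vector-valued --- matches the standard quadratic term by symmetry of the second derivatives. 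The chosen coefficients $\tfrac32$ and $\tfrac12$ are exactly what is needed for the spurious $-\tfrac12 J(\tilde\xi)\delta$ to cancel against part of $\tfrac32 g(\tilde\xi)$, leaving a single clean $J(\tilde\xi)\delta$.
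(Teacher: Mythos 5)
Your proof is correct and takes essentially the same route as the paper's: both hinge on the identity $\nabla g(\tilde\xi)(\xi-\tilde\xi) = \big[\text{Hessian term contracted with } (\xi-\tilde\xi)\big] - g(\tilde\xi)$, obtained by differentiating the matrix--vector product defining $g$, and then substitute this into the standard second-order Taylor expansion to produce the coefficients $\tfrac{3}{2}$ and $\tfrac{1}{2}$. The only difference is cosmetic: you compute $\nabla g$ by an explicit index/product-rule calculation, whereas the paper derives the same identity from a perturbation expansion of $g(y+h)$ using the first-order Taylor expansion of $\nabla f$ together with the symmetry of the Hessian.
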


\begin{proof}
    We refer the reader to \cref{app:proofs}.
\end{proof}

By setting $\xi^e := \xi$, and $\xi^j := \tilde \xi$, \cref{prop:t2} yields an expression for $T^2_{f_{\theta}}(\cdot,\xi^e, \xi^j)$ in terms of JVPs, an implementation of which we detail in \cref{app:code}. During training (as described below) trajectories from all $p$ Neural ODEs are used within the loss function.


\begin{figure}[t]
\begin{center}
\centerline{\includegraphics[width=0.9\columnwidth]{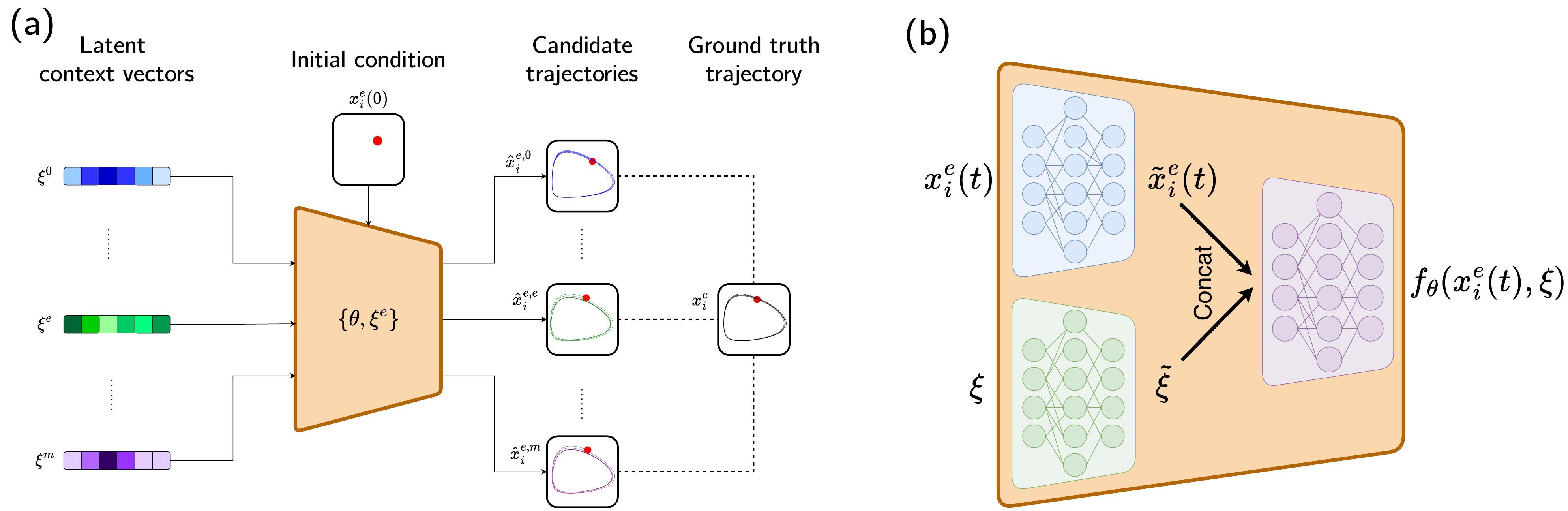}}
\caption{\textbf{(a)} Illustration of the Neural Context Flow (NCF). Given an initial condition $x_i^e(0)$ for a training trajectory $i$ in the environment $e$, NCF predicts in parallel, several candidate trajectories $ \{\hat{x}^{e,j}_{i} \}_{j \in \mathrm{P}}$ that are all compared to the ground truth $x_i^{e}$, upon which $\{\theta,\xi^e\}$ is updated. \textbf{(b)} Depiction of the 3-networks architecture for $f_{\theta}$, where the state vector and an arbitrary context vector are projected into the same representational space before they can interact inside the main network.}
\label{fig:method}
\end{center}

\vspace*{-0.6cm}
\end{figure}

This new framework is called \textbf{Neural Context Flow} (NCF) as is the resulting model. The ``flow'' term refers to the capability of the context from one environment, i.e., $j$, to influence predictions in another environment, i.e., $e$, by means of Taylor expansion. It allows the various contexts to not only modulate the behavior of the vector field, but to equally modulate theirs, since they are forced to \textcolor{teal}{remain close} for the Taylor approximations to be accurate. This happens while the same contexts are \textcolor{orange}{pushed apart} by the diversity in the data. This self-modulation process and the beneficial friction it creates is notably absent from other contextual Meta-Learning approaches like CAVIA \cite{zintgraf2019fast} and CoDA \cite{kirchmeyer2022generalizing}.

We depict the NCF framework in \cref{fig:method}, along with a \textbf{3-networks} architecture that \rebut{lifts} the contexts and the state vectors into the same representational space before they can interact. Concretely, $x_i^e(t)$ and the context $\xi$ are first processed independently into $\tilde x_i^e(t)$ (by a state network in blue) and $\tilde \xi$ (by a context network in green) respectively, before they are concatenated and fed to a main network (in purple) to produce $f_{\theta}(x_i^e(t), \xi)$. This explicitly allows the model to account for the potentially nonlinear relationship between the context and the state vector at each evaluation of the vector field.


\subsection{Meta-Training}

Starting from the same initial state $\hat x^{e,j}_i(0) := x^e_i(0)$, the Neural ODEs in \cref{eq:method} are integrated using a differentiable numerical solver \cite{kidger2022neural,nzoyem2023comparison,politorchdyn,torchdiffeq}:
\begin{align} \label{eq:candidates}
    \hat x^{e,j}_i(t) = \hat x^{e,j}_i(0) + \int_0^t T^k_{f_{\theta}}(\hat x^{e,j}_i(\tau), \xi^e, \xi^j) \, \md \tau, \quad 
    \forall j \in \mathrm{P}.
\end{align}
The resulting \textbf{candidate} trajectories are evaluated at specific time steps $\left\{ t_n \right\}_{n \in \lbint 1,N \rbint}$  such that $t_1 = 0 \text{ and } t_{N} = T$. We feed these to a supervised (inner) loss function
\begin{align} \label{eq:loss}
    \ell(\theta,\xi^e,\xi^j, \hat{x}^{e,j}_{i}, x^{e}_{i}) := \frac{1}{N \times d} \sum_{n=1}^{N}{\Vert \hat{x}^{e,j}_{i}(t_n) - x^{e}_{i}(t_n) \Vert_2^2} 
    +  \frac{\lambda_1}{d_{\xi}} \Vert \xi^e \Vert_1 
    +  \frac{\lambda_2}{d_{\theta}} \Vert \theta \Vert_2^2,
\end{align}
where 
$d, d_{\xi},$ and $d_{\theta}$ are the dimensions of the state space, context vectors, and flattened network weights, respectively. $\Vert \cdot \Vert_1$ and $\Vert \cdot \Vert_2$ denote the \rebut{regularizing} $L^1$ and $L^2$ norms, with $\lambda_1$ and $\lambda_2$ their penalty coefficients, respectively. 
To ensure its independence from environment count, context pool size, and trajectory count per environment, the overall MSE loss function $\mathcal{L}$ is expressed as in \cref{eq:bigloss}; after which it is minimized wrt both weights $\theta$ and contexts $\xi^{1:m} := \{ \xi^e \}_{e \in \lbint1,m \rbint}$ via gradient descent, alternating between updates:
\begin{align} \label{eq:bigloss}
    \mathcal{L}(\theta, \xi^{1:m}, \trainset) := \frac{1}{m \times S \times p} \sum_{e=1}^m \sum_{i=1}^S \sum_{j=1}^p  \ell(\theta,\xi^e,\xi^j, \hat{x}^{e,j}_{i}, x^{e}_{i}).
\end{align}



The most effective way to train NCFs is via \textbf{proximal} alternating minimization as described in \cref{alg:ncf_proximal}. Although more computationally demanding compared to ordinary alternating minimization (see \cref{alg:ncf}), it is adept at dealing with non-smooth loss terms like the $L^1$ norm in \cref{eq:loss} \cite{parikh2014proximal}. Not to mention that ordinary alternating minimization (\cref{alg:ncf}) can easily lead to sub-optimal convergence \cite{attouch2010proximal,li2019alternating}, while its proximal counterpart converges, with random initialization, almost surely to second-order stationary points provided mild assumptions are satisfied (see \cref{the:convergence}). We provide a short discussion on those assumptions in \cref{app:convproximal}.

The above comparison demands the definition of two variants of Neural Context Flows. \textbf{NCF-$\mathit{t}_2$} uses \emph{second}-order Taylor expansion in \eqref{eq:method}, and is trained via \emph{proximal} alternating minimization (\cref{alg:ncf_proximal}). \textbf{NCF-$\mathit{t}_1$} on the other hand, is implemented using a \emph{first}-order Taylor expansion, and trained using \emph{ordinary} alternating minimization (\cref{alg:ncf}). Although less expressive than \ttwo, \tone is faster and serves as a powerful baseline in our experiments.

\begin{theorem}[Convergence to second-order stationary points]
\label{the:convergence}
Assume that $\mathcal{L}(\cdot, \cdot, \trainset)$ satisfies the Kurdyka-Lojasiewicz (KL) property, is $L$ bi-smooth, and $\nabla \mathcal L(\cdot, \cdot, \trainset)$ is Lipschitz continuous on any bounded subset of domain $\mathbb{R}^{d_\theta} \times \mathbb{R}^{d_{\xi} \times m}$. Under those assumptions, let $(\theta_0, \xi^{1:m}_0)$ be a random initialization and $(\theta_q, \xi^{1:m}_q)$ be the sequence generated by \cref{alg:ncf_proximal}. If the sequence $(\theta_q, \xi^{1:m}_q)$ is bounded, then it converges to a second-order stationary point of $\mathcal{L}(\cdot, \cdot, \trainset)$ almost surely.
\end{theorem}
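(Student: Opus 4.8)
The plan is to recognize \cref{alg:ncf_proximal} as a two-block instance of the proximal alternating minimization scheme of \citet{attouch2010proximal}, and then to assemble the claim from two ingredients: first-order whole-sequence convergence driven by the KL property, and strict-saddle avoidance driven by the random initialization. Writing $z_q := (\theta_q, \xi^{1:m}_q)$, I would decompose $\mathcal{L} = H + r_1 + r_2$, where $H$ is the smooth data-fitting term, $r_1(\theta) = \frac{\lambda_2}{d_\theta}\Vert\theta\Vert_2^2$ is a smooth block regularizer (absorbable into $H$), and $r_2(\xi^{1:m}) = \frac{\lambda_1}{d_\xi}\sum_e \Vert\xi^e\Vert_1$ is the nonsmooth, block-separable $L^1$ regularizer. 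Because each proximal subproblem adds a quadratic term $\frac{1}{2\mu}\Vert\cdot - \cdot_q\Vert^2$ to a block update, the subproblems are locally strongly convex and admit unique minimizers, so the iteration $z_q \mapsto z_{q+1}$ is well-defined.

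First I would prove convergence to a single critical point. Using the assumed bi-smoothness of $H$ together with the proximal terms, I would verify the two conditions of the abstract descent framework: a sufficient-decrease inequality $\mathcal{L}(z_{q+1}) + a\Vert z_{q+1} - z_q \Vert^2 \le \mathcal{L}(z_q)$ for some $a > 0$, obtained by testing each subproblem's optimality against the previous iterate; and a relative-error bound $\mathrm{dist}\big(0, \partial \mathcal{L}(z_{q+1})\big) \le b\Vert z_{q+1} - z_q \Vert$, obtained from the subproblem stationarity conditions and the local Lipschitz continuity of $\nabla H$ on the bounded set containing the (assumed bounded) sequence. Given boundedness, these two inequalities combined with the KL property yield finite trajectory length $\sum_q \Vert z_{q+1} - z_q \Vert < \infty$, hence convergence of the whole sequence to one critical point $z_\star$.

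Next I would upgrade first-order to second-order stationarity. On the bounded region, and on the active set where $r_2$ is locally smooth, the one-step update map is $C^1$ and its fixed points are exactly the critical points of $\mathcal{L}$. The key lemma is that a strict saddle -- a critical point whose Hessian has a strictly negative eigenvalue -- is an unstable fixed point: along the negative-curvature direction the Jacobian of the update map has spectral radius exceeding one. By the center-stable manifold theorem, the set of initializations whose orbit converges to any fixed strict saddle lies on a lower-dimensional stable manifold and is therefore Lebesgue-null; a countable cover over all strict saddles keeps the exceptional set null. Since $z_0$ is drawn from a distribution absolutely continuous with respect to Lebesgue measure, the orbit almost surely avoids every strict saddle, so the limit $z_\star$ from the previous step is almost surely a second-order stationary point. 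This is precisely the random-initialization guarantee established for proximal alternating minimization in \citet{li2019alternating}.

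The hard part will be the nonsmoothness of the $L^1$ regularizer in the second-order argument, because the stable-manifold machinery needs a locally $C^1$ (indeed locally diffeomorphic) iteration map, which fails where $r_2$ is nondifferentiable. The cleanest resolution is to invoke the strict-saddle-avoidance theorem of \citet{li2019alternating} directly: it is tailored to this proximal-alternating setting with nonsmooth separable terms and already absorbs the active-set and proximal-map analysis, so the remaining task reduces to checking that $\mathcal{L}$ and \cref{alg:ncf_proximal} meet its hypotheses -- exactly the KL property, bi-smoothness, local Lipschitzness of $\nabla \mathcal{L}$, and boundedness that we have assumed. A secondary subtlety is keeping the proximal step sizes within the range that simultaneously guarantees the descent inequality and the expansion property at saddles, which I would control through the step-size schedule built into \cref{alg:ncf_proximal}.
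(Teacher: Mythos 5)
Your proposal ends exactly where the paper's proof begins and ends: the paper's entire argument is to adapt Assumptions~1 and~4 of \citet{li2019alternating} and then apply their Theorem~2 to \cref{alg:ncf_proximal}, which is precisely the reduction you arrive at in your final paragraph. Your middle paragraphs (sufficient decrease plus relative error plus the KL property giving whole-sequence convergence to a critical point; the center-stable manifold theorem giving almost-sure avoidance of strict saddles under random initialization) are a faithful sketch of what sits \emph{inside} that cited theorem, so on the stated hypotheses your proof and the paper's are essentially the same proof.

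The one thing to correct is your resolution of what you call the hard part. You claim the theorem of \citet{li2019alternating} ``is tailored to this proximal-alternating setting with nonsmooth separable terms and already absorbs the active-set and proximal-map analysis.'' It is not: their analysis is for \emph{smooth} two-block objectives, and it must be, because second-order stationarity requires Hessians and the saddle-avoidance argument requires a locally $C^1$ iteration map---the very obstructions you yourself identified. The nonsmooth-separable literature (proximal alternating schemes in the sense of \citet{attouch2010proximal} and the PALM family) yields only first-order guarantees. The reason your proof of \cref{the:convergence} nevertheless goes through is that the theorem's own hypotheses---bi-smoothness and Lipschitz continuity of $\nabla\mathcal{L}$ on bounded sets---already presuppose that $\mathcal{L}$ is smooth, so under the stated assumptions there is no $L^1$ kink to handle; your hard part dissolves rather than being absorbed by the citation. (Indeed, when $\lambda_1 > 0$ in \cref{eq:loss}, $\nabla\mathcal{L}$ fails to exist at $\xi^e = \mathbf{0}$, which is exactly the initialization in \cref{alg:ncf_proximal}, so the hypotheses exclude the genuinely nonsmooth case outright---a tension the paper itself leaves unaddressed.) State this plainly rather than attributing to \citet{li2019alternating} a generality their result does not have.
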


\begin{minipage}[t]{0.48\textwidth}

\vspace*{-0.6cm}
\begin{algorithm}[H]
   \caption{Proximal Alternating Minimization}
   \label{alg:ncf_proximal}
\begin{algorithmic}[1]
   \State {\bfseries Input:} 
   $ \trainset := \{ \mathcal{D}^e \}_{e\in \lbint 1,m \rbint}$
   \State $\,\, \theta_0 \in \mathbb{R}^{d_{\theta}}$ randomly initialized 
   \State $\,\, \xi^{1:m}_0 = \bigcup\limits_{e=1}^m \xi^e$, where $ \xi^e = \mathbf{0} \in \mathbb{R}^{d_{\xi}} $
   \State $\,\, q_{\text{max}} \in \mathbb{N}^*$; $\beta \geq L \in \mathbb{R}^+; \eta_{\theta}, \eta_{\xi} > 0$

   \For{$q \leftarrow 1,q_{\text{max}}$}

   \State $\mathcal{G}(\theta) := \mathcal{L}(\theta, \xi^{1:m}_{q-1}, \trainset) + \frac{\beta}{2} \Vert \theta - \theta_{q-1} \Vert_2^2$
   \State $\theta_q = \theta_{q-1}$
   
   \Repeat
   \State $\theta_q \leftarrow \theta_{q} - \eta_{\theta} \nabla \mathcal{G} (\theta_{q})$
   \Until{$\theta_q$ converges}

   \State $\mathcal{H}(\xi^{1:m}) := \mathcal{L}(\theta_q, \xi^{1:m}, \trainset) \newline \text{\qquad \qquad \qquad \qquad} + \frac{\beta}{2} \Vert \xi^{1:m} - \xi^{1:m}_{q-1} \Vert_2^2$
   \State $\xi^{1:m}_q = \xi^{1:m}_{q-1}$

   \Repeat
   \State $\xi^{1:m}_q \leftarrow \xi^{1:m}_q - \eta_{\xi} \nabla \mathcal{H}(\xi^{1:m}_q)$
   \Until{$\xi^{1:m}_q$ converges}

   \EndFor
\end{algorithmic}
\end{algorithm}

\vspace*{-0.3cm}

\end{minipage}
\hfill
\begin{minipage}[t]{0.48\textwidth}

\vspace*{-0.6cm}
\begin{algorithm}[H]
   \caption{Sequential Adaptation of NCF}
   \label{alg:ncf_adp_seq}
\begin{algorithmic}[1]
   \State {\bfseries Input:} 
   $ \adaptset := \{ \mathcal{D}^{e'} \}_{e'\in \lbint a,b \rbint}$
   \State $\,\, \theta \in \mathbb{R}^{d_{\theta}}$ learned 
   \State $ \,\, \xi^{e'} = \mathbf{0} \in \mathbb{R}^{d_{\xi}}, \forall e' \in \lbint a,b \rbint$
    \State $\,\, \eta > 0$
   \For{$e' \leftarrow a,b$}
   \State {$\mathcal{L}(\xi^{e'}, \mathcal{D}^{e'}) := \newline \text{\qquad \qquad } \frac{1}{S'} \sum\limits_{i=1}^{S'} \ell(\theta,\xi^{e'},\xi^{e'}, \hat{x}^{e',e'}_{i}, x^{e'}_{i}) $}
   \Repeat
   \State \quad $\xi^{e'} \leftarrow \xi^{e'} - \eta \nabla \mathcal{L}(\xi^{e'}, \mathcal{D}^{e'})$
   \Until{$\xi^{e'}$ converges}
   \EndFor
\end{algorithmic}
\end{algorithm}

\begin{proof}
The proof of \cref{the:convergence} is straightforward by adapting Assumptions 1 and 4, then Theorem 2 of \cite{li2019alternating}.   
\end{proof}

\end{minipage}

\subsection{Adaptation (or Meta-Testing)}
Few-shot adaptation to a new environment $e' \in \lbint a,b \rbint$ in or out of the meta-training distribution requires relatively less data, i.e., its trajectory count $S' \ll S $. Here, the network weights are frozen, and the goal is to find a context $\xi^{e'}$ such that
\begin{align} \label{eq:neuralode_adapt}
    \frac{\md x_i^{e'}}{\md t}(t) = f_{\theta}(x_i^{e'}(t), \xi^{e'}), \qquad \forall i \in \lbint 1, S' \rbint.
\end{align}

Our adaptation rule, as outlined in \cref{alg:ncf_adp_seq} is extremely fast, converging in seconds for trainings that took hours. In scenarios where we want to adapt to more than one environment, we outline a bulk version in \cref{app:bulk} (see \cref{alg:ncf_adp_bulk}). Although the bulk adaptation algorithm is parallelizable and framed in the same way as during meta-training, it does not allow for flow of contextual information, since this causes significant accuracy degradation. Most importantly, disabling the Taylor expansion at this stage limits the size of the context pool to $p=1$ and significantly improves memory efficiency, an important resource when adapting to a large number of environments.



\subsection{InD and OoD Testing}

We distinguish two forms of testing. For \textbf{In-Domain} (InD) testing, the environments are the same as the ones used during training. In InD testing data, the underlying parameters of the dynamical system we aim to reconstruct are unchanged, and so the meta-learned context vectors are reused. \textbf{Out-of-Distribution} (OoD) testing considers environments encountered during adaptation. The data is from the same dynamical system, but defined by different parameter values (either interpolated or extrapolated). In all forms of testing, only the main predicted trajectory corresponding to $j=e$ or $j=e'$ is used in the MSE and MAPE metrics computation (see \cref{subsec:lv} for definitions), thus returning to a standard Neural ODE. Other candidate trajectories are aggregated to ascertain the model's uncertainty. The initial conditions that define the trajectories are always unseen, although their distribution never changes from meta-training to meta-testing.


\section{Main Results}
\label{experiments}

In this section, we evaluate the effectiveness of our proposed framework by investigating two main questions. \textbf{(\textit{i})} How good are NCFs at resolving (P1) and (P2) for interpolation tasks (\cref{subsec:interpolation})? \textbf{(\textit{ii})} How does our framework compare to SoTA Meta-Learning baselines (\cref{subsec:extrapolation})? \rebut{Further questions regarding interpretability, scalability, and uncertainty estimation are formulated and addressed in \cref{subsec:interpret,subsec:uncertainty,app:scalability} respectively.}

\subsection{Experimental Setting}

The NCF framework is evaluated on seven seminal benchmarks. The Simple Pendulum (\textbf{SP}) models the periodic motion of a point mass suspended from a fixed point. The Lotka-Volterra (\textbf{LV}) system models the dynamics of an ecosystem in which two species interact. Additional ODEs include a simple model for yeast glycolysis: Glycolytic-Oscillator (\textbf{GO}) \cite{daniels2015efficient}, and the more advanced Sel'kov Model (\textbf{SM}) \cite{strogatz2018nonlinear,sel1968self}. Like GO, SM is non-linearly parameterized, but additionally exhibits starkly different behaviors when its key parameter is varied (see \cref{fig:attractors}). Finally, we consider three PDEs, all with periodic boundary conditions, and cast as ODEs via the method of lines: the non-linear oscillatory Brusselator (\bt) model for autocatalytic chemical reactions \cite{prigogine1968symmetry}, the Gray-Scott (\textbf{GS}) system also for reaction-diffusion in chemical settings \cite{pearson1993complex}, and Navier-Stokes (\ns) for incompressible fluid flow \cite{stokes1851effect}.


For all problems, the parameters and initial states are sampled from distributions representative of real-world problems observed in the scientific community, and the trajectories are generated using a time-adaptive 4th-order Runge-Kutta solver \cite{2020SciPy-NMeth}. For LV, GO, GS, and NS, we reproduce the original guidelines set in \cite{kirchmeyer2022generalizing}, while exposing the data for ODE and PDE problems alike via a common interface. Such use of synthetic data is a common practice in this emerging field of generalizable dynamical systems, where the search for unifying benchmarks remains an open problem \cite{massaroli2020dissecting}. This need for shared datasets and APIs has motivated the \texttt{\href{https://github.com/ddrous/gen-dynamics}{Gen-Dynamics}} open-source initiative, our third and final contribution with this paper. Further details, along with the data generation process, are given in \cref{app:datasets}.

We now highlight a few key practical considerations shared across experiments. We use the 3-networks architecture depicted in \cref{fig:method}b to suitably process the state and context variables. The dimension of the context vector, the context pool's size and filling strategy, and the numerical integration scheme vary across problems. For instance, we set $d_{\xi}=1024$ for LV, $d_{\xi}=202$ for NS, and $d_{\xi}=256$ for all other problems; while $p=2$ for LV and SM, $p=4$ for LV and GO, and $p=3$ for all PDE problems. Other hyperparameters are carefully discussed in \cref{app:detailes}.

\subsection{Interpolation Results}
\label{subsec:interpolation}

\begin{wraptable}[7]{R}{0.5\textwidth}
\vspace*{-0.5cm}
\caption{Training and adaptation testing MSEs ($\downarrow$) with OFA, OPE, and \tone on the SP problem.}
\label{tab:pendulum}
\begin{center}
\begin{small}
\begin{sc}
\vspace*{-0.4cm}
\begin{tabular}{lcc}
\toprule
 & Train ($\times 10^{-1}$) & Adapt ($\times 10^{-3}$) \\
\midrule
OFA    & 9.49 $\pm$ 0.04  & 115000  $\pm$ 3200\\
OPE    &  0.18 $\pm$ 0.02 & 459.0  $\pm$ 345.0\\
NCF  & 0.10 $\pm$ 0.03 & 0.0356 $\pm$ 0.001\\
\bottomrule
\end{tabular}
\end{sc}
\end{small}
\end{center}
\end{wraptable}

This experiment explores the SP problem discussed in \cref{fig:trajscompare}. During meta-training, we use 25 environments with the gravity $g$ regularly spaced in $[2,24]$. Each of these environments contains only 4 trajectories with the initial conditions $x^e_i(0) \sim \left( \mathcal{U}(-\frac{\pi}{3}, \frac{\pi}{3}), \mathcal{U}(-1,1) \right)^T$. During adaptation, we interpolate to 2 new environments with $g \in \{10.25,14.75\}$, each with 1 trajectory. \rebut{For both training and adaptation testing scenarios, we generate 32 separate trajectories.}

\cref{tab:pendulum} emphasizes the adaptation-time merits of Meta-Learning approaches like NCFs in place of baselines where one context-agnostic vector field is trained for all environments indiscriminately (One-For-All or OFA), \rebut{or for each environment independently (One-Per-Env or OPE)}. Additional results and further analysis of these differences is done in \cref{subsec:sp}.



\subsection{Extrapolation Results}
\label{subsec:extrapolation}

A Meta-Learning algorithm is only as good as its ability to extrapolate to unseen environments. In this section we tackle several one-shot generalization problems, i.e. $S'=1$. We consider the LV, GO, SM, GS, BT, and NS problems, which all involve adaptation environments outside their meta-training distributions. We compare both variants of NCF to two baselines: CAVIA \cite{zintgraf2019fast} which is conceptually the closest GBML method to ours, and CoDA-$\ell_1$ \cite{kirchmeyer2022generalizing}. Their hyperparameters, laid out in \cref{app:detailes}, are tuned for a balance of computational efficacy and performance, all the while respecting each baseline's key assumptions.

At similar parameter counts\footnote{This accounts for the total number of learnable parameters in the framework excluding the context vectors, whose size might vary with the method as explained in \cref{app:detailes}.}, \cref{tab:sota_results} shows that CAVIA is the least effective for learning all six physical systems, with a tendency to overfit on the few shots it receives during both meta-training and meta-testing \cite{mishra2017simple}. \ttwo achieves SoTA OoD results on 5 out of 6 problems. While CoDA retains its superiority on GS, we find that it struggles on non-linear problems, most notably on the SM problem whose trajectories go through 3 distinct attractors in a Hopf bifurcation.  
\begin{wrapfigure}[11]{L}{0.25\textwidth}
\vspace*{-0.5cm}
\centering
\includegraphics[width=\linewidth]{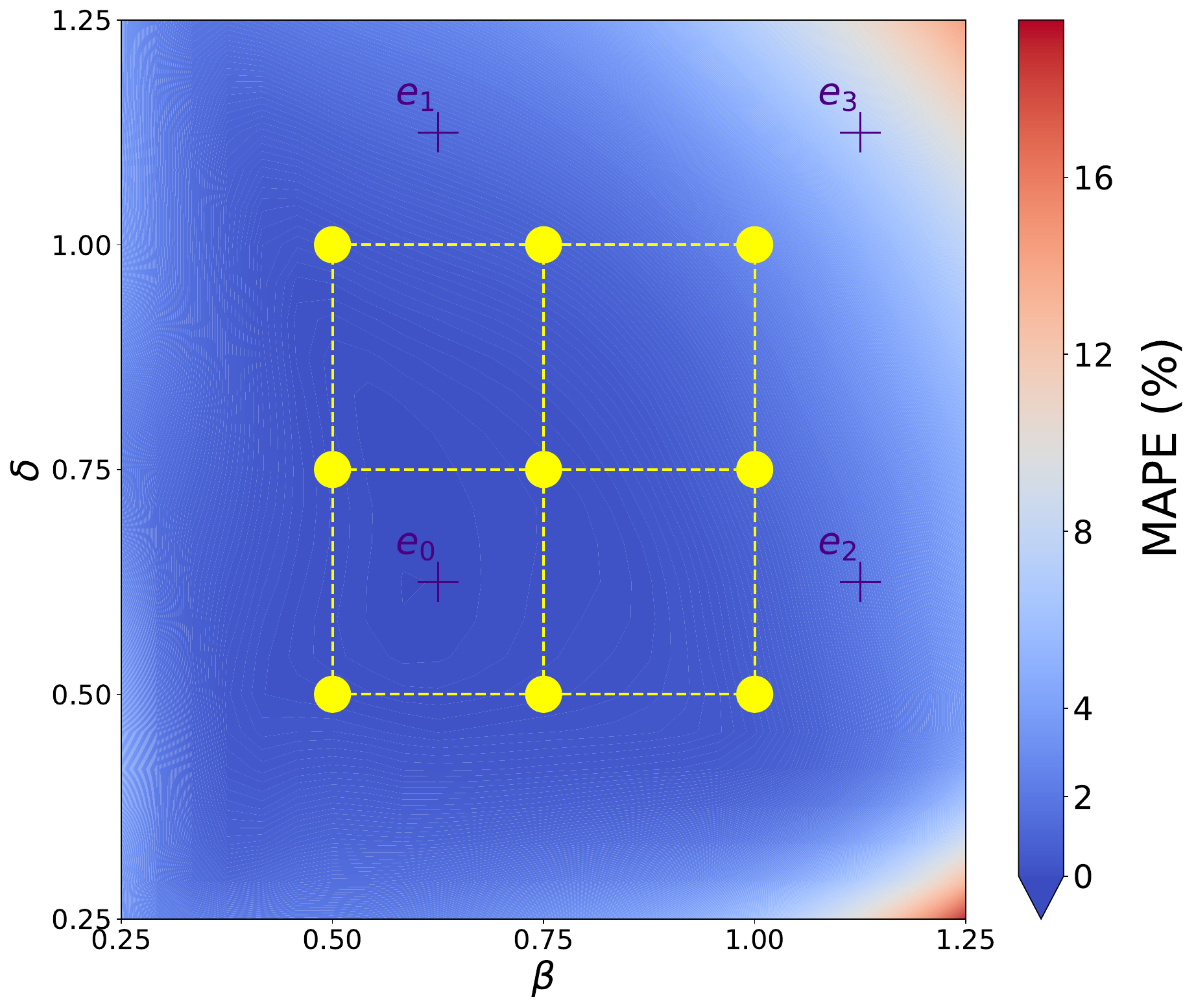}
\caption{Grid-wise adaptation on the LV problem showing low MAPEs ($\downarrow$).}
\label{fig:huge_adapt}
\end{wrapfigure}

This shows that CoDA's ability to automatically select low-rank adaptation subspaces via a linear hypernetwork decoder is limited, particularly for highly nonlinear systems. \rebut{We note that \tone equally fails to accurately resolve SM and other non-linear problems, thus illustrating the value of second-order Taylor expansion. This suggests that higher-order Taylor-based regularization prevents \ttwo from learning spurious associations, which is a problem commonly associated with poor OoD generalization \cite{mouli2023metaphysica}.}

\begin{table*}[t!]
\caption{In-Domain (InD) and adaptation (OoD) test MSEs ($\downarrow$) for the LV, GO, SM, BT, GS and NS problems. The best is reported in \textbf{bold}. The best of the two NCF variants is shaded in \colorbox{grey}{grey}.}
\vspace*{-0.3cm}
\label{tab:sota_results}
\begin{center}
\footnotesize
\begin{sc}
\begin{tabularx}{0.95\textwidth}{lcccccc}
\toprule
& \multicolumn{3}{c}{\textbf{LV} ($\times 10^{-5}$)} & \multicolumn{3}{c}{\textbf{GO} ($\times 10^{-4}$)} \\
\cmidrule(lr){2-4} \cmidrule(lr){5-7}
 & \#Params  & InD  & OoD &  \#Params  & InD & OoD   \\
\midrule
CAVIA      & 305246 & 91.0$\pm$63.6 & 120.1$\pm$28.3 & 130711 & 64.0$\pm$14.1 & 463.4$\pm$84.9 \\
CoDA      & 305793 & \textbf{1.40$\pm$0.13} & 2.19$\pm$0.78 & 135390 & 5.06$\pm$0.81 & 4.22$\pm$4.21 \\
\tone   & 308240 & 6.73$\pm$0.87 & 7.92$\pm$1.04 & 131149 & 40.3$\pm$9.1 & 19.4$\pm$1.24 \\
\ttwo   & 308240 & \cellcolor{grey} 1.68$\pm$0.32 &  \cellcolor{grey} \textbf{1.99$\pm$0.31} & 131149 & \cellcolor{grey} \textbf{3.33$\pm$0.14}  & \cellcolor{grey} \textbf{2.83$\pm$0.23} \\
\bottomrule
\end{tabularx}

\begin{tabularx}{0.95\textwidth}{lcccccc}
& \multicolumn{3}{c}{\textbf{SM} ($\times 10^{-3}$)} & \multicolumn{3}{c}{\textbf{BT} ($\times 10^{-1}$)} \\
\cmidrule(lr){2-4} \cmidrule(lr){5-7}
 & \#Params  & InD & OoD  &  \#Params  & InD  & OoD  \\
\midrule
CAVIA      & 50486 & 979.1$\pm$141.2 & 859.1$\pm$70.7 & 116665 & 21.93$\pm$1.8 & 22.6$\pm$7.22 \\
CoDA      & 50547 & 156.0$\pm$40.52 &  8.28$\pm$0.29 & 119679 & 25.40$\pm$9.5 &  19.47$\pm$11.6 \\
\tone   & 50000 & 680.6$\pm$320.1 & 677.2$\pm$18.7 & 117502 & 21.53$\pm$8.9 & 20.89$\pm$12.0 \\
\ttwo   & 50000 & \cellcolor{grey} \textbf{6.42$\pm$0.41}  & \cellcolor{grey} \textbf{2.03$\pm$0.12} & 117502 & \cellcolor{grey} \textbf{3.46$\pm$0.09}  & \cellcolor{grey} \textbf{3.77$\pm$0.15} \\
\bottomrule
\end{tabularx}

\begin{tabularx}{0.95\textwidth}{lcccccc}
& \multicolumn{3}{c}{\textbf{GS} ($\times 10^{-3}$)} & \multicolumn{3}{c}{\textbf{NS} ($\times 10^{-3}$)} \\
\cmidrule(lr){2-4} \cmidrule(lr){5-7}
 & \#Params  & InD & OoD  &  \#Params  & InD  & OoD  \\
\midrule
CAVIA      & 618245 & 69.9$\pm$21.2 & 68.0$\pm$4.2 & 310959 & 128.1$\pm$29.9 & 126.4$\pm$20.7 \\
CoDA      & 619169 & \textbf{1.23$\pm$0.14} & \textbf{0.75$\pm$0.65} & 309241 & 7.69$\pm$1.14 &  7.08$\pm$0.07 \\
\tone   & 610942 & 7.64$\pm$0.70 & 5.57$\pm$0.21 & 310955 & 2.98$\pm$0.09 & 2.83$\pm$0.06 \\
\ttwo   & 610942 & \cellcolor{grey} 6.15$\pm$0.24 &  \cellcolor{grey} 3.40$\pm$0.51 & 310955 & \cellcolor{grey} \textbf{2.92$\pm$0.08}  & \cellcolor{grey} \textbf{2.79$\pm$0.09} \\
\bottomrule
\end{tabularx}

\end{sc}
\end{center}
\vspace*{-0.3cm}
\end{table*}

\paragraph{Grid-wide adaptation on LV.}

We report in \cref{fig:huge_adapt} how well NCF performs on 625 adaptation environments obtained by varying its parameters $\beta$ and $\delta$ on a 25$\times$25 uniform grid. It shows a consistently low MAPE below 2\%, except in the bottom right corner where the MAPE rises to roughly 15\%, still a remarkably low value for this problem. We highlight the 9 meta-training environments in yellow, and the 4 environments used for OoD adaptation for \cref{tab:sota_results} in indigo. Remarkably, our training environment's convex-hull where adaptation is particularly low is much larger compared to CoDA's \cite[Figure 3]{kirchmeyer2022generalizing}.

\paragraph{Nonlinear adaptation on SM.}

We investigate how the models perform across the SM attractors depicted in \cref{fig:attractors}: training environments $e_1$ and $e_2$ fall into a limit cycle (L1), $e_3$ and $e_4$ collapse to a stable equilibrium (E), while $e_5$ and $e_6$ fall into another limit cycle (L2).  \cref{fig:erropergroup} presents the adaptation MSE for each environment in the OoD testing dataset. While CoDA equally fails to capture all 3 attractors, we observe that CAVIA and \ttwo tend to favor E. But unlike CAVIA, \ttwo succeeds in capturing limit cycles as well, as evidenced by the low MSE on all environments.

\begin{figure}[h]
\vspace*{-0.5cm}
\centering
\subfigure[Trajectories]{\includegraphics[width=0.18\textwidth]{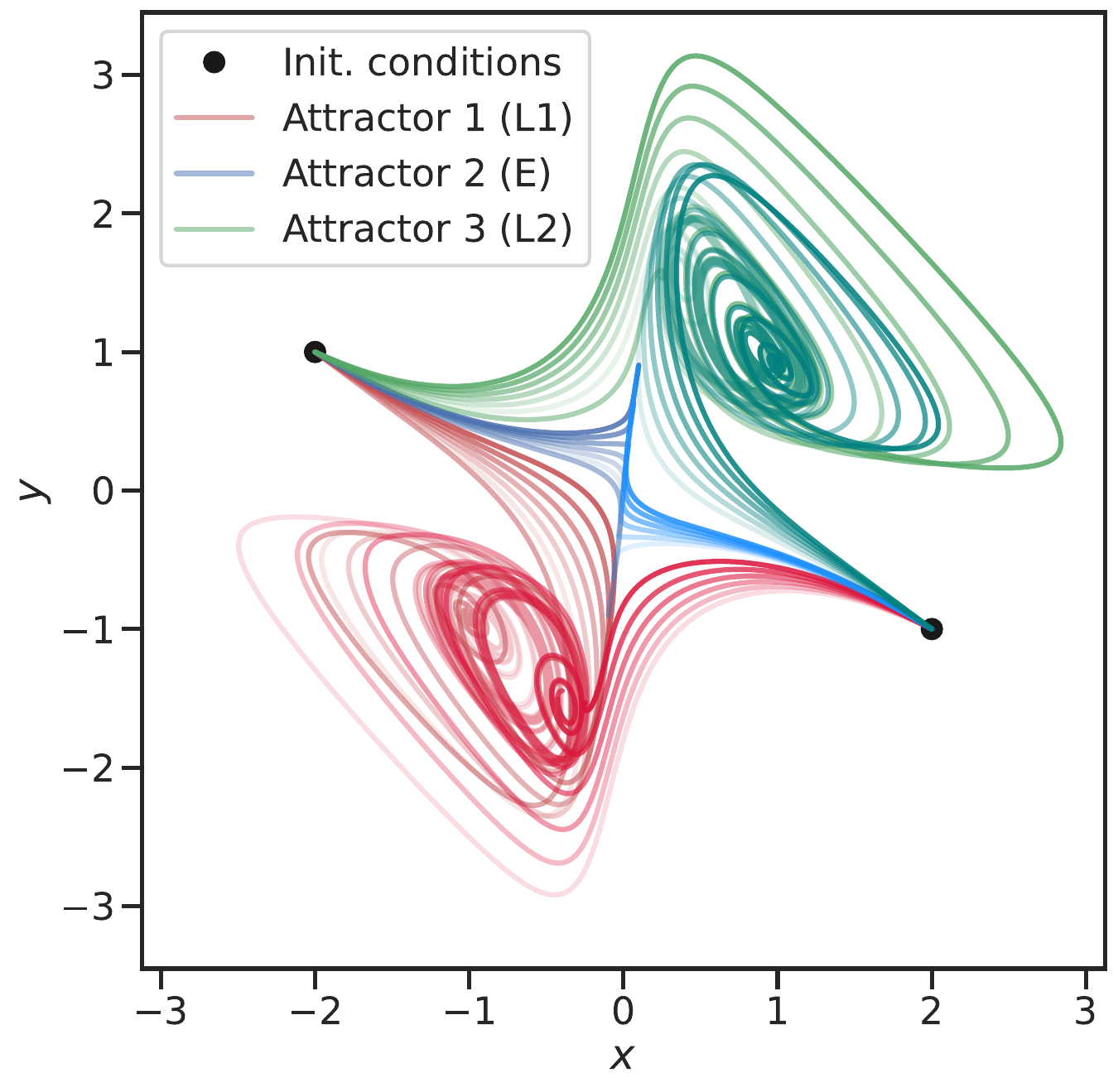}\label{fig:attractors}}
\subfigure[Training losses]{\includegraphics[width=0.35\textwidth]{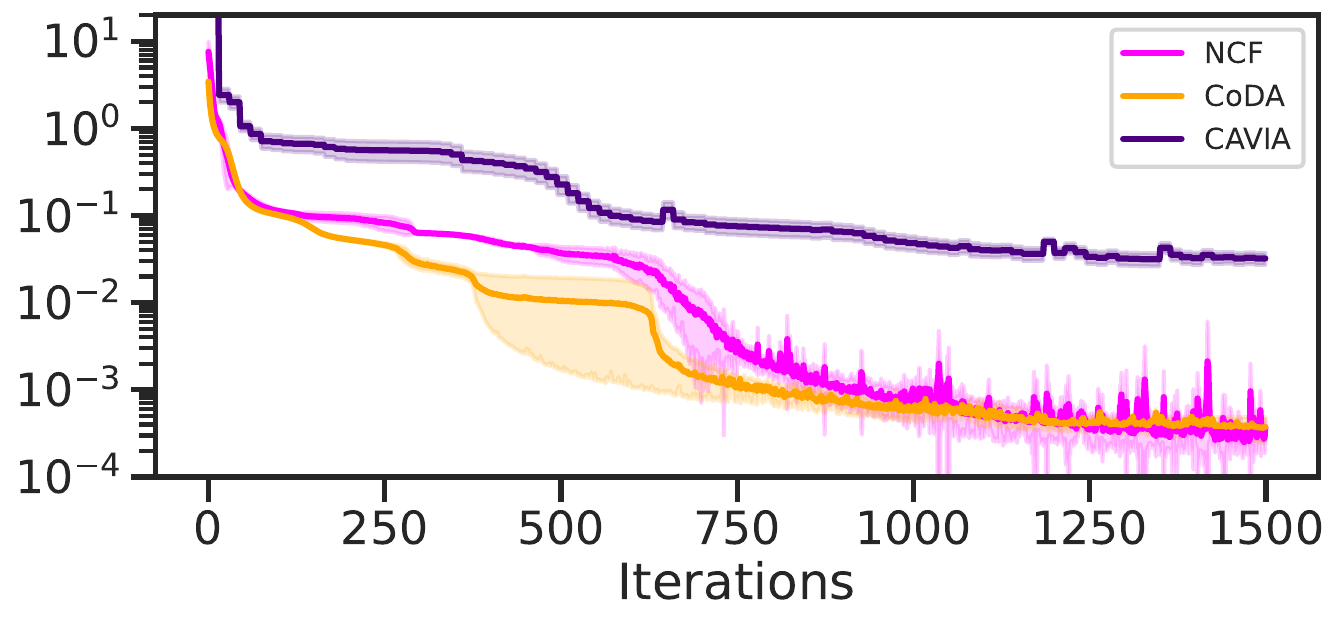}\label{fig:selkov_train_losses}}
\subfigure[OoD MSEs per environment]{\includegraphics[width=0.35\textwidth]{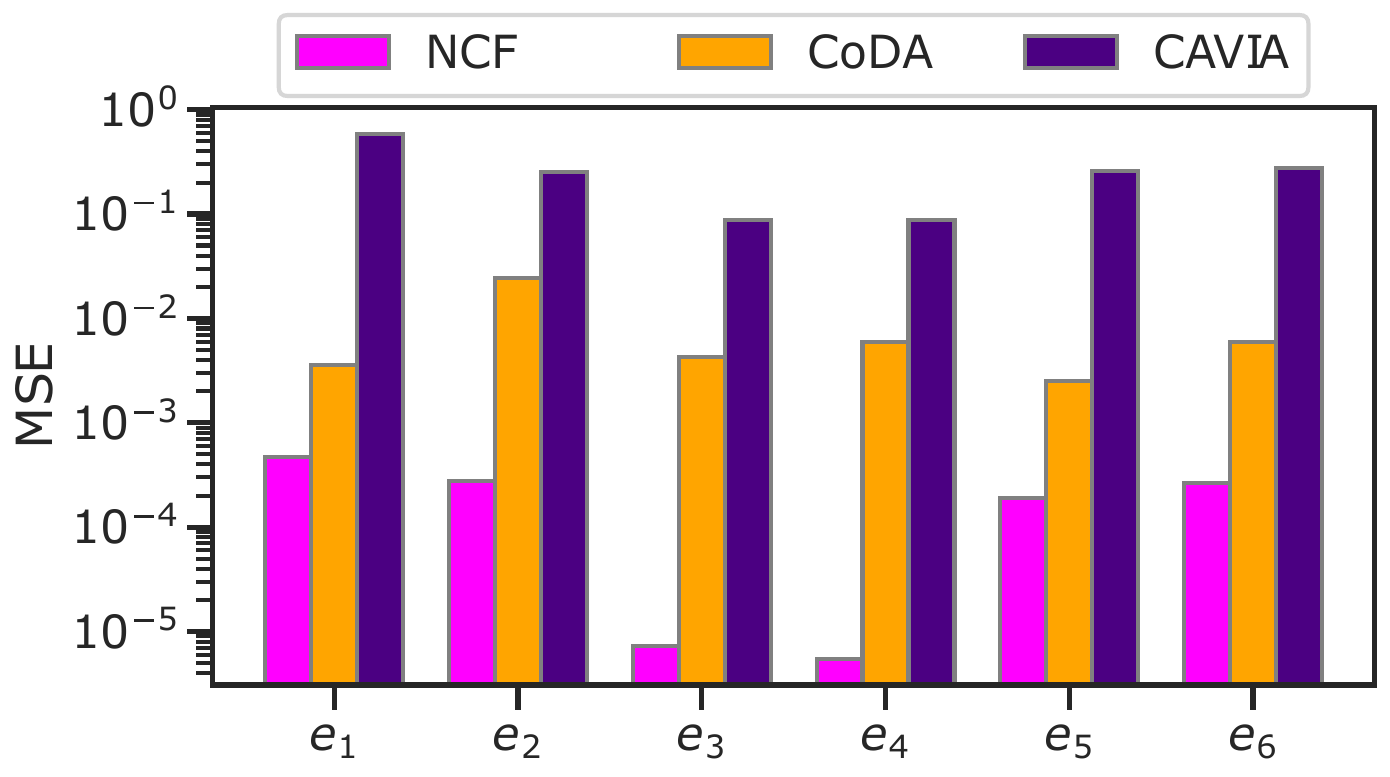}\label{fig:erropergroup}}
\caption{(a) Sample trajectories from the SM problem illustrating the Hopf bifurcation; (b) Losses during meta-training; (c) Subsequent adaptation MSE per method per environment.}
\label{fig:selkov}
\vspace*{-0.2cm}
\end{figure}

\section{Discussion}
\label{discussion}

\subsection{Benefits of NCFs}
\label{subsec:benefits}

Neural Context Flows provide a powerful and flexible framework for learning differential equations. They can handle irregularly-sampled sequences like time series, and can easily be extended to general regression tasks \cite{finn2017model}. Some of their other desirable properties are highlighted below.


\paragraph{Massively parallelizable.} \cref{eq:bigloss} indicates that NCFs are massively parallelizable along 3 directions. Indeed, evaluations of $\mathcal{L}$ can be vectorized across $m \times p$ environments, and across all $S$ trajectories. This leads to better use of computational resources for meta-training. We provide details on such vectorized NCF implementation in \cref{app:code}, along with a thorough discussion on its scalability in \cref{app:scalability}.

\begin{wrapfigure}[10]{R}{0.45\textwidth}
\vspace*{-0.65cm}
\begin{center}
\centerline{\includegraphics[width=\linewidth,  trim=0 0.15cm 0 0, clip]{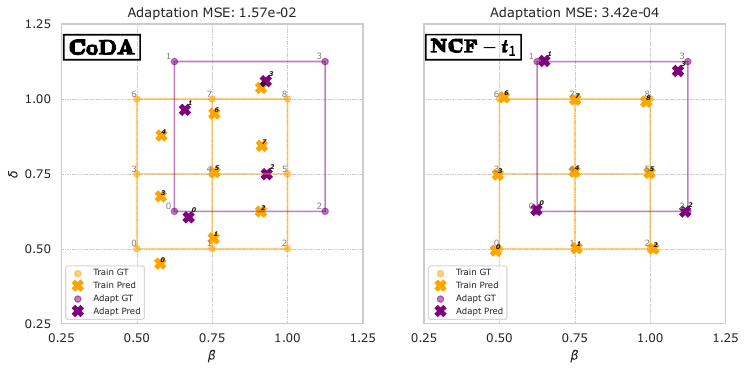}}
\caption{Interpretability with NCF. }
\label{fig:interpret_ncf_coda}
\end{center}
\end{wrapfigure}

\paragraph{Interpretable.} Understanding how a model adapts to new physical settings is invaluable in many scientific scenarios. Moving information from one environment to another via context-affine transformations provides a powerful framework for explaining Multi-Task Learning \cite{blanke2024interpretable}. With contextual self-modulation via Taylor expansion, we generalize this framework while maintaining computational efficiency. In \cref{fig:interpret_ncf_coda} for instance, we showcase system identification with \tone, where the underlying physical parameters $\beta$ and $\delta$ of the LV problem are recovered up to a linear transform. The corresponding experiment is detailed in \cref{subsec:interpret} \rebut{which further illustrates NCF's robustness to noise during system identification.}

\begin{proposition}[Identifiability of affine systems]
\label{prop:linearsystems}
Assume $d_{\xi} \geq d_c$, that $P$ is full-rank, and that $f_{\textup{true}}$ is differentiable in its second argument. In the limit of zero training loss in \cref{eq:bigloss}, $f_{\theta}$ trained with first-order Taylor expansion and the Random-All pool-filling strategy\footnote{A discussion on which neighboring contexts should go into the context pool $\mathrm{P}$ is provided in \cref{subsec:whatsinapool}.} is affine on an open region of $\mathbb{R}^{d_{\xi}}$. Furthermore, there exists $Q \in \mathbb{R}^{d_c \times d_{\xi}}$ and $q \in \mathbb{R}^{d_c}$ such that for any meta-trained $ \xi \in \{\xi^e\}_{e=1}^{m}$ and its corresponding underlying parameter $c \in \{c^e\}_{e=1}^{m}$, we have $c = Q\xi + q$.
\end{proposition}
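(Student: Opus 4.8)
The plan is to convert the zero-loss hypothesis into a pointwise identity for $f_\theta$ and then exploit the base-point independence forced by the Random-All strategy. First I would observe that zero loss in \cref{eq:bigloss} forces every candidate trajectory $\hat x_i^{e,j}$ to coincide with the ground truth $x_i^e$ at all sampled times; since both start from $x_i^e(0)$, and the former solves \cref{eq:method} with the first-order field $T^1_{f_\theta}(\cdot,\xi^e,\xi^j)$ while the latter solves $\dot x=f_{\textup{true}}(\cdot,c^e)$, matching the trajectories (hence their time derivatives) yields, for every realized pair $(e,j)$ and every state $x$ in the region $\Omega$ visited by the data,
\begin{align*}
 f_\theta(x,\xi^j)+\nabla_\xi f_\theta(x,\xi^j)\,(\xi^e-\xi^j)=f_{\textup{true}}(x,c^e).
\end{align*}
The Random-All strategy guarantees that, at the optimum, this holds for all ordered pairs $(e,j)$.

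The second step is to read off affinity in $\xi$. The right-hand side does not depend on $j$, so the first-order Taylor approximant of $\xi\mapsto f_\theta(x,\xi)$ taken at any base context $\xi^j$ and evaluated at $\xi^e$ is independent of the base point. Taking $j=e$ (permissible since $\mathrm{P}$ may contain $e$) gives $f_\theta(x,\xi^e)=f_{\textup{true}}(x,c^e)$, whence
\begin{align*}
 f_\theta(x,\xi^e)=f_\theta(x,\xi^j)+\nabla_\xi f_\theta(x,\xi^j)\,(\xi^e-\xi^j)\qquad\forall\,e,j.
\end{align*}
Each context thus supplies an affine map whose graph passes through $(\xi^e,f_\theta(x,\xi^e))$ for all $e$; because the meta-trained contexts affinely span (this is where the full-rank hypothesis together with $d_\xi\ge d_c$ enters), these affine maps must coincide with a single $L(\cdot)=A(x)\,\cdot\,+b(x)$, forcing $\nabla_\xi f_\theta(x,\xi^j)=A(x)$ at every context.

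Upgrading this to affinity on a \emph{genuine open region} is the delicate step, and I expect it to be the main obstacle. My plan is to promote the base point to a continuous variable: set $R(\zeta):=f_\theta(x,\zeta)+\nabla_\xi f_\theta(x,\zeta)(\xi^e-\zeta)$ and note the identity forces $R$ to be locally constant, so that $\nabla_\zeta R(\zeta)=\nabla^2_\xi f_\theta(x,\zeta)\,[\xi^e-\zeta]=0$; letting $\xi^e-\zeta$ range over enough directions (again via the spanning contexts) annihilates the $\xi$-Hessian on an open neighbourhood of the contexts, which is exactly affinity there. The care needed is in justifying the passage from finitely many contexts to a continuum — either by working in the idealized zero-loss limit, where arbitrarily many admissible environments can be invoked, or by restricting the claim to the interior of the convex hull of the contexts and leaning on smoothness of the network.

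Finally, for identifiability I would combine $f_\theta(x,\xi)=A(x)\xi+b(x)$ on this region with the affine-system structure $f_{\textup{true}}(x,c)=P(x)c+r(x)$, so that the contextwise identity reads $P(x)c^e+r(x)=A(x)\xi^e+b(x)$. Subtracting two environments removes the offsets and gives $P(x)(c^e-c^{e'})=A(x)(\xi^e-\xi^{e'})$; since $P$ is full-rank (full column rank, consistent with $d_\xi\ge d_c$), left-multiplying by a pseudo-inverse $P(x_0)^+$ at a fixed state $x_0$ yields $c^e-c^{e'}=Q(\xi^e-\xi^{e'})$ with $Q:=P(x_0)^+A(x_0)\in\R^{d_c\times d_\xi}$. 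Fixing a reference environment to absorb the constant then produces $q\in\R^{d_c}$ with $c=Q\xi+q$ for every meta-trained pair $(\xi,c)$, the full-rank condition also ensuring $Q$ is independent of the chosen $x_0$.
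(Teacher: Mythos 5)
Your first and last steps do align with the paper's proof: zero loss over the Random-All pairs $(e,j)$ in \cref{eq:bigloss} yields exactly the tangency identities you write, taking $j=e$ gives $f_\theta(x,\xi^e)=f_{\textup{true}}(x,c^e)$, and once affinity $f_\theta(\xi)=A\xi+\tilde q$ is available, full-rankness of $P$ lets one invert $Pc+p=A\xi+\tilde q$ through the pseudo-inverse $(P^TP)^{-1}P^T$, which is precisely the paper's closing step (your subtraction trick is an equivalent variant). The trouble is the middle, where there are two genuine gaps. First, your spanning argument misreads the hypotheses: ``$P$ full-rank'' concerns the matrix of the \emph{true} affine system, not the learned contexts, and its only role is that final inversion. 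The contexts $\{\xi^e\}_{e=1}^m$ live in $\mathbb{R}^{d_\xi}$ with typically $m\ll d_\xi$ (e.g.\ $m=9$ and $d_\xi=1024$ for LV), so they cannot affinely span $\mathbb{R}^{d_\xi}$, and affine maps that agree at the $m$ contexts need not coincide; the conclusion $\nabla_\xi f_\theta(x,\xi^j)=A(x)$ for every $j$ does not follow. Second --- the step you yourself flag as delicate --- the discrete identities can never produce affinity on an open region: adding to $f_\theta$ any smooth bump vanishing to second order at each $\xi^j$ preserves every constraint you have, so $f_\theta$ is unconstrained between the contexts. Concretely, with $d_\xi=1$ and contexts $\{0,1\}$, the function $f(\xi)=\xi+\sin^2(\pi\xi)$ satisfies all of your identities yet is affine on no open set. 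For the same reason your Hessian-annihilation step fails: $R(\zeta)$ is pinned down only at the finitely many $\zeta\in\{\xi^j\}_{j=1}^m$, so $\nabla_\zeta R$ is not accessible, and neither of your proposed repairs closes this (invoking a continuum of environments changes the proposition's setting, and smoothness on the convex hull does not exclude the bump counterexample).

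The paper does not derive the open-region statement from the discrete constraints at all; it builds it into the reading of the idealized limit. Its proof opens by declaring that, in the limit of zero training loss, $f_\theta$ \emph{coincides with its first-order Taylor expansion} around each $\xi^e$ on an open neighborhood $U(\xi^e)$ containing all the other contexts. Granting that, equality of the linear parts $A=\nabla f_\theta(\xi^e)$ and $B=\nabla f_\theta(\xi^j)$ follows by computing the directional derivative of $f_\theta$ at $\xi^e$ (a point interior to both neighborhoods) with both affine representations and invoking uniqueness of directional derivatives; affinity then holds on $U=\bigcap_{e=1}^m U(\xi^e)$, and the pseudo-inverse step finishes. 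So your proposal is more honest about what zero loss literally implies, but precisely because of that honesty it cannot reach the stated conclusion; to salvage your route you would have to adopt the paper's neighborhood-exactness reading as an explicit hypothesis, after which your remaining steps (with the spanning argument deleted) essentially reduce to the paper's argument.
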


To theoretically demonstrate the capacity to identify physical systems, we consider a learnable vector field $f_{\text{true}} : \mathbb{R}^d \times \mathbb{R}^{d_{c}} \rightarrow \mathbb{R}^d $ is that is affine i.e. $\exists P \in \mathbb{R}^{d\times d_{c}} \text { and } p \in \mathbb{R}^{d} $ such that $f_{\text{true}}(\cdot, c) = Pc + p$. Under these assumptions, the predictor $f_{\theta} : \mathbb{R}^d \times \mathbb{R}^{d_{\xi}} \rightarrow \mathbb{R}^d$ satisfies \cref{prop:linearsystems} above, inspired by Proposition 1 of \cite{blanke2024interpretable}. A detailed proof is provided in \cref{subsec:interpret}.

\begin{wrapfigure}[11]{L}{0.23\textwidth}
\vspace*{-0.5cm}
\begin{center}
\centerline{\includegraphics[trim=0 300 0 0, clip, width=\linewidth]{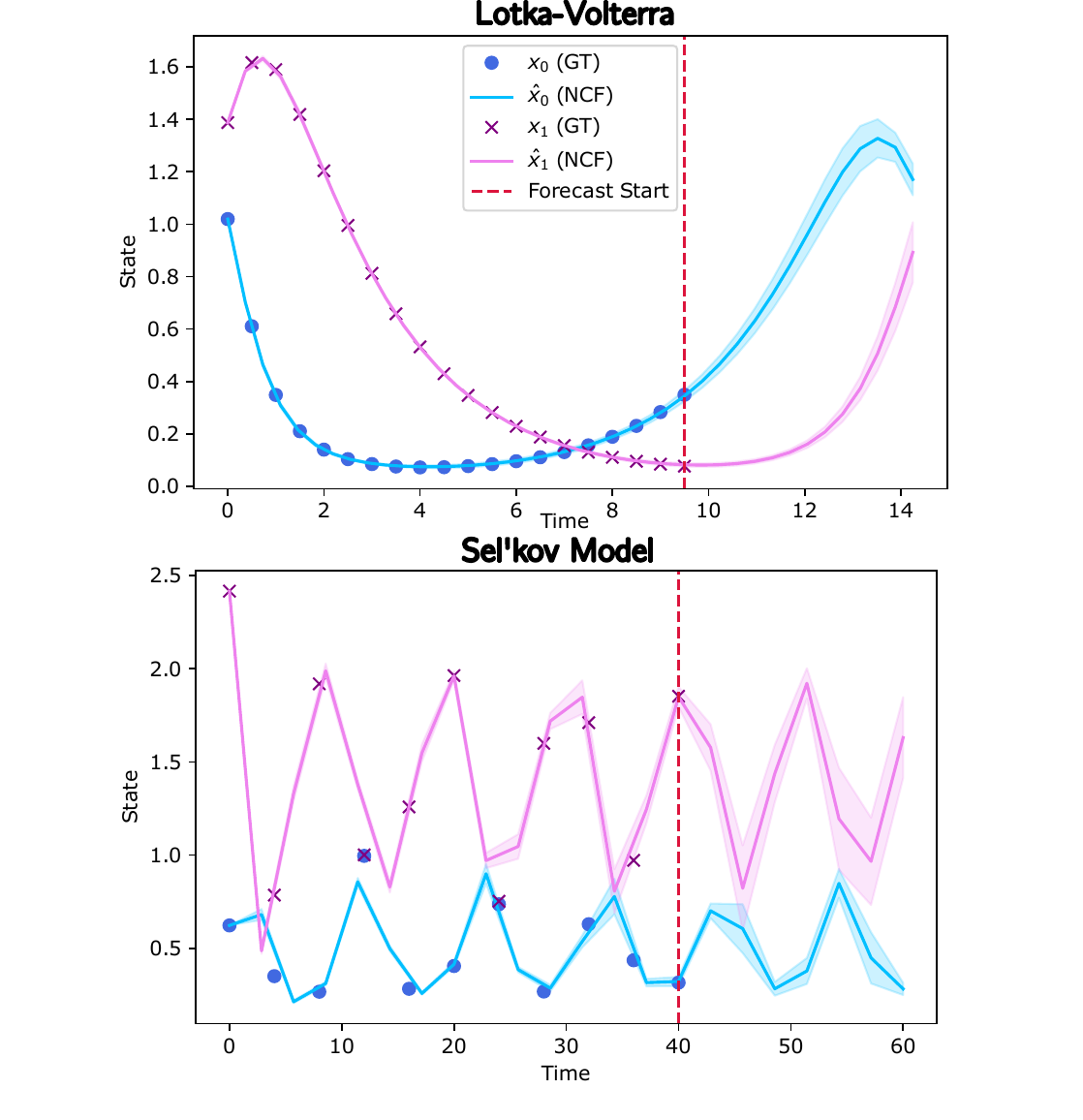}}
\caption{Uncertainty estimation with NCF.}
\label{fig:uq}
\end{center}
\end{wrapfigure}

\paragraph{Extendable.} The relative simplicity of our formulation makes NCFs easily adaptable to other works involving Neural ODE models. For instance, it is straightforward to augment the vector field as in APHYNITY \cite{yin2021augmenting}, augment the state space as in ANODE \cite{dupont2019augmented}, or control the vector field as in \cite{massaroli2020dissecting}.

\paragraph{Provides uncertainty.} Uncertainty requirements are increasingly critical in Meta-Learned dynamical systems \cite{liustochastic}. NCFs provide, by their very definition, a measure of the uncertainty in the model. During testing, all candidate trajectories stemming from available contexts (meta-trained, adapted, or both) can be used to ascertain when or where the model is most uncertain in its predictions. In \cref{fig:uq} for instance, we visualize the means and standard deviations (scaled 10 folds for visual exposition) across 9 and 21  candidate forecasts for the Lotka-Volterra and Sel'kov Model problems, respectively. \rebut{Additional results with quantitative metrics are provided in \cref{subsec:uncertainty}}.

\subsection{Limitations}

\paragraph{Further theory.} While massively parallelizable, extendable, interpretable, and applicable to countless areas well outside the learning of dynamics models, Neural Context Flows still need additional theoretical analysis to explain their effectiveness and uncover failure modes. While \rebut{the provided  \cref{prop:t2,prop:linearsystems}} partly compensates for this shortage, we believe that this creates an interesting avenue, along with other limitations outlined below, for future work.

\paragraph{Regularity assumptions.} Another limitation faced by Neural Context Flows lies within the assumptions they make. While differentiability of the vector field $f$ wrt $\xi$ is encountered with the majority of dynamical systems in science and engineering, some are bound to be fundamentally discontinuous. NCFs break down in such scenarios and would benefit from the vast body of research in numerical continuation \cite{allgower2012numerical}. 

\paragraph{Additional hyperparameters.} Finally, NCF introduces several new hyperparameters such as the context size $d_{\xi}$, the context pool size $p$, \rebut{the pool-filling strategy}, the proximity coefficient $\beta$ in \cref{alg:ncf_proximal}, and many more. Although we offer insights into their roles in the \cref{app:ablation}, we acknowledge that tuning them complicates the training process.

\subsection{Conclusion and Future Work}

This paper introduces Neural Context Flows (NCFs), an innovative framework that enhances model generalization across diverse environments by exploiting the differentiability of the predictor in its latent context vector. The novel application of Taylor expansion in NCFs facilitates vector field modulation for improved adaptation, enhances interpretability, and provides valuable uncertainty estimates for deeper model understanding. Our comprehensive experiments demonstrate the robustness and scalability of the NCF approach, particularly with respect to its most demanding hyperparameters. Future research will explore the limits of NCFs and their adaptation to even more complex scenarios. This work represents a promising step toward developing foundational models that generalize across scientific domains, offering a fresh and versatile approach to conditioning machine learning models.


\subsubsection*{Ethics Statement}

While the benefits of NCFs are evidenced in \cref{subsec:benefits}, their negative impacts should not be neglected. For instance, malicious deployment of such adaptable models in scenarios they were not designed for could lead to serious adverse outcomes. With that in mind, our code, data, and models are openly available at \url{https://github.com/ddrous/ncflow}.

\subsubsection*{Acknowledgments}
This work was supported by UK Research and Innovation grant EP/S022937/1: Interactive Artificial Intelligence, EPSRC program grant EP/R006768/1: Digital twins for improved dynamic design, EPSRC grant EP/X039137/1: The GW4 Isambard Tier-2 service for advanced computer architectures, and  EPSRC grant EP/T022205/1: JADE: Joint Academic Data Science Endeavour-2.
We extend our sincere gratitude to the anonymous reviewers whose insightful feedback and rigorous commentary substantially enhanced the quality and clarity of this manuscript. We also acknowledge Amarpal Sahota for valuable discussions on uncertainty quantification methodologies.

\bibliography{references}
\bibliographystyle{references}

\newpage

\appendix


\DoToC

\newpage

\section{Algorithms \& Proofs}
\label{app:proofs}

\subsection{Second-order Taylor expansion with JVPs}

For ease of demonstration, we propose an equivalent formulation of \cref{prop:t2} that disregards the first input of $f$ not necessary for its proof. Although we will consistently write terms like $f(x)$, we emphasize that $x$ is meant to stand for the context, not the state variable.

\begingroup
\def\theproposition{\ref{prop:t2}}
\begin{proposition}[Second-order Taylor expansion with JVPs] 
    Assume $f: \mathbb{R}^{d_{\xi}} \rightarrow \mathbb{R}^{d}$ is $\mathcal{C}^2$. Let $x \in \mathbb{R}^{d_\xi}$, and define $g: y \mapsto \nabla f(y)(x-y) $. The second-order Taylor expansion of $f$ around any $ x_0 \in \mathbb{R}^{d_{\xi}}$ is then expressed as
    \begin{align*}
        f(x) &= f(x_0) + \frac{3}{2} g(x_0) 
        + \frac{1}{2} \nabla g(x_0) (x - x_0) 
        + o(\Vert x - x_0 \Vert^2).
    \end{align*}
\end{proposition}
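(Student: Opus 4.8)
The plan is to reduce the identity to the ordinary second-order (Peano) Taylor expansion of a $\mathcal{C}^2$ vector-valued function and then verify that the coefficients $\tfrac{3}{2}$ and $\tfrac{1}{2}$ are exactly what is needed to reassemble it. Concretely, I would start from the standard expansion
\begin{align*}
    f(x) = f(x_0) + \nabla f(x_0)(x - x_0) + \tfrac{1}{2}\,Q(x - x_0) + o(\Vert x - x_0 \Vert^2),
\end{align*}
where $Q(x-x_0)$ denotes the vector whose $a$-th component is the Hessian quadratic form $(x-x_0)^\top \nabla^2 f_a(x_0)(x-x_0)$. The goal is then to show that the right-hand side of the proposition equals this expression up to the $o(\Vert x - x_0\Vert^2)$ remainder.

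First I would identify the first-order piece: since $g(y) = \nabla f(y)(x - y)$, evaluating at $y = x_0$ gives $g(x_0) = \nabla f(x_0)(x - x_0)$, which is precisely the gradient term above. The substantive computation is the directional derivative $\nabla g(x_0)(x - x_0)$. Here $g$ depends on $y$ through \emph{both} factors $\nabla f(y)$ and $(x - y)$, so I would differentiate componentwise using the product rule. Writing $g_a(y) = \sum_b \partial_b f_a(y)\,(x_b - y_b)$ and differentiating with respect to $y_c$ produces two contributions: a second-derivative term $\sum_b \partial_c\partial_b f_a(y)\,(x_b - y_b)$ and the term $-\partial_c f_a(y)$ coming from differentiating $(x_b - y_b)$. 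Contracting against $(x - x_0)$ at $y = x_0$ then yields
\begin{align*}
    \nabla g(x_0)(x - x_0) = Q(x - x_0) - \nabla f(x_0)(x - x_0) = Q(x - x_0) - g(x_0).
\end{align*}

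With this identity in hand the conclusion is a one-line recombination: substituting into the proposition's right-hand side gives
\begin{align*}
    f(x_0) + \tfrac{3}{2} g(x_0) + \tfrac{1}{2}\nabla g(x_0)(x - x_0)
    = f(x_0) + \tfrac{3}{2} g(x_0) + \tfrac{1}{2}Q(x - x_0) - \tfrac{1}{2} g(x_0)
    = f(x_0) + g(x_0) + \tfrac{1}{2}Q(x - x_0),
\end{align*}
which is exactly the standard second-order expansion, since $g(x_0) = \nabla f(x_0)(x - x_0)$. The spurious $-\tfrac{1}{2}g(x_0)$ generated by differentiating the factor $(x - y)$ is precisely cancelled by choosing the coefficient $\tfrac{3}{2}$ rather than $1$ in front of $g(x_0)$, which is the whole point of the reformulation.

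I expect the main obstacle to be the careful bookkeeping in the product-rule differentiation of the matrix--vector map $g$, and in particular correctly tracking the sign and the emergence of the extra $-g(x_0)$ term; this is where the nonstandard coefficients originate, so it must be handled explicitly rather than absorbed into the remainder. A secondary, mostly notational care point is treating $\nabla^2 f_a$ as a symmetric bilinear form acting on $(x-x_0)$ componentwise (legitimate because $f$ is $\mathcal{C}^2$), so that $Q(x-x_0)$ genuinely matches the quadratic term of the Peano expansion; symmetry of the Hessian guarantees there is no ordering ambiguity in the contraction.
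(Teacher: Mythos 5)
Your proof is correct and follows essentially the same route as the paper's: both reduce to the standard Peano expansion and hinge on the identity $\nabla g(x_0)(x-x_0) = \left[\nabla^2 f(x_0)(x-x_0)\right](x-x_0) - g(x_0)$, the only difference being that you obtain $\nabla g$ by componentwise product-rule differentiation while the paper derives it from a first-order perturbation of $\nabla f$ together with Hessian symmetry. The recombination of coefficients ($\tfrac{3}{2}$ absorbing the spurious $-\tfrac{1}{2}g(x_0)$) is exactly the paper's argument read in the reverse (verification) direction.
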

\addtocounter{proposition}{-1}
\endgroup

\begin{proof}

Let $ x, x_0 \in \mathbb{R}^{d_{\xi}}$. The second-order Taylor expansion of $f$ includes its Hessian that we view as a 3-dimensional tensor, and we contract along its last axis such that
\begin{align} \label{eq:1}
    f(x) = f(x_0) + \nabla f (x_0) (x-x_0) + \frac{1}{2} \left[ \nabla^2 f(x_0) (x-x_0) \right] (x-x_0) + o(\Vert x - x_0 \Vert^2). 
\end{align}

Next we define $g: y \mapsto \nabla f(y)(x-y) $, and we consider a small perturbation $h \in \mathbb{R}^{d_{\xi}}$ to write
\begin{align*}
    g(y+h) &= \nabla f(y+h) (x-y-h) \\
    &=  \left[ \nabla f (y) + \nabla^2 f(y)(h) + o(\Vert h \Vert) \right] (x-y-h)     \qquad \text{\ding{43} } \text{Taylor expansion of $\nabla f$}    \\
    &= \nabla f(y)(x-y) - \nabla f(y) h + \left[ \nabla^2 f(y) h \right] (x-y) + o(\Vert h \Vert) + \cancel {o(\Vert h \Vert^2)}
\end{align*}

The Hessian is by definition symmetric along its last two axes, which allows us to rewrite the third term as $\left[ \nabla^2 f(y) (x-y) \right] h $. We then have    
\begin{align*}
    g(y+h) = g(y) + \left[ \nabla^2 f(y) (x-y) - \nabla f (y) \right] h + o(\Vert h \Vert),
\end{align*}
which indicates that $\nabla g (y) := \nabla^2 f(y) (x-y) - \nabla f (y) $, from which we derive
$$ \forall y \in \mathbb{R}^{d_{\xi}}, \quad \nabla^2 f(y) (x-y) = \nabla g (y) + \nabla f (y).$$ In particular,
\begin{align} \label{eq:2}
    \nabla^2 f(x_0) (x-x_0) = \nabla g (x_0) + \nabla f (x_0).
\end{align}

Plugging this into \cref{eq:1}, we have 
\begin{align*}
    f(x) &= f(x_0) + \nabla f (x_0) (x-x_0) + \frac{1}{2} \left[ \nabla g(x_0) + \nabla f(x_0) \right] (x-x_0) + o(\Vert x - x_0 \Vert^2) \\
    &= f(x_0) + \frac{3}{2} \underbrace{\nabla f (x_0) (x-x_0)}_{g(x_0)} + \frac{1}{2} \nabla g(x_0) (x-x_0) + o(\Vert x - x_0 \Vert^2).
\end{align*}

This concludes the proof.
\end{proof}
While this expression of the second-order Taylor expansion of a vector-valued function makes its implementation memory-efficient via Automatic Differentiation (AD), it still relies on nested derivatives, which scale exponentially with the order of the Taylor expansion due to avoidable recomputations. For even higher-order Taylor expansions, this scaling is frightfully inefficient. Taylor-Mode AD is a promising avenue to address this issue \cite{bettencourt2019taylor}.




\subsection{Identifiability of physical parameters}
\label{subsec:interpret}

Identifying the underlying physical parameters of a system is of enduring interest to scientific machine learning practitioners. Our work critically builds on CAMEL \cite{blanke2024interpretable} which extensively studies a model similar to \tone as a linearly parametrized system. Its Proposition 1 states, informally, that in the limit of vanishing training loss $\mathcal{L}(\cdot, \cdot, \cdot) = 0$, the relationship between the learned context vectors and the system parameters is linear and can be estimated using ordinary least squares. This forces the model to learn a meaningful representation of the system instead of overfitting the examples from the training tasks. 

{\color{rebutcolor}

We now reformulate \citet{blanke2024interpretable}'s identifiability result for linear systems trained with first-order Taylor expansion and the Random-All pool-filling strategy (see \cref{subsec:whatsinapool}), then we provide an alternative proof suited to our setting. Building on notations from \cref{eq:ode,eq:neuralode}, we restate that our goal is to approximate the true vector field based on
\begin{align*}
    \frac{\md x}{\md t}(t) = f_{\text{true}}(x(t), c), \qquad \text{and } \qquad \frac{\md x}{\md t}(t) = f_{\theta}(x(t), \xi), \qquad \forall t \in \left[ 0,T \right].
\end{align*}
We drop the dependence on $x \in \mathbb{R}^{d}$ to ease notations in the sequel (the $\nabla$ will henceforth indicate gradients wrt $\xi$). As such, the predictor $f_{\theta} :\mathbb{R}^{d_{\xi}} \rightarrow \mathbb{R}^d$ is parametrized as a neural network and learned, while $f_{\text{true}} : \mathbb{R}^{d_{c}} \rightarrow \mathbb{R}^d $ is known and affine, i.e. $\exists P \in \mathbb{R}^{d\times d_{c}} \text { and } p \in \mathbb{R}^{d} $ such that $f_{\text{true}}(c) = Pc + p$.

\begingroup
\def\theproposition{\ref{prop:linearsystems}}
\begin{proposition}[Identifiability of affine systems]
Assume $d_{\xi} \geq d_c$, that $P$ is full-rank, and that $f_{\textup{true}}$ is differentiable. In the limit of zero training loss in \cref{eq:bigloss}, $f_{\theta}$ is affine on an open region of $\mathbb{R}^{d_{\xi}}$. Furthermore, there exists $Q \in \mathbb{R}^{d_c \times d_{\xi}}$ and $q \in \mathbb{R}^{d_c}$ such that for any meta-trained $ \xi \in \{\xi^e\}_{e=1}^{m}$ and its corresponding underlying parameter $c \in \{c^e\}_{e=1}^{m}$, we have $c = Q\xi + q$. 
\end{proposition}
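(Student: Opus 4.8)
The plan is to use the zero-loss hypothesis to pin down both the values and the first-order behaviour of $f_\theta$ at the learned contexts, establish that $f_\theta$ is affine near them, and then invert the full-column-rank map $P$. I would proceed in four steps.

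\emph{From trajectories to vector fields.} First I would translate ``zero training loss in \cref{eq:bigloss}'' into a pointwise identity on vector fields. Every candidate ODE in \cref{eq:method} shares its initial condition with the ground-truth trajectory, and the solver is consistent, so vanishing of the supervised term forces $\hat x^{e,j}_i \equiv x^e_i$. By uniqueness of ODE solutions (guaranteed by the smoothness of $f_\theta$) the integrands must agree along the trajectories; since the initial conditions are drawn from a distribution with open support and the flow is continuous, the union of trajectory supports contains an open region $\Omega \subseteq \mathbb{R}^d$ of state space. Hence, for every ordered pair $(e,j)$ supplied by the Random-All strategy and every $x \in \Omega$,
$$f_\theta(x,\xi^j) + \nabla f_\theta(x,\xi^j)(\xi^e - \xi^j) = f_{\textup{true}}(x,c^e),$$
where $\nabla$ is the gradient in the context argument. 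Fixing any $x \in \Omega$ reduces the right-hand side to the affine form $f_{\textup{true}}(c) = Pc + p$ of the statement, and I suppress this fixed $x$ hereafter.

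\emph{Diagonal and tangent-exactness.} Taking $j = e$ annihilates the gradient term and yields the value relation $f_\theta(\xi^e) = Pc^e + p$ for every meta-trained environment. Substituting this back into the general $(e,j)$ identity gives the tangent-exactness condition $f_\theta(\xi^e) = f_\theta(\xi^j) + \nabla f_\theta(\xi^j)(\xi^e - \xi^j)$ and, after applying the value relation on both sides, $\nabla f_\theta(\xi^j)(\xi^e - \xi^j) = P(c^e - c^j)$ for all $e,j$. Because Random-All exhausts every ordered pair, I may compare the expansions at two centres $\xi^j, \xi^{j'}$ and subtract over two environments to obtain $[\nabla f_\theta(\xi^j) - \nabla f_\theta(\xi^{j'})](\xi^e - \xi^{e'}) = 0$, so the context-gradient is constant on the span of the context differences.

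\emph{Affineness on an open region (the crux).} The preceding step shows that at each centre $f_\theta$ coincides with its own first-order Taylor polynomial across all other contexts, and that its context-gradient is locally constant. Invoking the $\mathcal{C}^1$ (indeed $\mathcal{C}^2$) regularity of $f_\theta$ together with the fact that, under Random-All, the first-order model in \cref{eq:talor_1} is exact for every pair with no curvature left to absorb a residual, I would conclude that $f_\theta$ agrees with a single affine map $\xi \mapsto A\xi + a$ on an open region $U \subseteq \mathbb{R}^{d_\xi}$ containing the learned contexts. I expect this upgrade — from pointwise Taylor-exactness at the \emph{discrete} contexts to genuine affineness on an open \emph{neighbourhood} — to be the main obstacle: it is precisely here that the smoothness assumption and the exhaustive coverage of the Random-All pool must be combined, and where any looseness in the idealisation (zero loss exactly attained, contexts in general position) would enter.

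\emph{Inversion of $P$.} Finally, since $d_\xi \geq d_c$ and $P \in \mathbb{R}^{d \times d_c}$ is full-rank (hence injective), the pseudo-inverse $P^{+} = (P^\top P)^{-1} P^\top$ satisfies $P^{+} P = I_{d_c}$. Applying it to the value relation gives $c^e = P^{+}(f_\theta(\xi^e) - p)$, and substituting the affine form $f_\theta(\xi) = A\xi + a$ valid on $U$ yields $c^e = P^{+}A\,\xi^e + P^{+}(a - p)$. Setting $Q := P^{+}A \in \mathbb{R}^{d_c \times d_\xi}$ and $q := P^{+}(a - p) \in \mathbb{R}^{d_c}$ gives $c = Q\xi + q$ for every meta-trained pair $(\xi,c)$, which is the claim.
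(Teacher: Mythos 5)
Your overall skeleton — zero loss $\Rightarrow$ Taylor exactness $\Rightarrow$ affineness $\Rightarrow$ inversion of $P$ — is the paper's, and your steps 1, 2 and 4 are sound; indeed your step 1 is \emph{more} careful than the paper's (which drops $x$ and never discusses trajectory coverage), and your final formulas $Q = P^{+}A$, $q = P^{+}(a-p)$ are exactly the paper's closed form \cref{eq:closedform}. But the step you yourself flag as the crux is a genuine gap, and under your formulation it cannot be closed. What you extract from zero loss is a finite family of constraints holding only \emph{at} the $m$ learned contexts: $f_\theta(\xi^e) = Pc^e + p$ and $\nabla f_\theta(\xi^j)(\xi^e-\xi^j) = P(c^e - c^j)$ for all pairs $(e,j)$, plus the consequence that Jacobian differences annihilate the span of context differences. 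These conditions do not force affineness on any open set: take any affine interpolant and add $v_0\prod_{e=1}^m \Vert \xi - \xi^e\Vert_2^4$ for a fixed $v_0 \in \mathbb{R}^d$ — this is a polynomial whose value and gradient vanish at every $\xi^e$, so all your constraints still hold, yet the sum is non-affine on every open region. Moreover your gradient relations say nothing about directions outside the affine hull of the contexts, so even at the contexts the Jacobians need not agree as matrices. Hence no combination of $\mathcal{C}^2$ regularity and Random-All coverage can complete your step 3: the implication is false as you have set it up.

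The paper escapes this by building the open-set information into its premise rather than deriving it: it reads ``the limit of zero training loss'' as asserting that $f_\theta$ coincides with its first-order Taylor expansion around each $\xi^e$ on a whole neighborhood $U(\xi^e)$ containing all other contexts, i.e.\ $f_\theta(\xi) = f_\theta(\xi^e) + A(\xi - \xi^e)$ for all $\xi \in U(\xi^e)$ with $A = \nabla f_\theta(\xi^e)$ constant — an idealization natural for NCF-$t_1$, whose predictor \emph{is} that expansion. Given this, the missing ingredient is a short uniqueness argument: if $B$ is the Jacobian at another centre $\xi^j$, computing the directional derivative of $f_\theta$ at $\xi^e$ once from each representation gives $Av = Bv$ for every $v$, so $A = B$ by uniqueness of directional derivatives, and $f_\theta$ is a single affine map on $U = \bigcap_{e} U(\xi^e)$; constancy of $\tilde q := f_\theta(\xi^e) - A\xi^e$ across $e$ then feeds the pseudo-inverse step exactly as in your step 4. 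So the two arguments differ precisely in where open-set exactness enters: the paper postulates it on neighborhoods of the contexts, while you tried to derive it from pointwise constraints, which is impossible. To repair your proof you must either adopt the paper's neighborhood-wide premise, or strengthen the hypothesis so that zero loss holds for an open set of context values rather than the $m$ trained ones; neither follows from \cref{eq:bigloss} alone.
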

\addtocounter{proposition}{-1}
\endgroup

\begin{proof}
    Let $e \in \lbint 1,m \rbint$. In the limit of zero training loss, $f_{\theta}$ coincides with its first-order Taylor expansion in a neighborhood $U(\xi^e)$ which contains all other $\{\xi^j \}_{j=1}^m$. We can write
    \begin{align} \label{eq:eqA}
        f_{\theta}(\xi) = f_{\theta}(\xi^e) + A (\xi - \xi^e),  \qquad \forall \, \xi \in U(\xi^e)
    \end{align}
    where $A = \nabla f_{\theta}(\xi^e)$ is constant.

    Similarly, for $j \in \lbint 1,m \rbint$, there exists an open set $U(\xi^j)$ which contains all other $\{\xi^e \}_{e=1}^m$, such that
    \begin{align} \label{eq:eqB}
        f_{\theta}(\xi) = f_{\theta}(\xi^j) + B (\xi - \xi^j),       \qquad \forall \, \xi \in U(\xi^j)
    \end{align}
     where $B = \nabla f_{\theta}(\xi^j)$ is constant.

    To show that necessarily $A=B$, let's consider without loss of generality $\xi \in U(\xi^e) \cap U(\xi^j)$ (which is non-empty since both $\xi^e$ and $\xi^j$ are included in both sets) and proceed by contradiction, assuming $A\neq B$. Let $v \in \mathbb{R}^{d_{\xi}}$ sufficiently small, we can set $\xi = \xi^e + tv$ in \cref{eq:eqA}  and write the directional derivative 
    \begin{align}
        \lim_{t\to 0} \frac{f_{\theta}(\xi^e + tv) - f_{\theta}(\xi^e)}{t} &= \lim_{t\to 0} \frac{f_{\theta}(\xi^e) + A (\xi^e + tv - \xi^e) - f_{\theta}(\xi^e)}{t} \notag \\
        &= \lim_{t\to0} \frac{tAv}{t} \notag \\
        &= Av.
    \end{align}
    We also write, using \cref{eq:eqB}, the same directional derivative as
    \begin{align}
        \lim_{t\to 0} \frac{f_{\theta}(\xi^e + tv) - f_{\theta}(\xi^e)}{t} &= \lim_{t\to 0} \frac{f_{\theta}(\xi^j) + B (\xi^e + tv - \xi^j) - f_{\theta}(\xi^e)}{t} \notag \\
        &= \lim_{t\to 0} \cancel{\frac{f_{\theta}(\xi^j) - f_{\theta}(\xi^e) + B (\xi^e - \xi^j)}{t}} + \lim_{t\to0} \frac{tBv}{t} \notag\\
        &= Bv.
    \end{align}
    For $v \notin \ker{(B-A)}$, we have $Av \neq Bv$ which contradicts with the uniqueness of directional derivatives for differentiable functions \cite{spivak2018calculus}. This shows that $f_{\theta}$ is affine on the open set $U = \bigcap\limits_{e=1}^{m} U(\xi^e)$.

    Furthermore, using \cref{eq:eqA}, we have 
    \begin{align} \label{eq:rhs_proof}
        f_{\theta}(\xi) - A \xi =  f_{\theta}(\xi^e) - A\xi^e, \qquad \forall \, \xi \in U
    \end{align}
    which is valid for all $e \in \lbint 1, m \rbint$. This indicates that the right hand side of \cref{eq:rhs_proof} is constant, i.e. $\exists \, \tilde q \in \mathbb{R}^{d_{\xi}}  $ such that $\forall \, e \in \lbint 1, m \rbint$, $f_{\theta}(\xi^e) - A\xi^e = \tilde q$ . We then use the fact that in the limit of zero training loss, the predicted and true vector fields coincide for a context $\xi \in U$ and its corresponding underlying parameters $c$ : 
    \begin{align*}
        f_{\text{true}}(c) = f_{\theta}(\xi) \Rightarrow Pc + p &= A(\xi - \xi^e) + f_{\theta}(\xi^e) \qquad \text{for} \, e \in \lbint 1,m \rbint \\ &= A\xi + \tilde q.
    \end{align*}
    Since $d_{\xi} \geq d_c$ and $P$ is full-rank, its rows are linearly independent, guaranteeing the existence of a pseudo-inverse. We can thus write
    $c = Q\xi + q$ with 
    \begin{align} \label{eq:closedform}
        Q = (P^T P)^{-1} P^T A, \qquad \text{ and } \qquad q = (P^TP)^{-1} P^T [\tilde q - p].
    \end{align}    
\end{proof}

The closed form expression \cref{eq:closedform} can be challenging to derive, especially since $P$ and $p$ might not be fully known when collecting data. So similar to \cite{blanke2024interpretable}, one can perform post-training, ordinary least squares regression on observed $\{ c^e \}_{e=1}^{m'}$ (with $m' \leq m$) to estimate the optimal $Q^*$ and $q^*$ :
\begin{align}
    Q^*, q^* \in \underset{Q, q}{\text{argmin}} \frac{1}{2} \sum_{e=1}^{m'} \Vert Q\xi^e +q - c^e \Vert^2_2. 
\end{align}
}

\paragraph{Experimental validation.} We validate \cref{prop:linearsystems} by modifying the Lotka-Volterra (LV) experiment. For CoDA \cite{kirchmeyer2022generalizing} and \tone, we use the exact same network architecture: a 4-layer MLP with 224 hidden units. This means no context nor state network is used in \tone: the context vector of size $d_{\xi}=2$ is directly concatenated to the state vector as done by \citet{zintgraf2019fast}. We note that the results obtained using this configuration further indicate the superiority of NCF, even when model comparison centers on their main/root networks (see also \cite{park2022first} for a similar model comparison based on parameter count)\footnote{As reported in \cref{app:detailes}, the primary model comparison approach in this work counts all learnable parameters, including hypernetworks if involved. Only the context vectors are exempt from this count.}. 

After meta-training and meta-testing, we set out to recover the underlying parameters of the Lotka-Volterra systems via a linear transformation of the learned context vectors. We fit a linear regression model to the 9 meta-training context vectors, using the true physical parameters as supervision signal. We test on the 4 adaptation contexts. The results, displayed in \cref{fig:interpret_ncf_coda}, adequately illustrate interpretability as stated in \cite[Proposition 1]{blanke2024interpretable} and our \cref{prop:linearsystems}. They show that our trained meta-parameters recover the underlying system parameters up to a linear transform, and thus enable \textbf{zero-shot} (physical parameter-induced) adaptation via inverse regression.

{\color{rebutcolor}
\paragraph{Robustness to noise.} Additionally, system identification with NCFs is robust to noise in the trajectory. We show this empirically by corrupting the single trajectory in each adaptation environment with a Gaussian noise scaled by a factor of $\eta$. Upon addition of this noise, sequential adaptation is performed to recover new $\xi$ which are then transformed into $c$ and plotted. The weight $Q$ and bias $q$ of the affine transform are fitted on the training environments and their corresponding underlying parameters, which are unchanged across all noise levels. \cref{fig:noiselevels} shows that the reconstruction\footnote{Sample context vectors pre-reconstruction can be observed in \cref{fig:poolingcontexts} as the structure is preserved with changing seeds.} MSE remains low despite the noise (compared to CoDA's MSE of $1.57\times 10^{-2}$ when $\eta=0$ shown in \cref{fig:interpret_ncf_coda}), and the physical system remains visually identifiable, especially in the convex hull of training environments. Outside the convex hull, the identifiability is notably worse with $\eta\geq 0.1$ indicating that this noise level is excessively high given the range of the LV state values.
}

\begin{figure}[h]
\vspace*{-0.1cm}
\centering
\includegraphics[width=0.3\textwidth]{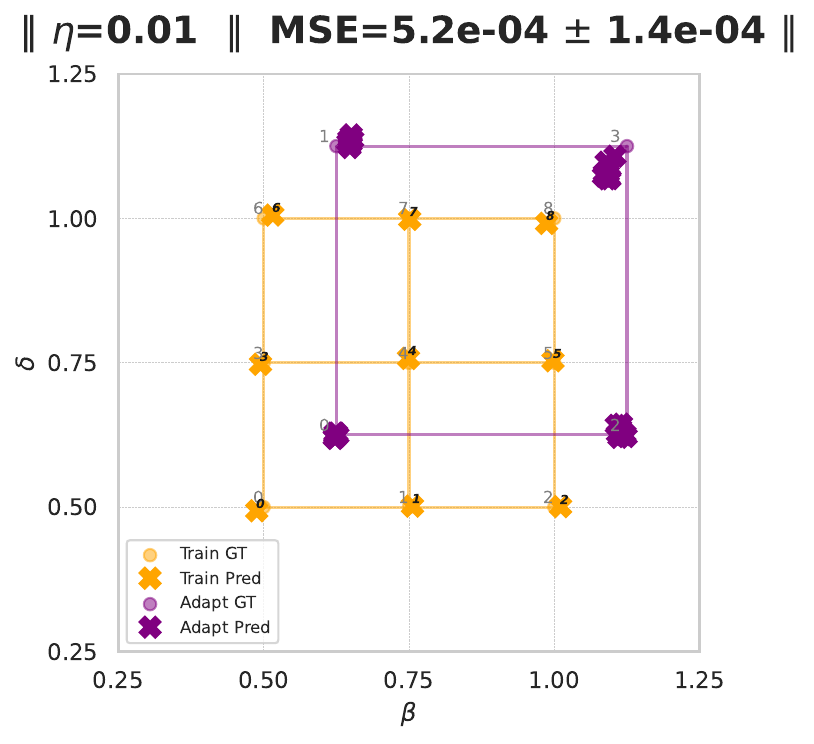} \hfill
\includegraphics[width=0.3\textwidth]{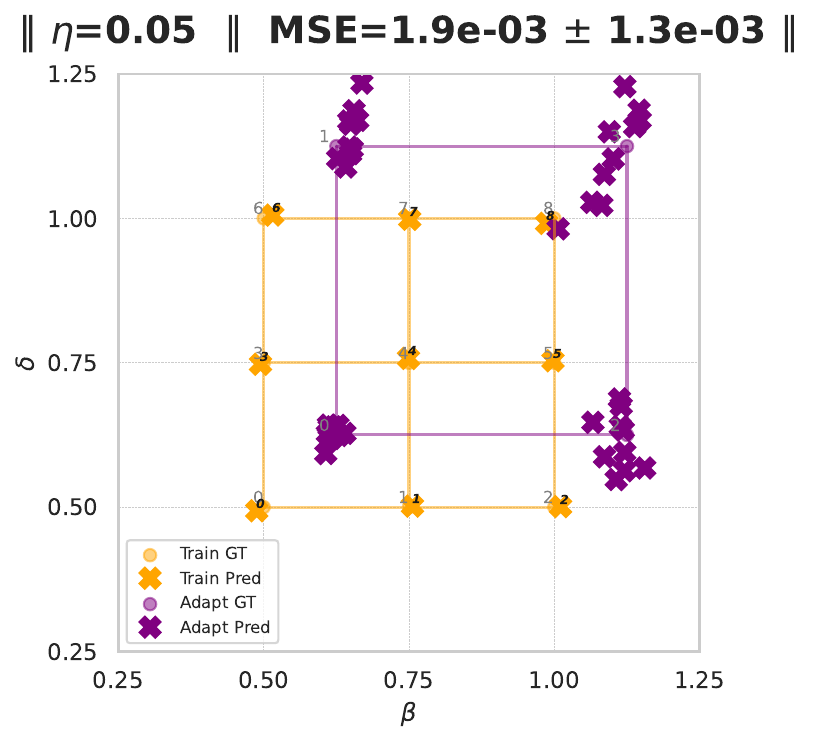} \hfill
\includegraphics[width=0.3\textwidth]{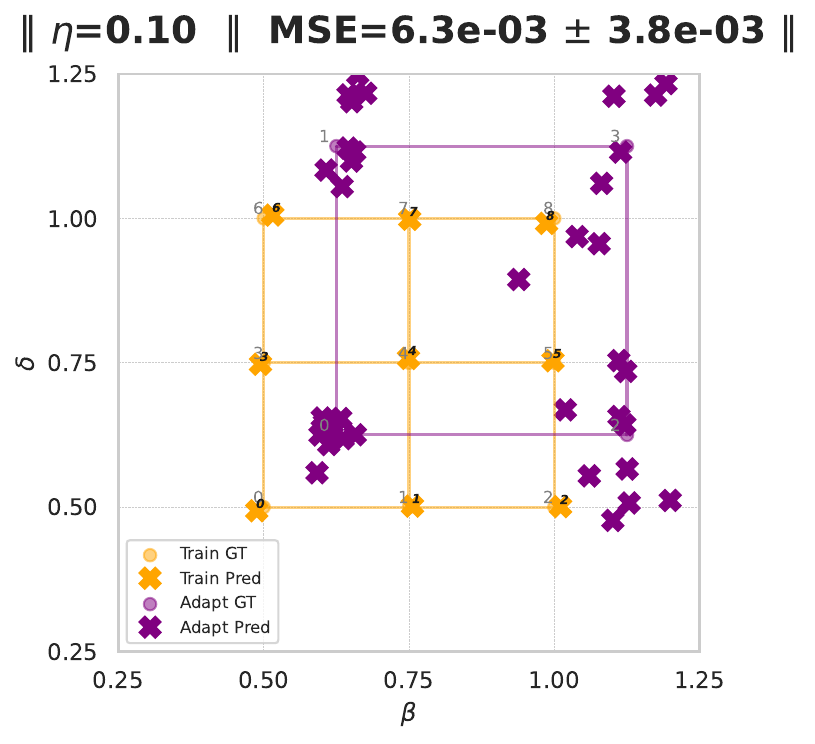}
\caption{\rebut{Robustness of \tone when identifying physical parameters with varying levels of noise $\eta$ injected into the adaptation trajectory. The MSE is reported with the standard deviation across 10 runs with different seeds for the Gaussian noise. (Left) $\eta=0.01$, (Middle) $\eta=0.05$, (Right) $\eta=0.1$.}}
\label{fig:noiselevels}
\vspace*{-0.2cm}
\end{figure}

\subsection{Convergence of proximal alternating minimization}
\label{app:convproximal}

For clarity of exposition, \cref{the:convergence} expressing the convergence of \cref{alg:ncf_proximal} to second-order stationary points is repeated below.

\begingroup
\def\thetheorem{\ref{the:convergence}}
\begin{theorem}[Convergence to second-order stationary points]
Assume that $\mathcal{L}(\cdot, \cdot, \trainset)$ satisfies the Kurdyka-Lojasiewicz (KL) property, is $L$ bi-smooth, and $\nabla \mathcal L(\cdot, \cdot, \trainset)$ is Lipschitz continuous on any bounded subset of domain $\mathbb{R}^{d_\theta} \times \mathbb{R}^{d_{\xi} \times m}$. Under those assumptions, let $(\theta_0, \xi^{1:m}_0)$ be a random initialization and $(\theta_q, \xi^{1:m}_q)$ be the sequence generated by \cref{alg:ncf_proximal}. If the sequence $(\theta_q, \xi^{1:m}_q)$ is bounded, then it converges to a second-order stationary point of $\mathcal{L}(\cdot, \cdot, \trainset)$ almost surely.
\end{theorem}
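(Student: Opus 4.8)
The plan is to cast \cref{alg:ncf_proximal} as a concrete instance of the proximal alternating minimization (PAM) scheme studied in \cite{li2019alternating,attouch2010proximal}, and then to invoke their convergence theory after verifying that the NCF objective fits the required template. First I would split the variables into the two coordinate blocks $\theta \in \mathbb{R}^{d_\theta}$ and $\xi^{1:m} \in \mathbb{R}^{d_\xi \times m}$, and rewrite $\mathcal{L}(\theta, \xi^{1:m}, \trainset)$ from \cref{eq:bigloss} as a smooth coupling term (the averaged MSE together with the $L^2$ weight penalty of \cref{eq:loss}) plus a block-separable, possibly nonsmooth term (the $L^1$ context penalty). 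In this notation the two inner \textbf{Repeat} loops realize the two proximal block updates, e.g.
\[
\theta_q \in \argmin_{\theta}\ \mathcal{L}(\theta, \xi^{1:m}_{q-1}, \trainset) + \tfrac{\beta}{2}\Vert \theta - \theta_{q-1} \Vert_2^2 ,
\]
and the analogue for $\xi^{1:m}$. The proximal regularization with $\beta \geq L$ makes each block subproblem well-conditioned, so the inner gradient descent converges to the block minimizer, and it is this term that supplies the sufficient-decrease inequality on which the whole analysis rests.

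Next I would verify that the three hypotheses of \cref{the:convergence} are exactly what is needed to match Assumptions 1 and 4 of \cite{li2019alternating}: proper lower semicontinuity (immediate), the KL property (assumed), $L$ bi-smoothness, i.e. block-Lipschitz partial gradients (assumed), and Lipschitz continuity of $\nabla \mathcal{L}$ on bounded sets (assumed). Combining the sufficient-decrease estimate with the boundedness hypothesis on $(\theta_q, \xi^{1:m}_q)$ gives a precompact iterate sequence with square-summable successive differences; the Kurdyka--{\L}ojasiewicz machinery of Attouch--Bolte then upgrades subsequential convergence to convergence of the \emph{whole} sequence to a single critical point of $\mathcal{L}(\cdot,\cdot,\trainset)$. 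This reproduces the first-order part of the statement.

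The second-order, almost-sure part is where the random initialization is used. Here I would appeal to a strict-saddle-avoidance argument: the PAM update map is $C^1$, and the proximal terms render its Jacobian invertible near any strict saddle, so it is a local diffeomorphism there. By the center-stable manifold theorem the set of initializations whose trajectories converge to a given strict saddle lies in a lower-dimensional manifold of Lebesgue measure zero; a countable cover of the (isolated) strict saddles then yields that almost every initialization avoids all of them. Hence the a.s. limit guaranteed above is not a strict saddle, i.e. it is a second-order stationary point, which is precisely the content of Theorem 2 of \cite{li2019alternating} transcribed to our setting.

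The hard part will be the two points where our scheme deviates from the idealized template. First, \cref{alg:ncf_proximal} solves each block subproblem \emph{approximately} via an inner loop rather than exactly; I would need to argue that, because the $\beta \geq L$ proximal term makes each subproblem well-posed, the inner gradient descent drives the residual to zero, so the outer iteration coincides with exact PAM (or, alternatively, invoke the inexact variant of the theory). Second, the nonsmooth $L^1$ penalty on $\xi$ must be accommodated without breaking the measure-zero argument, which nominally requires enough regularity for the local-diffeomorphism step; since the nonsmoothness is confined to a convex, block-separable term whose proximal map is single-valued and Lipschitz, I expect it can be folded into the update map cleanly, but confirming that the stable-manifold hypotheses survive this composite (smooth $+\,L^1$) structure is the delicate technical step and the one I would spend the most care on.
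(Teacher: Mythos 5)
Your proposal takes essentially the same route as the paper: the paper's entire proof is a one-line reduction to Assumptions 1 and 4 and Theorem 2 of \cite{li2019alternating}, which is exactly the reduction you carry out, and you correctly reconstruct the internal mechanics of that theorem (sufficient decrease from the proximal term, KL-based convergence of the whole sequence, center-stable-manifold avoidance of strict saddles under random initialization). The two deviations you flag --- the inexact inner loops and the nonsmooth $L^1$ context penalty --- are genuine issues that the paper silently absorbs into the word ``adapting,'' so your treatment is, if anything, more careful than the original.
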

\addtocounter{theorem}{-1}
\endgroup

We use this section to briefly emphasize that the assumptions of KL property \cite{attouch2010proximal} and Lipschitz continuity are mild and easily achievable with neural networks. Interestingly, boundedness of the sequence $(\theta_k, \xi^{1:m}_k)$ is guaranteed a priory if $\mathcal L$ is coercive \cite{li2019alternating}, a property we encouraged by regularizing $\ell$ wrt weights and contexts as in \cref{eq:loss}.

\subsection{Ordinary alternating minimization}

Here we provide an additional procedure (\cref{alg:ncf}) for training Neural Context Flows which we reserved for the \tone variant. While stronger assumptions \cite{li2019alternating} are needed to establish convergence guarantees like its proximal extension, is it relatively easier to implement, and exposes fewer hyperparameters to tune.

\begin{minipage}[t]{0.48\textwidth}

\begin{algorithm}[H]
   \caption{Ordinary Alternating Minimization}
   \label{alg:ncf}
\begin{algorithmic}[1]
   \State {\bfseries Input:} 
   $ \trainset := \{ \mathcal{D}^e_{\text{tr}} \}_{e\in \lbint 1,m \rbint}$
   \State $\,\, \theta \in \mathbb{R}^{d_{\theta}}$ randomly initialized 
   \State $\,\, \xi^{1:m} := \bigcup\limits_{e=1}^m \xi^e$, where $ \xi^e = \mathbf{0} \in \mathbb{R}^{d_{\xi}} $
   \State $\,\, \eta_{\theta}, \eta_{\xi} > 0$
   \Repeat
   \State \quad $\theta \leftarrow \theta - \eta_{\theta} \nabla_{\theta} \mathcal{L}(\theta, \xi^{1:m}, \trainset)$
   \State \quad $\xi^{1:m} \leftarrow \xi^{1:m} - \eta_{\xi} \nabla_{\xi} \mathcal{L}(\theta, \xi^{1:m}, \trainset)$
   \Until{$\left( \theta, \xi^{1:m} \right) $ converges}
\end{algorithmic}
\end{algorithm}

\end{minipage}
\hfill
\begin{minipage}[t]{0.48\textwidth}

\begin{algorithm}[H]
   \caption{Bulk Adaptation of NCF}
   \label{alg:ncf_adp_bulk}
\begin{algorithmic}[1]
   \State {\bfseries Input:} 
   $ \adaptset := \{ \mathcal{D}^{e'} \}_{e'\in \lbint a,b \rbint}$
   \State $\,\, \theta \in \mathbb{R}^{d_{\theta}}$ learned 
   \State $\,\, \xi^{a:b} = \bigcup\limits_{e'=a}^b \xi^{e'}$, where $ \xi^{e'} := \mathbf{0} \in \mathbb{R}^{d_{\xi}} $
   \State $\,\, \eta > 0$
   \Repeat
   \State \quad $\xi^{a:b} \leftarrow \xi^{a:b} - \eta \nabla_{\xi} \mathcal{L}(\theta, \xi^{a:b}, \adaptset)$
    \Until{$\xi^{a:b} $ converges}
\end{algorithmic}
\end{algorithm}
\end{minipage}

\subsection{Bulk adaptation algorithm}
\label{app:bulk}


Neural Context Flows are designed to be fast at adaptation time. However, one might want to adapt to hundreds or thousands of environments, perhaps to identify where the performance degrades on downstream tasks. In such cases, \cref{alg:ncf_adp_seq} could be slow due to its sequential nature. We provide \cref{alg:ncf_adp_bulk} that leverages the same parallelism exploited during training (while restricting information flow from one adaptation environment to the next). 

Although highly parallelizable, we realize in practice that \cref{alg:ncf_adp_bulk} is susceptible to two pitfalls:
\begin{enumerate}
\item{\textbf{Memory scarcity}:} the operating system needs to allocate enough resources to store the data $\adaptset$, the combined context vectors $\xi^{a:b} \in \mathbb{R}^{d_{\xi}\times (b-a+1)}$ and backpropagate their gradients in order to adapt all environments at once, which might be impossible if the context pool size $p$ is set too high. To avoid this issue, we recommend using $p \ll m$, or ideally $p=1$ if contextual self-modulation is disabled.
\item{\textbf{Slower convergence}:} the bulk algorithm could take longer to converge to poorer context vectors when the jointly adapted environments are all very far apart. This is because the contextual self-modulation process would be rendered useless by such task discrepancy, and sampling $j=e$ in the context pool $\mathrm{P}$ to return to a standard Neural ODE in \cref{method} would be harder. We find in practice that manually setting $j=e$ by adjusting the vector field to \emph{forego the Taylor expansion} works well (see \cref{fig:huge_adapt}). A more direct way of achieving the same result is to disregard $\mathrm{P}$ and retain $f_{\theta}$ (rather than its Taylor expansion) in \cref{eq:method}.
\end{enumerate}

All adaptation results in this paper use the sequential adaptation procedure in \cref{alg:ncf_adp_seq}, except for the grid-wise adaptation in \cref{experiments}.

\subsection{What's in a context pool ?}
\label{subsec:whatsinapool}

The content of the context pool $\mathrm{P}$ not only defines the candidate trajectories we get from \cref{eq:candidates}, but also the speed and memory cost of the meta-training process. \rebut{Based on intuitive understanding of the role of $\mathrm{P}$,} we outline 3 \rebut{tunable} pool-filling strategies for selecting $p$ neighboring contexts: 
\begin{itemize}
    \item \textbf{Random-All (RA)}: all $p$ distinct contexts can be selected by randomly drawing their indices from $\lbint 1,m \rbint$. By repeatedly doing so, we maximize long-range interactions to provide the broadest form of self-modulation -- since information can always (in the stochastic limit) flow from any environment into $e$.
    \item \textbf{Nearest-First (NF)}: only the $p$ closest contexts to $\xi^e$ are selected, thus encouraging environments to form clusters. (Note, however, that if $p=1$, then $\mathrm{P} = \{ e \}$ itself, and no self-modulation occurs.) \rebut{In one ablation study, we observe that this strategy is the most balanced with regard to training time and performance (see \cref{app:pooling}).} 
    \item \textbf{Smallest-First (SF)}: the smallest contexts in $L^1$ norm are selected first. Since an environment with context close to \textbf{0} can be interpreted as an environment-agnostic feature like in \cite{kirchmeyer2022generalizing,blanke2024interpretable}, this strategy prioritizes the flow of information from that base or canonical environment to the one of interest $e$. 
\end{itemize}

\subsection{Scalability of the NCF algorithms}
\label{app:scalability}

The Neural Context Flow (NCF) framework demonstrates excellent scalability with respect to various hyperparameters, owing to its innovative design and implementation. This scalability is evident in three key aspects: distributed training capabilities, efficient handling of large context vectors, and utilization of first-order optimization techniques.

Primarily, NCF's training process can be distributed and parallelized across environments and trajectories (see \cref{eq:bigloss}, \cref{fig:trainigtimes}, and \cref{alg:ncf_adp_bulk}), a feature that distinguishes it from baseline methods which are limited to parallelization across trajectories. Furthermore, the framework employs an efficient approach, as outlined in Proposition 3.1, to avoid materializing Jacobians or Hessians wrt potentially large context vectors, thereby significantly enhancing scalability. Additionally, NCF utilizes only first-order gradients wrt model weights $\theta$, in contrast to methods like CAVIA that require second-order information in their bi-level optimization loop.

Scalability can also be evaluated in terms of component size, particularly in meta-learning adaptation rules that incorporate additional contextual parameters beyond shared model weights. These components may include encoders, hypernetworks, and other mechanisms for generating or refining context vectors necessary for task adaptation. The complexity and memory requirements of contextual meta-learning rules, which are directly related to the size of these components, can be quantified by their parameter count among other metrics. In this regard, NCF maintains a constant memory cost $O(1)$, while baseline methods such as CAVIA and CoDA require additional memory to produce better contexts (see \cite[Table 1]{park2022first}).

Despite these advantages, the NCF framework may face challenges related to memory and computational efficiency due to the requirement of solving $p$ Neural ODEs in \eqref{method}, as opposed to a single one. In scenarios where all training environments are utilized (i.e., $p=m$), this results in a quadratic cost $O(m^2)$ for \cref{alg:ncf_proximal,alg:ncf}. However, our ablation studies in \cref{app:contextpoolsize} demonstrate that competitive performance can be achieved on most problems using as few as $p=2$ neighboring environments. Additional studies in \cref{app:contextsize,app:3networks} establish the necessity of expressive context vectors \cite{voynov2020unsupervised} and validate the efficacy of the 3-networks architecture, respectively.

It is worth noting that limiting these quantities could directly contribute to improved parameter counts and more interpretable models. Moreover, restricting the total number of environments contributing to the loss \eqref{eq:bigloss} at each iteration may further enhance efficiency.


\newpage
\section{Datasets \& Additional Results}
\label{app:additionalresults}
\label{app:datasets}

\subsection{Gen-Dynamics}
\label{subsec:gen-dyn}

Given the lack of benchmark consistency in Scientific Machine Learning \cite{massaroli2020dissecting}, we launched \texttt{Gen-Dynamics}: \url{https://github.com/ddrous/gen-dynamics}. This is a call for fellow authors to upload their metrics and datasets, synthetic or otherwise, while following a consistent interface. In the context of OoD generalization for instance, we suggest the dataset be split in 4 parts: \textbullet{} (1) \texttt{train}: For In-Domain meta-training; \textbullet{} (2) \texttt{test}: For In-Domain evaluation; \textbullet{} (3) \texttt{ood\_train}: For Out-of-Distribution adaptation to new environments (meta-testing); \textbullet{} (4) \texttt{ood\_test}: For OoD evaluation.

Each split should contain trajectories \texttt{X} and the time points \texttt{t} at which the states were recorded. The time \texttt{t} is a 1-dimensional array, and we recommend a 4-dimensional \texttt{X} tensor with dimensions described as follows: \textbullet{} (1) \texttt{nb\_envs}: Number of distinct environments; \textbullet{} (2) \texttt{nb\_trajs\_per\_env}: Number of trajectories per environment; \textbullet{} (3) \texttt{nb\_steps\_per\_traj}: Number of time steps per trajectory (matching the size of \texttt{t}); \textbullet{} (4) \texttt{state\_size}: Size of the state space.

While the suggestions apply mostly to dynamical systems' Meta-Learning, we believe they are generalizable to other problems, and are open to suggestions from the community. All problems described below are now represented in \texttt{Gen-Dynamics}.

Finally, we define the MSE and the mean absolute percentage error (MAPE) criteria as they apply to all trajectory data found in \texttt{Gen-Dynamics}. Unless stated otherwise, the following are the metrics used throughout this work, including in \cref{tab:sota_results}.
\begin{align}
    \text{MSE}(x, \hat x) &= \frac{1}{N \times d} \sum_{n=1}^{N} \Vert x(t_n) - \hat{x}(t_n) \Vert_2^2, \\ \text{MAPE}(x, \hat x) &= \frac{1}{N \times d} \sum_{n=1}^{N} \left| \frac{x(t_n) - \hat{x}(t_n)}{x(t_n)} \right| \times 100.
\end{align}

{\color{rebutcolor}
\subsection{Uncertainty Estimation}
\label{subsec:uncertainty}

Neural Context Flows can provide uncertainty about their predictions. To show this, we calculate  (\textit{i}) the relative mean squared error (MSE), (\textit{ii}) the mean absolute percentage error (MAPE), and (\textit{iii}) the 3-$\sigma$ Coverage or Confidence Level (CL) \cite{serrano2024zebra}, with the following formulae applied In-Domain:
\begin{align} \label{eq:uqmse}
    \text{Rel. MSE} = \frac{100}{m \times S \times N \times d} \sum_{e=1}^m \sum_{i=1}^S \sum_{n=1}^{N}  \frac{\Vert x^{e}_i(t_n) - \hat \mu^{e}_i(t_n)\Vert_2^2}{\Vert x^{e}_i(t_n)\Vert_2^2},
\end{align}
\begin{align} \label{eq:uqmape}
    \text{MAPE} = \frac{100}{m \times S \times N \times d} \sum_{e=1}^m \sum_{i=1}^S \sum_{n=1}^{N} \sum_{d'=1}^{d} \left| \frac{x^{e}_{i,d'}(t_n) - \hat \mu^{e}_{i,d'}(t_n)}{x^{e}_{i,d'}(t_n)} \right|,
\end{align}
\begin{align} \label{eq:conflevel}
    \text{CL} = \frac{100}{m \times S \times N \times d} \sum_{e=1}^m \sum_{i=1}^S \sum_{n=1}^{N} \sum_{d'=1}^{d} \mathbbm{1}_{x^e_{i,d'}(t_n) \in \text{CI}(e,i, n, d')},
\end{align}
with the mean and standard deviation across candidate trajectory predictions defined as
\begin{align} \label{eq:meanstd}
    \hat \mu^e_i(t_n)= \frac{1}{p} \sum_{j=1}^p \hat x^{e,j}_i(t_n), \quad \text{and } \, \hat \sigma^e_i(t_n)= \sqrt{\frac{1}{N} \sum_{j=1}^p ( \hat x^{e,j}_i(t_n) - \hat \mu^e_i(t_n) )^2},
\end{align}
and the pointwise empirical confidence interval defined as 
\begin{align}
    \text{CI}(e,i,\cdot, \cdot) = \left[ \hat \mu^e_i-3\hat \sigma^e_i, \hat \mu^e_i+3\hat \sigma^e_i \right],
\end{align}
where $S$ indicates the number of trajectories used per environment, $N$ the length of each trajectory, and $d$ the dimensionality of the problem. The number of InD environments is denoted by $m$, to be replaced with $b-a+1$ for OoD cases.\footnote{Consequently, the first summation symbol's bounds and its corresponding factor in the denominators of \cref{eq:uqmse,eq:uqmape,eq:conflevel} should be adjusted accordingly for OoD formulae.} We set $p=m$ for InD uncertainty metrics calculation, and we are guaranteed the existence of those same $m$ environments for OoD cases. However, if our model performs well across all adaptation environments it encounters --as is the case with \ttwo as observed in \cref{tab:sota_results}-- then \emph{all} training and adaptation environments can be used, resulting in $p=m+b-a+1$ instead of only $p=m$ environments at our disposal for \cref{eq:meanstd}. This produces more sample predictions, allowing for more reliable population statistics.  

The results in \cref{tab:uq_metrics} show low relative MSE on ODE problems, and remarkably low MAPE scores on all problems. These indicate that the empirical mean of the predictions is indeed close to the ground truth. We also notice that the confidence levels vary from very low values on NS to very high on BT. Based on \cref{eq:conflevel}, we hypothesize that this is primarily due to standard deviations across predictions.\footnote{We remark that the empirical confidence level CL is a metric that favors models that are uncertain in giving the right mean prediction.}

To test our hypothesis, we plot the standard deviations as they evolve with time in \cref{fig:std_devs}. When compared to per-problem InD and OoD CLs from \cref{tab:uq_metrics}, \cref{fig:std_devs} reveals that higher confidence levels align with higher standard deviations, which is particularly noticeable on forecasts beyond training time horizons. Additionally, in \cref{fig:std_devs_vs_errors}, we plot the pointwise standard deviations as they relate to the corresponding absolute errors. Naturally, we observe a non-negligible correlation between the two, especially on the GO problem. Focusing on OoD behavior, our model successfully avoids undesirable regions of low uncertainty but higher-than-InD errors (along the top left corners). Instead, some OoD predictions for ODE problems fall in a region of higher-than-InD uncertainty, but still low error (along the bottom right corners), which stresses the well-suitedness of our approach for OoD generalization. Despite not knowing the underlying source of uncertainty we wish to model, these results suggest that our framework is capable of providing meaningful uncertainty estimates. This said, we emphasize that the calculation and interpretation of aforementioned uncertainty metrics (e.g., the width of CI) should be grounded on knowledge and goals of the problem at hand.
}

\begin{table*}[h]
\caption{\rebut{Uncertainty \rebut{estimation} metrics with \ttwo, all expressed in percentage points (\%). The star $*$ indicates cases where the close to zero denominators had to be filtered out to retain state values greater than $10^{-3}$. The Relative MSE is the most sensitive to these instabilities due to squaring at the denominator.} }
\vspace*{-0.3cm}
\label{tab:uq_metrics}
\begin{center}
\footnotesize
\begin{sc}
\begin{tabularx}{0.71\textwidth}{lcccccc}
\toprule
& \multicolumn{3}{c}{\textbf{LV}} & \multicolumn{3}{c}{\textbf{GO}} \\
\cmidrule(lr){2-4} \cmidrule(lr){5-7}
 & Rel. MSE  & MAPE  & CL &  Rel. MSE  & MAPE  & CL   \\
\midrule
InD   &  0.032  & 0.80 & 56.22 & 5.119 & 10.36 & 82.06 \\
OoD   & 0.183 & 1.86 &  78.24 & 1.653 & 7.17  & 70.47 \\
\bottomrule
\end{tabularx}
\begin{tabularx}{0.71\textwidth}{lcccccc}
& \multicolumn{3}{c}{\textbf{SM}} & \multicolumn{3}{c}{\textbf{BT}} \\
\cmidrule(lr){2-4} \cmidrule(lr){5-7}
 & Rel. MSE  & MAPE  & CL &  Rel. MSE  & MAPE  & CL   \\
\midrule
InD   &  2.005*  & 4.07* & 65.64 & 42.199 & 18.95 & 92.25 \\
OoD   & 0.158* & 1.99* &  94.70 & 46.028 & 22.28  & 90.63 \\
\bottomrule
\end{tabularx}
\begin{tabularx}{0.71\textwidth}{lcccccc}
& \multicolumn{3}{c}{\textbf{GS}} & \multicolumn{3}{c}{\textbf{NS}} \\
\cmidrule(lr){2-4} \cmidrule(lr){5-7}
 & Rel. MSE  & MAPE  & CL &  Rel. MSE  & MAPE  & CL   \\
\midrule
InD   &  2118.51*  & 58.33* & 14.58 & 166.696* & 17.30* & 11.19 \\
OoD   & 281.986* & 33.43* &  11.89 & 152.78* & 16.72*  & 11.09 \\
\bottomrule
\end{tabularx}
\end{sc}
\end{center}
\end{table*}

\begin{figure}[h]
\begin{center}
\centerline{\includegraphics[width=\columnwidth]{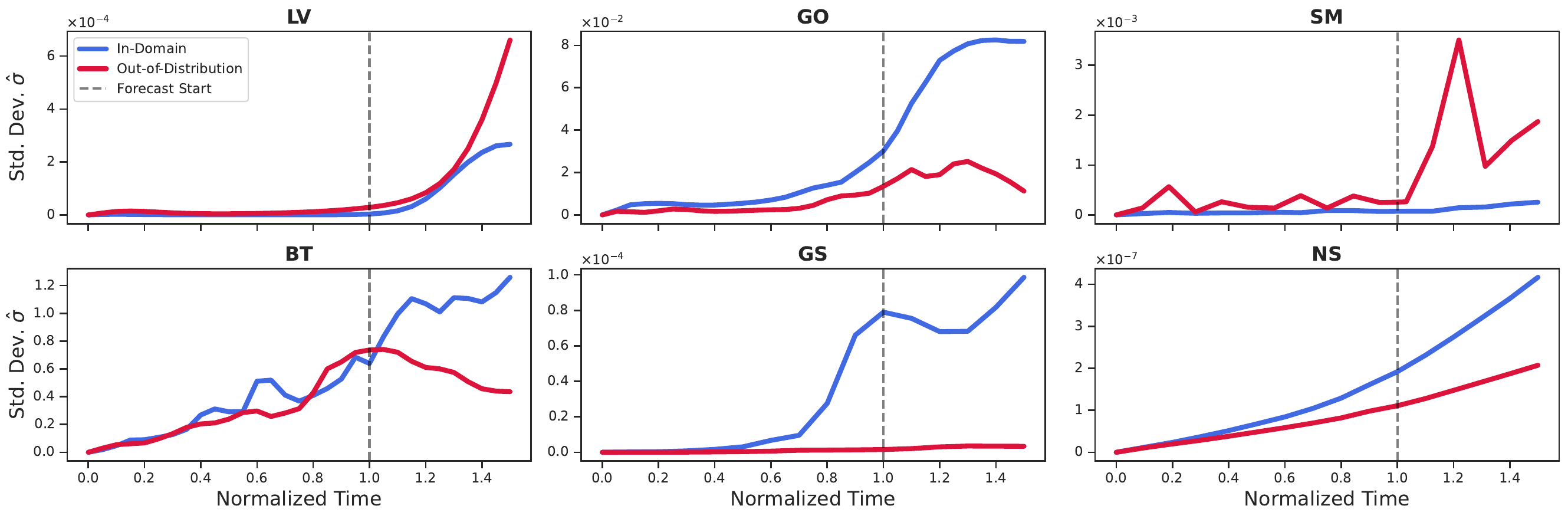}}
\caption{\rebut{Average standard deviations indicating how uncertainty grows with time, including when the model forecasts in time domains not seen during training. Higher standard deviations correlate with higher confidence level metrics observed in \cref{tab:uq_metrics}.}}
\label{fig:std_devs}
\end{center}
\end{figure}

\begin{figure}[h]
\begin{center}
\centerline{\includegraphics[width=\columnwidth]{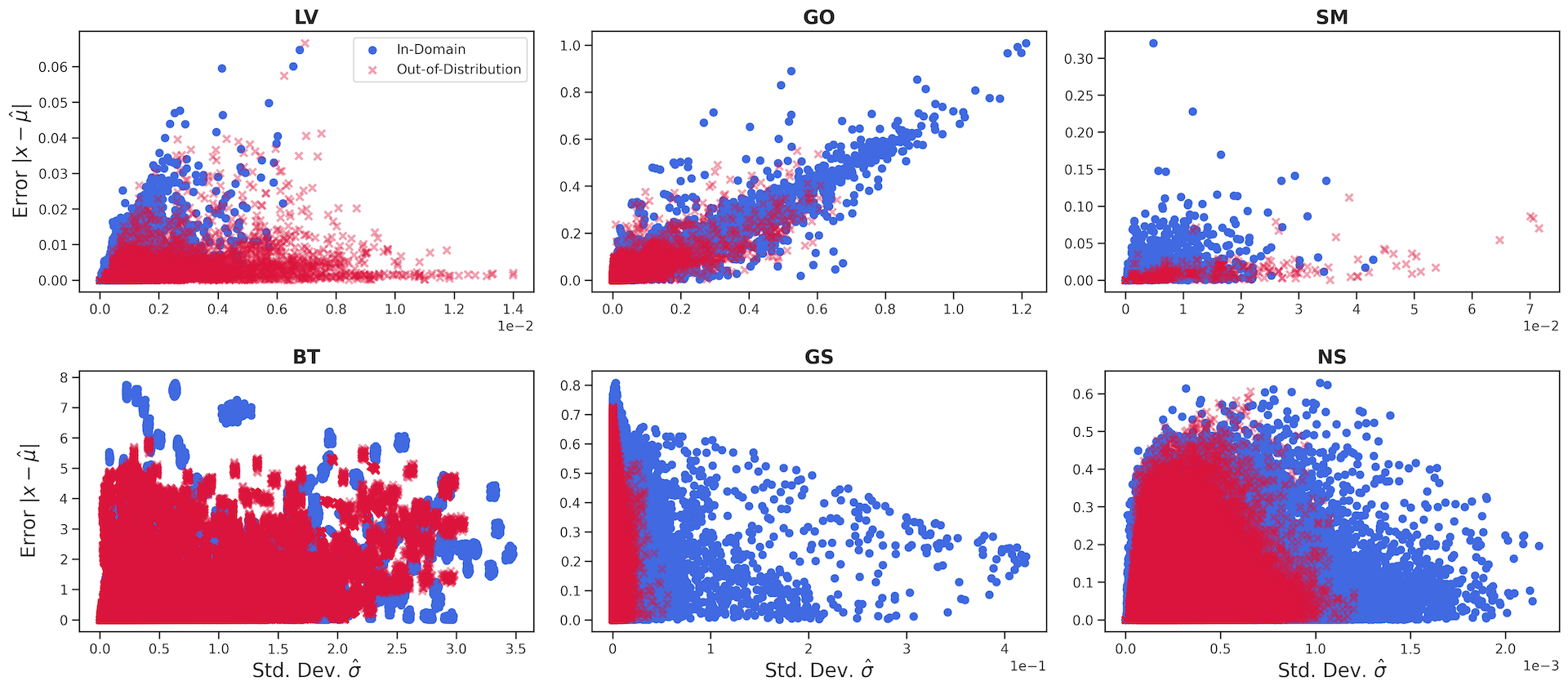}}
\caption{\rebut{Pointwise absolute errors against standard deviations for all problems.}}
\label{fig:std_devs_vs_errors}
\end{center}
\end{figure}

\subsection{Simple Pendulum (SP)}
\label{subsec:sp}

The autonomous dynamical system at play here corresponds to a frictionless pendulum suspended from a stationary point by a string of fixed length $L$. The state space $x = (\alpha, \omega)$, comprises the angle the pendulum makes with the vertical, and its angular velocity, respectively
\begin{equation}
\left\{
\begin{aligned}
\frac{{\md \alpha}}{{\md t}} &= \omega, \\
\frac{{\md \omega}}{{\md t}} &= -\frac{\textcolor{red}{g}}{L} \sin(\alpha).
\end{aligned}
\right.
\label{eq:simple_pendulum}
\end{equation}
For this problem, each environment corresponds to a different gravity.\footnote{To give an intuition behind the term ``environment'', one might consider the surface of a celestial body in the solar system (see \cref{introduction}).} With $L=1$ set, the goal is to learn a dynamical system that easily generalizes across the unobserved $g$. Trajectories are generated with a Runge-Kutta 4th order solver, and a fixed step size of $\Delta t = 0.25$. Only the \tone variant is used in all experiments involving this problem.

{\color{rebutcolor}

\paragraph{OFA vs. OPE vs. NCF.} The training and adaptation MSE metrics were reported in \cref{tab:pendulum} during our favorable comparison to two baselines: One-For-All (OFA) -- one context-free model trained for all environments; and One-Per-Env (OPE) -- one model trained from scratch for each environment we encounter.\footnote{The OFA paradigm offers no mechanism to adapt to new environments, unless we fine-tune the vector field's weights, thus returning to an OPE-like setting.} Here, we expand on said comparison with our loss values during training reported in \cref{fig:losses_ofavsncf}. Additionally, we probe the training time of the different methods. Given that OFA and OPE do not require contextual information, we increase the capacity of their main networks to match the NCF total parameter count. Each method is then trained until the loss stagnates, and we report the amortized training times in \cref{tab:pendulum_times}.


While exhibiting much larger adaptation times, OPE overfits to its few-shot trajectories as evidenced in \cref{fig:losses_ofavsncf}. The same figure shows that the OFA training loss quickly stagnates to a relatively high value during training. Unsurprisingly, learning one context-agnostic vector field for all environments (OFA) is suboptimal given the vast differences in gravity from one environment to the next. These observations align with the more complete study on OFA and OPE by \citet{yin2021leads}. Leveraging Meta-Learning, our method, trained on all environments at once like OFA, effectively learns to discriminate between them and produces low validation MSE metrics and accurate trajectories, one of which is presented in \cref{fig:trajscompare}b. Taken together, \cref{tab:pendulum,tab:pendulum_times,fig:losses_ofavsncf} show that Meta-Learning delivers on training time, adaptation time, and most importantly, testing accuracy.

\begin{table}[h]
\caption{\rebut{Meta-training and adaptation times ($\downarrow$) with OFA, OPE, and \tone on the SP problem. We report the amortized times (in \textbf{minutes}) corresponding to fitting one single environment (of which we count 25 when meta-training, and 2 for adaptation).}}
\label{tab:pendulum_times}
\begin{center}
\begin{small}
\begin{sc}
\vspace*{-0.4cm}
\begin{tabular}{lccc}
\toprule
& \#Params & Train & Adapt   \\
\midrule
OFA    & 49776 & 0.34 & 0 \\
OPE    & 49776 & 4.63 & 5.72 \\
NCF  & 50000 & 2.96 & 0.51 \\
\bottomrule
\end{tabular}
\end{sc}
\end{small}
\end{center}
\end{table}

\begin{figure}[h]
\begin{center}
\centerline{\includegraphics[width=0.6\columnwidth]{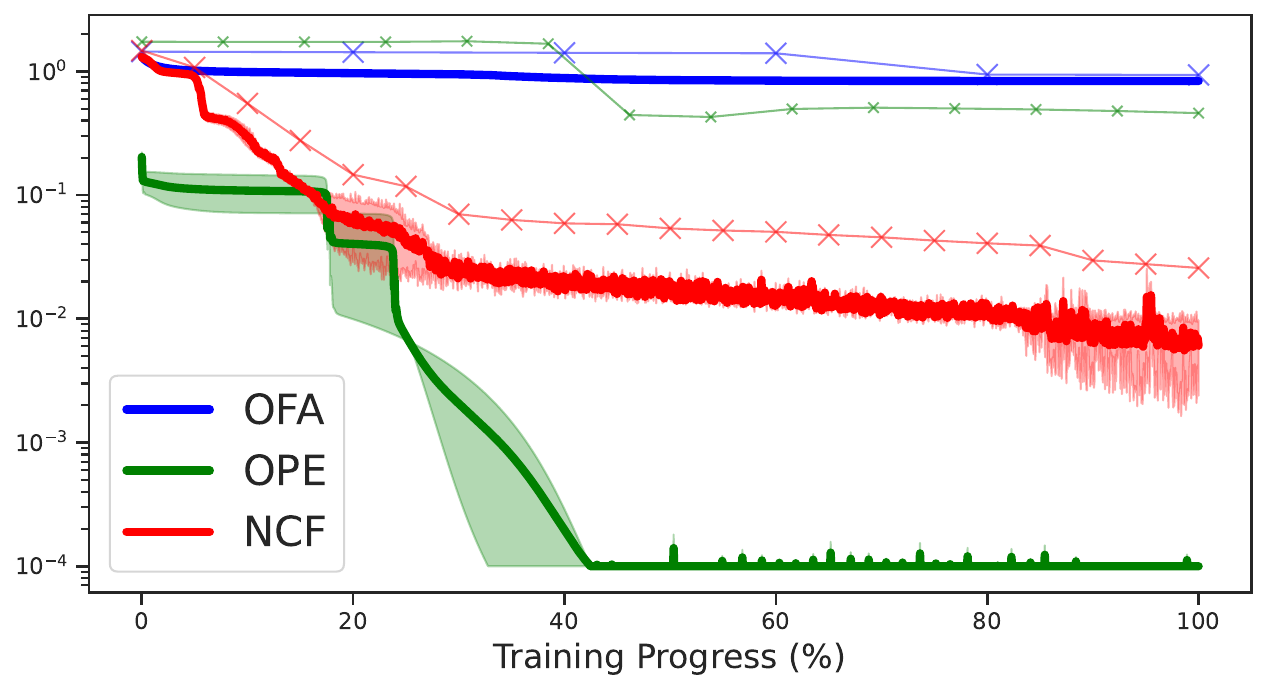}}
\caption{\rebut{MSE loss values when training \tone on the SP problem, compared with the baseline OFA and OPE formulations. The crosses $\times$ indicate mean validation curves across 3 runs, color-coded to match the training curves. OFA fails catastrophically since the diversity of environments in the training dataset prevents the approximation of any meaningful vector field, while OPE overfits to its 4 training trajectories.}}
\label{fig:losses_ofavsncf}
\end{center}
\end{figure}

\paragraph{Sample efficiency with the number of trajectories.} We compare our model to the two OFA and OPE baselines as the number of trajectories $S$ in each training environment is increased from 1 to 12. The results reported in \cref{fig:sampleeffsp} indicate that NCF is indeed the best option when data is limited, while the improvements in OFA MSEs are barely noticeable. As $S$ increases, we observe that OPE is able to overcome overfitting to ultimately achieve the best results. These results demonstrate that NCF efficiently uses its few-shot trajectories. However, if neither data nor training time is a concern (cf. \cref{tab:pendulum_times}), then the traditional One-Per-Env should be prioritized. 

\begin{figure}[h]
\begin{center}
\centerline{\includegraphics[width=0.5\columnwidth]{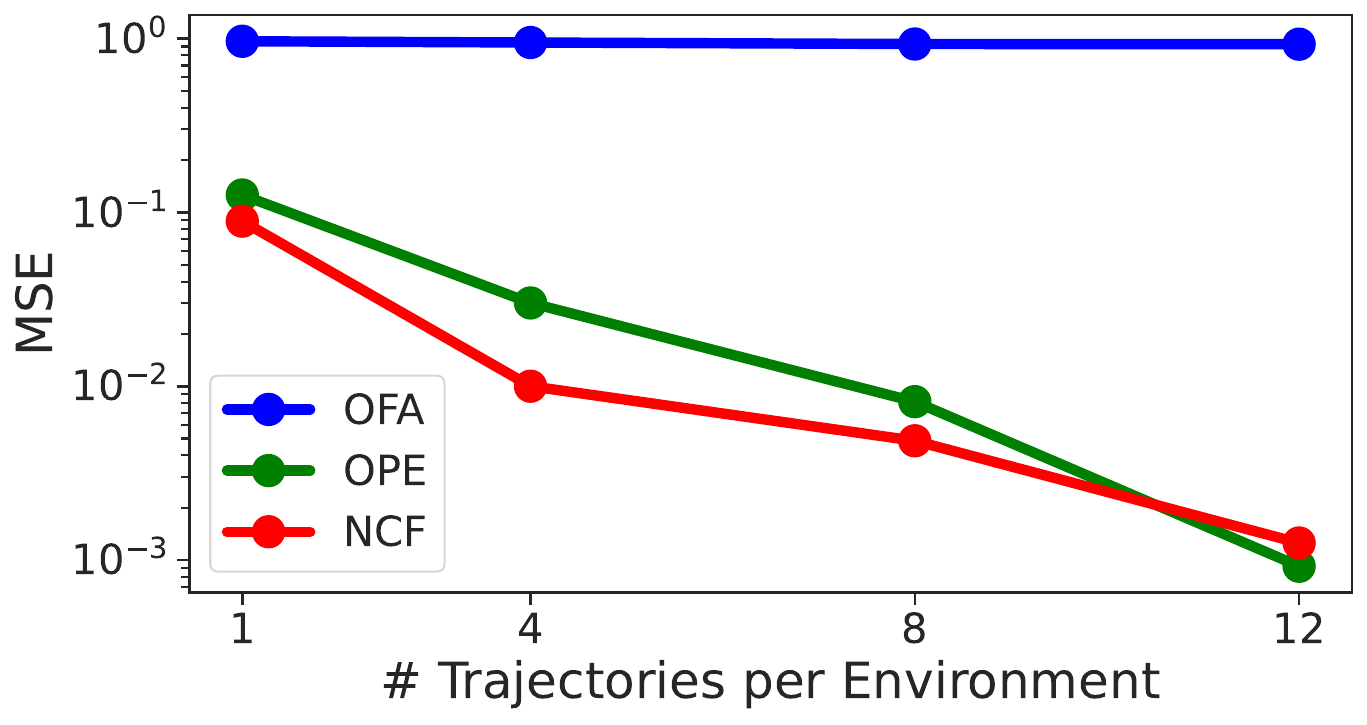}}
\caption{\rebut{In-Domain MSEs on the SP problem comparing the sample efficiency of \tone against OPE and OFA. NCF is effective in low-data regimes, and OPE overcomes the gap as data increases.}}
\label{fig:sampleeffsp}
\end{center}
\end{figure}

}

{\color{rebutcolor}
\paragraph{Scaling with the training environments.} The computational speed of any method based on Neural ODEs depends on the numerical integrator it uses. To provide a consistent number of function evaluations (NFEs), we switch the adaptive time-stepper \texttt{Dopri5} for the fixed time-stepper \texttt{RK4}, then we measure the duration of epochs as the training progresses. These times as used to produce \cref{fig:trainigtimes}, which indicates training times per epoch (in seconds) as the number of training environments is increased while keeping the range of gravities unchanged between 2 and 24. We observe excellent scaling, with the training time only increasing by roughly 23\% (from 0.38 to 0.47 seconds) when the number of environments is scaled by 10 (from 5 to 50).

\begin{figure}[h]
\begin{center}
\centerline{\includegraphics[width=0.5\columnwidth]{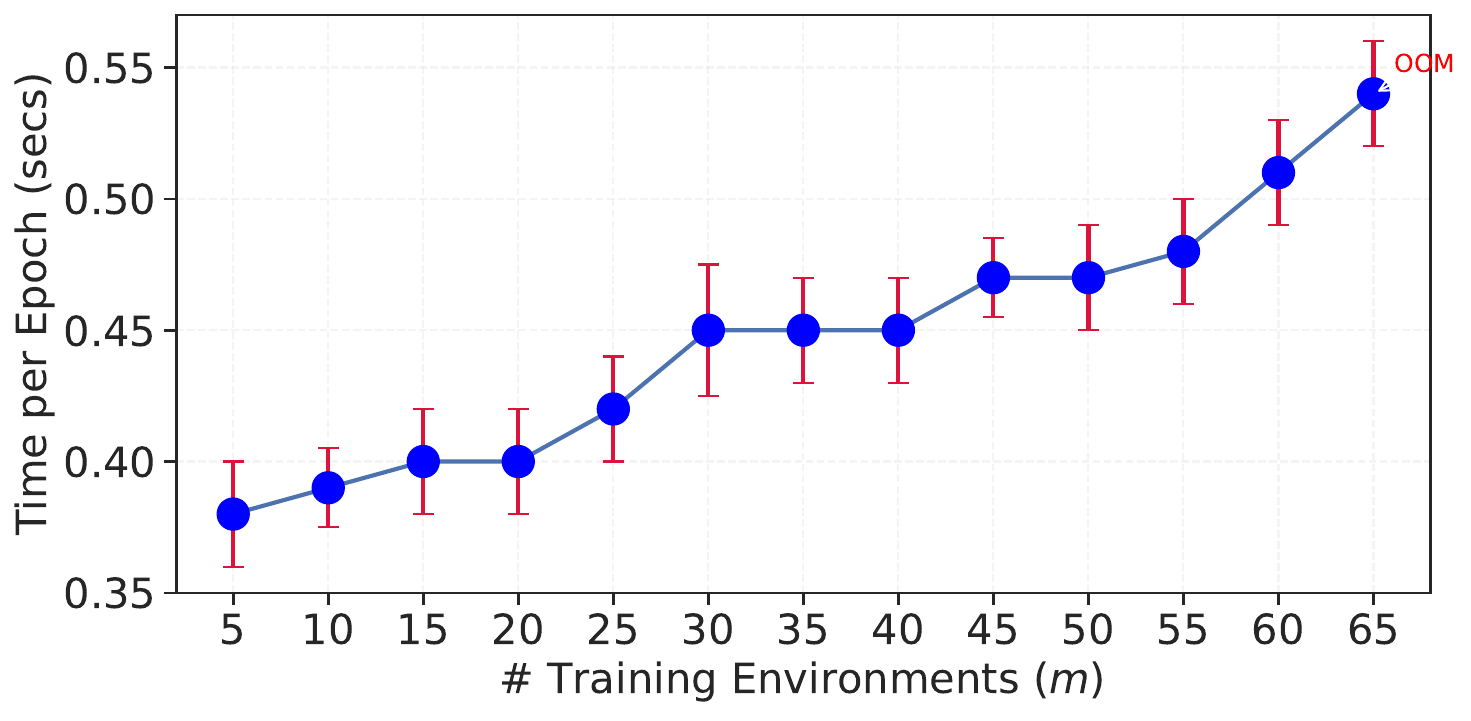}}
\caption{\rebut{Training time as the number of training environments $m$ is increased from 5 to 65. The vertical bars are proportional to the standard deviation across epochs. OOM indicates that our workstation ran out of memory.}}
\label{fig:trainigtimes}
\end{center}
\end{figure}
}

\paragraph{Probing the context vectors.} Beyond serving as a control signal for the vector fields, the contexts encode useful representations. In \cref{fig:repr}, we visualize the first two dimensions of the various $\{\xi^e\}_{1\leq e \leq 25}$ after training. We observe that environments close in indices\footnote{The indices of the training environments correspond to their ordering in increasing values of gravity.} are equally close in the two context dimensions. Similarly, distant environments are noticeably far apart in this view of the context space. The same observation is made during adaptation, where, for instance, $e'=a_2$ (corresponding to $g=14.75$) gets a context close to $e=15$ (corresponding to $g=14.83$). This observation indicates that the latent context vector is encoding features related to gravity, which may be used for further downstream representation learning tasks.

\begin{figure}[h]
\begin{center}
\centerline{\includegraphics[width=0.6\columnwidth]{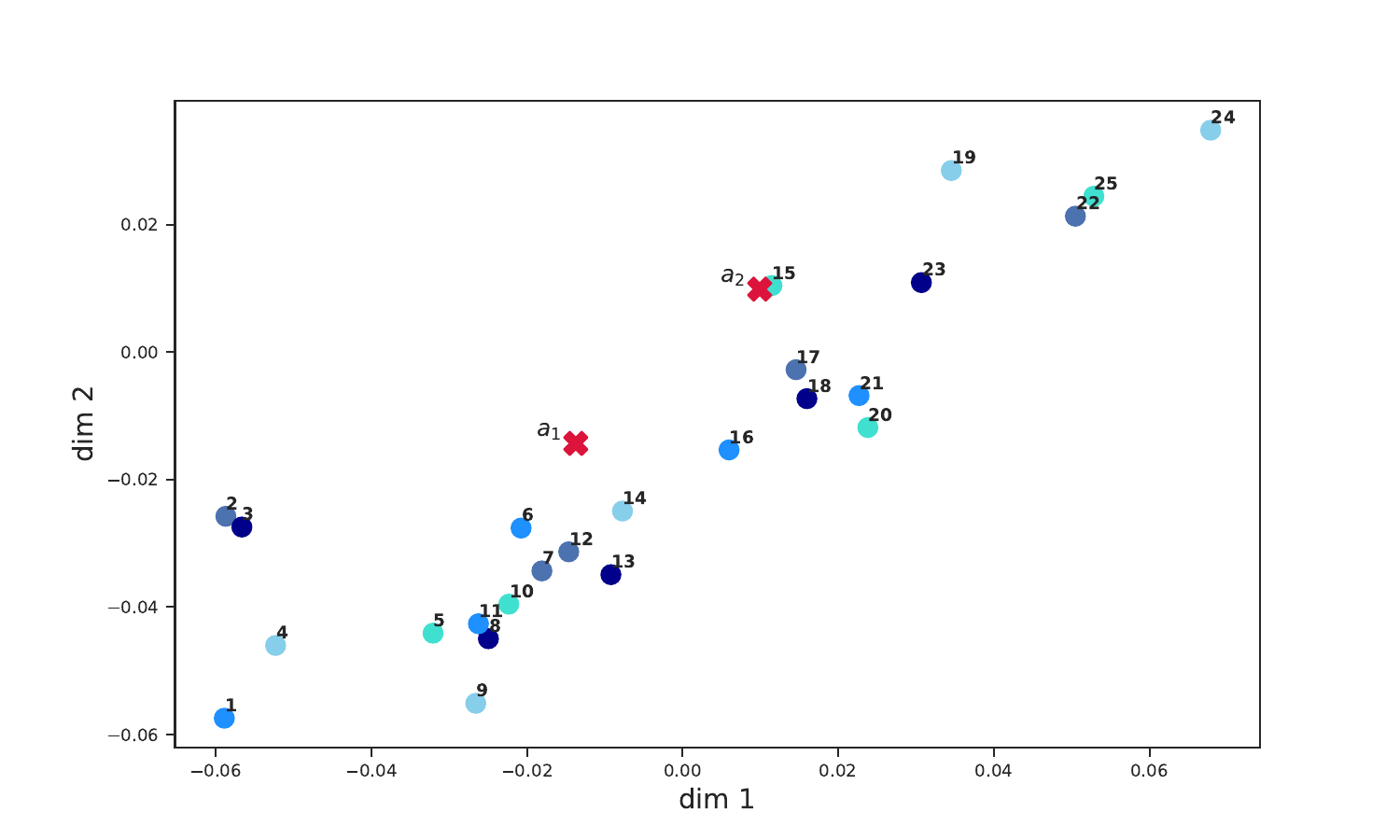}}
\caption{Representation of the first and second dimensions of the learned contexts for the SP problem. The labels 1 to 25 identify the training environments (in shades of \textcolor{blue}{blue}), while $a_1$ and $a_2$ (in \textcolor{red}{red}), indicate the adaptation environments. We observe that environments close in indices (and thus in gravity) share similar contexts along these dimensions.}
\label{fig:repr}
\end{center}
\end{figure}

\subsection{Lotka-Volterra (LV)}
\label{subsec:lv}
With dynamics that are of continued interest to many fields (epidemiology \cite{venturino1994influence}, economy \cite{wu2012grey}, etc.), the Lotka-Volterra (LV) ODE models the evolution of the concentration of prey $x$ and predators $y$ in a closed ecosystem. The behavior of the system is controlled by the prey's natural growth rate $\alpha$, the predation rate $\beta$, the predator's increase rate upon consuming prey $\delta$, and the predator's natural death rate $\gamma$ 
\begin{align}
\begin{dcases}
\frac{{\md x}}{{\md t}} &= \alpha x - \textcolor{red}{\beta} xy, \\
\frac{{\md y}}{{\md t}} &= \textcolor{red}{\delta} xy - \gamma y.
\end{dcases}
\label{eq:lotka_volterra}
\end{align}
We repeat the experiment as designed in \cite{kirchmeyer2022generalizing}. All synthetic ground truth data is generated with the two initial states both following the $\mathcal{U}(1,3)$ distribution. Once sampled, we note that the same initial condition is used to generate trajectories for all environments. The parameters that vary across training environments are $\beta \in \{0.5, 0.75, 1\}$ and $\delta \in \{0.5, 0.75, 1\}$. In each training environment, we generate 4 trajectories with a Runge-Kutta time-adaptive 4th-order scheme, while we generate 32 for In-Domain evaluation. For one-shot adaptation, we extrapolate to $\beta \in \{0.625, 1.125\}$ and $\delta \in \{0.625, 1.125\}$, with only 1 trajectory per environment, and 32 for OoD testing. The observed parameters $\alpha$ and $\gamma$ are always fixed at $0.5$.



\begin{figure}[H]
\centering
\subfigure[MAPE]{\includegraphics[width=0.4\textwidth]{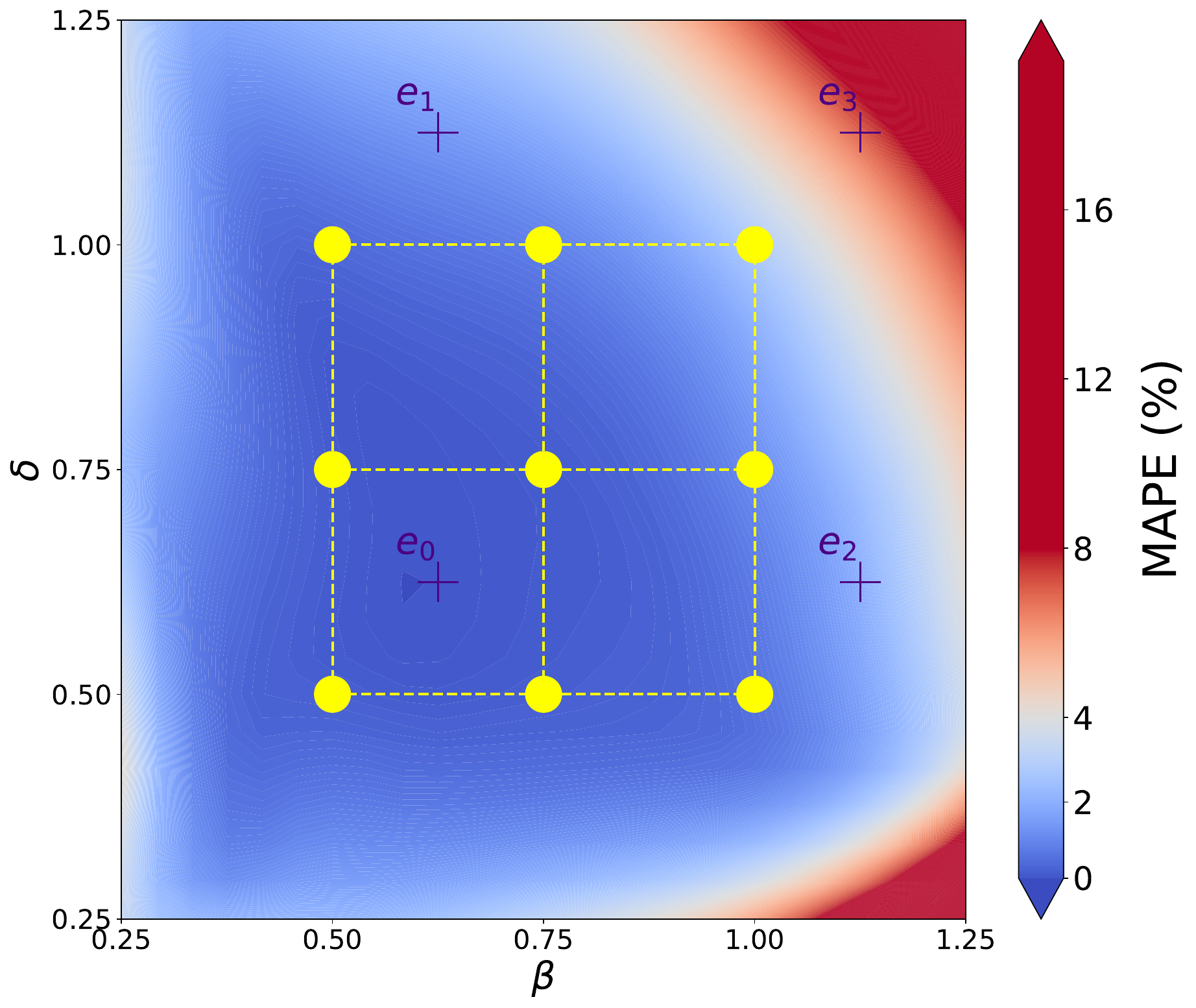}\label{fig:huge_adapt_1}}
\subfigure[MSE]{\includegraphics[width=0.417\textwidth]{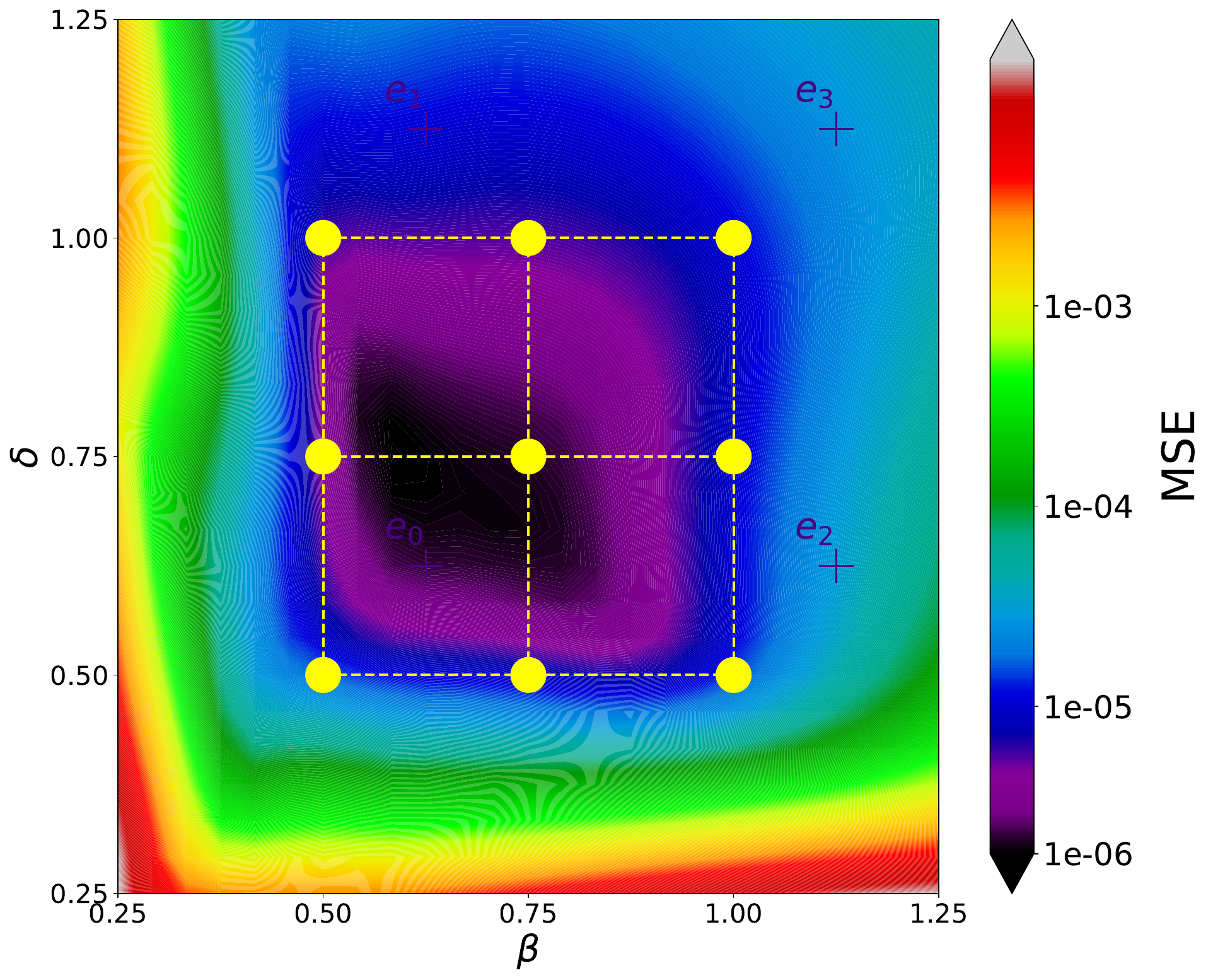}\label{fig:huge_adapt_2}}
\caption{Results for the Large adaptation of the LV problem to a grid, showcasing the MAPE and training MSE. Compared to \cref{fig:huge_adapt}, the colorbar range in (a) is shrunk to focus on the MAPEs between 0 and 8\%.}
\label{fig:huge_adapt_full}
\end{figure}

The large grid-wise adaptation experiment conducted on the LV problem in \cref{subsec:extrapolation} showed that NCFs were powerful at extrapolating. Here, through the MSE training losses in \cref{fig:huge_adapt_2}, we highlight the fact that the bottom and left edges of the grid display quite high errors. This illustrates the importance of observing several loss metrics when learning time series \cite{hewamalage2023forecast}.

\subsection{Glycolytic-Oscillator (GO)}

This equation defines the evolution of the concentration of 7 biochemical species $\{s_i\}_{i=1,2,\ldots,7}$ according to the ODE:
\begin{align*}
\frac{ds_1}{dt} &= J_0 - \textcolor{red}{k_1} \frac{s_1 s_6}{1 + (s_6/\textcolor{red}{K_1})^q} \\
\frac{ds_2}{dt} &= 2\textcolor{red}{k_1} \frac{s_1 s_6}{1 + (s_6/\textcolor{red}{K_1})^q} - k_2 s_2 (N - s_5) - k_6 s_2 s_5 \\
\frac{ds_3}{dt} &= k_2 s_2 (N - s_5) - k_3 s_3 (A - s_6) \\
\frac{ds_4}{dt} &= k_3 s_3 (A - s_6) - k_4 s_4 s_5 - \kappa (s_4 - s_7) \\
\frac{ds_5}{dt} &= k_2 s_2 (N - s_5) - k_4 s_4 s_5 - k_6 s_2 s_5 \\
\frac{ds_6}{dt} &= -2\textcolor{red}{k_1} \frac{s_1 s_6}{1 + (s_6/\textcolor{red}{K_1})^q} + 2k_3 s_3 (A - s_6) - k_5 s_6 \\
\frac{ds_7}{dt} &= \psi \kappa (s_4 - s_7) - k s_7
\end{align*}
where the parameters either vary or are set fixed as per \cref{tab:parameters}.

\begin{table}[h]
    \caption{Physical parameters and their values for the GO problem}
    \small
    \centering
    \begin{tabular}{l|cccccccccccccc}
        \toprule
        \tabhead{Parameter} & $J_0$ & $k_1$ & $k_2$ & $k_3$ & $k_4$ & $k_5$ & $k_6$ & $K_1$ & $q$ & $N$ & $A$ & $\kappa$ & $\psi$ & $k$ \\
        \midrule
        \tabhead{Value} & 2.5 & (varies) & 6 & 16 & 100 & 1.28 & 12 & (varies) & 4 & 1 & 4 & 13 & 0.1 & 1.8 \\
        \bottomrule
    \end{tabular}
    \label{tab:parameters}
\end{table}

Again, we follow the same procedure as in \cite{kirchmeyer2022generalizing} to generate trajectories for each environment. Namely, we sample initial conditions from a specific distribution \cite[Table 2]{daniels2015efficient}. We vary $k_1 \in \{100,90,80\}$ and $K_1 \in \{ 1,0.75,0.5\}$ to create 9 training environments with 32 trajectories each, both for InD training and testing. We use 1 trajectory for OoD adaptation to 4 environments defined by  $k_1 \in \{85,95\}$ and $K_1 \in \{ 0.625,0.875\}$. We use 32 trajectories for OoD evaluation. 

Intuitive post-processing of the contexts offers many insights into the meta-training process, particularly because it clusters the 9 environments into groups of 3, as illustrated in \cref{fig:gs_clusters}.   
\begin{figure}[ht]
\begin{center}
\includegraphics[width=.3\columnwidth]{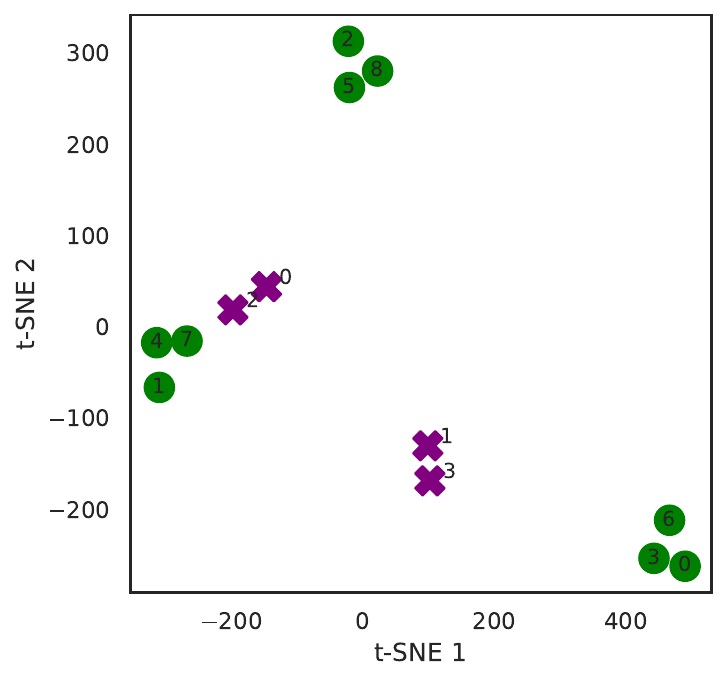}
\caption{Illustration of 256-dimensional context vectors consistently clustered with t-SNE embeddings with perplexity of 2 on the Glycolytic Oscillator (GO). The training environments are in green and labeled as their IDs, while the adaptation environments are in purple. This clustering mirrors the distribution of the system parameters.}
\label{fig:gs_clusters}
\end{center}
\vskip -0.2in
\end{figure}

\subsection{Sel'kov Model (SM)}

We introduce the Sel'kov model in dimensionless form \cite{strogatz2018nonlinear}, a highly non-linear ODE mainly studied for its application to yeast glycolysis : 
\begin{align*}
\frac{dx}{dt} &= -x + ay + x^2y \\
\frac{dy}{dt} &= \textcolor{red}{b} - ay - x^2y
\end{align*}
where $a=0.1$ and $b$ is the parameter that changes. The trajectories are generated with a time horizon of 40, with 11 regularly spaced time steps. We sample each initial condition state from the distribution $\mathcal{U}\{0,3\}$, and we observe the appearance of a limit cycle (L1), then an equilibrium point (E), and another limit cycle as $b$ changes (see \cref{fig:attractors}). 

Specifically, our 21 training environments are a union of 7 environments evenly distributed in $b \in [-1,-0.25 ] $ , then 7 evenly distributed in $b \in [-0.1,-0.1]$, and finally 7 others with $b \in [0.25,1]$. We generate 4 trajectories for training and 4 for InD testing. As for adaptation, we choose 6 environments, with $b \in \{ -1.25, -0.65, -0.05, 0.02, 0.6, 1.2 \} $ (see \cref{fig:selkov_train_adapt_vis}). We set aside 1 trajectory for adaptation and 4 for OoD testing.

In \cref{experiments}, we showed that \ttwo outperformed other adaptation rules. We note, however, that training such systems is very difficult, even with NCFs. We observed in practice that convergence is extremely sensitive to weight initialization, as the shaded regions on the loss curves of \cref{fig:selkov_train_losses} attest. The curves are complemented with \cref{fig:selkov_train_adapt_mse}, emphasizing that the model performs best in environments near the equilibrium.

\begin{figure}[ht]
\begin{center}
\includegraphics[width=.99\columnwidth]{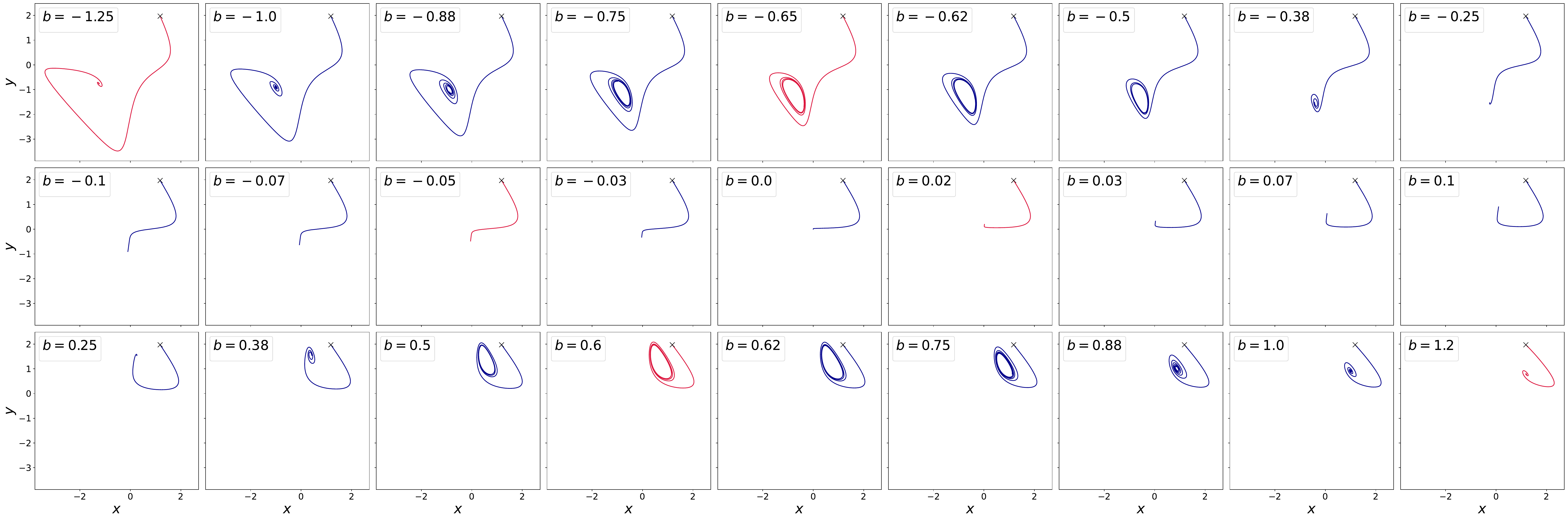}
\caption{Visualization of a trajectory with the same initial condition in various environments of the Sel'kov Model's dataset. In blue are the meta-training environments, in red the meta-testing ones. \rebut{We observe three attractors: the limit cycle (L1) along the top row, the fixed equilibrium (E) along the second row, and another limit cycle (L2) along the third row.}}
\label{fig:selkov_train_adapt_vis}
\end{center}
\vskip -0.2in
\end{figure}

\begin{figure}[ht]
\begin{center}
\includegraphics[width=.8\columnwidth]{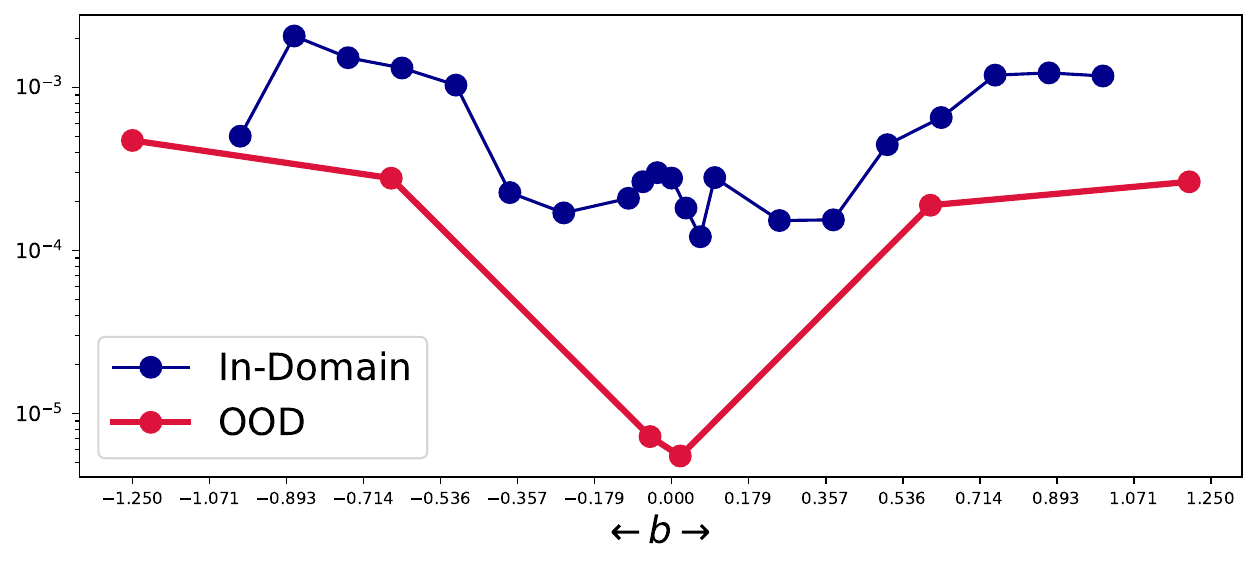}
\caption{Per-environment In-Domain and adaptation MSEs for the SM problem. In blue are the meta-training environments, in red the meta-testing ones. \rebut{Central environments in the attractor (E) are better resolved than the others.}}
\label{fig:selkov_train_adapt_mse}
\end{center}
\vskip -0.2in
\end{figure}

\subsection{Brusselator (BT)}

The Brusselator model \cite{prigogine1968symmetry} describes the reaction-diffusion dynamics of two chemical species, \(U\) and \(V\), and is given by the following system of partial differential equations (PDEs) defined on a $8 \times 8$ grid:

\begin{align*}
\frac{\partial U}{\partial t} &= D_u \Delta U + \textcolor{red}{A} - (\textcolor{red}{B} + 1)U + U^2V, \\
\frac{\partial V}{\partial t} &= D_v \Delta V + \textcolor{red}{B}U - U^2V,
\end{align*}

where:
\begin{itemize}
    \item \(U\) and \(V\) are the concentrations of the chemical reactants.
    \item \(D_u=1\) and \(D_v=0.1\) are the diffusion coefficients for \(U\) and \(V\), respectively.
    \item \(A\) and \(B\) are constants representing the rate parameters of the chemical reactions, both varying across environments.
\end{itemize}

Using an RK4 adaptive time-step solver, the system is simulated up to $T=10$ (excluded), with the step reported every time step $\Delta t =0.5$. The initial condition for each trajectory for all environments is sampled
 and broadcast as follows:
\begin{align*}
U_0 &= \bar A \\
V_0 &= \frac{\bar B}{\bar A} + 0.1 \eta_{ij}
\end{align*}
where \( \bar A \sim \mathcal{U}(0.5, 2.0) \), \( \bar B \sim \mathcal{U}(1.25, 5.0) \), and \( \eta_{ij} \sim \mathcal{N}(0, 1) \) for each grid position \((i, j)\).

We aim to keep all environments involved in this problem outside the Brusselator's oscillatory regime $B^2 > 1 + A^2$. For Meta-training, $A$ and $B$ are selected from $\{ 0.75, 1, 1.25 \} \times \{ 3.25, 3.5, 3.75 \}$ yielding 12 environments, each with 4 trajectories for training, and 32 for InD testing. For adaptation, we select 12 environments from $\{ 0.875, 1.125, 1.375 \} \times \{ 3.125, 3.375, 3.625, 3.875 \}$, with 1 trajectory for training, and 32 for OoD testing.

{\color{rebutcolor}

As observed with the metrics in \cref{tab:sota_results}, the BT dataset is one of the most challenging to learn on. Indeed, the non-Meta-Learning baselines OFA and OPE equally struggle with it. To show this, we vary the number of trajectories in each training environment between 1 and 8. Then, we plot in \cref{fig:sampleeffbt} the sample efficiency of our NCF approach against the non-Meta-Learning baselines. Like with the SP problem (cf. \cref{fig:sampleeffsp}), we observe good NCF performance when data is scarce, underlining the suitability of our approach for few-shot learning.

\begin{figure}[h]
\begin{center}
\centerline{\includegraphics[width=0.5\columnwidth]{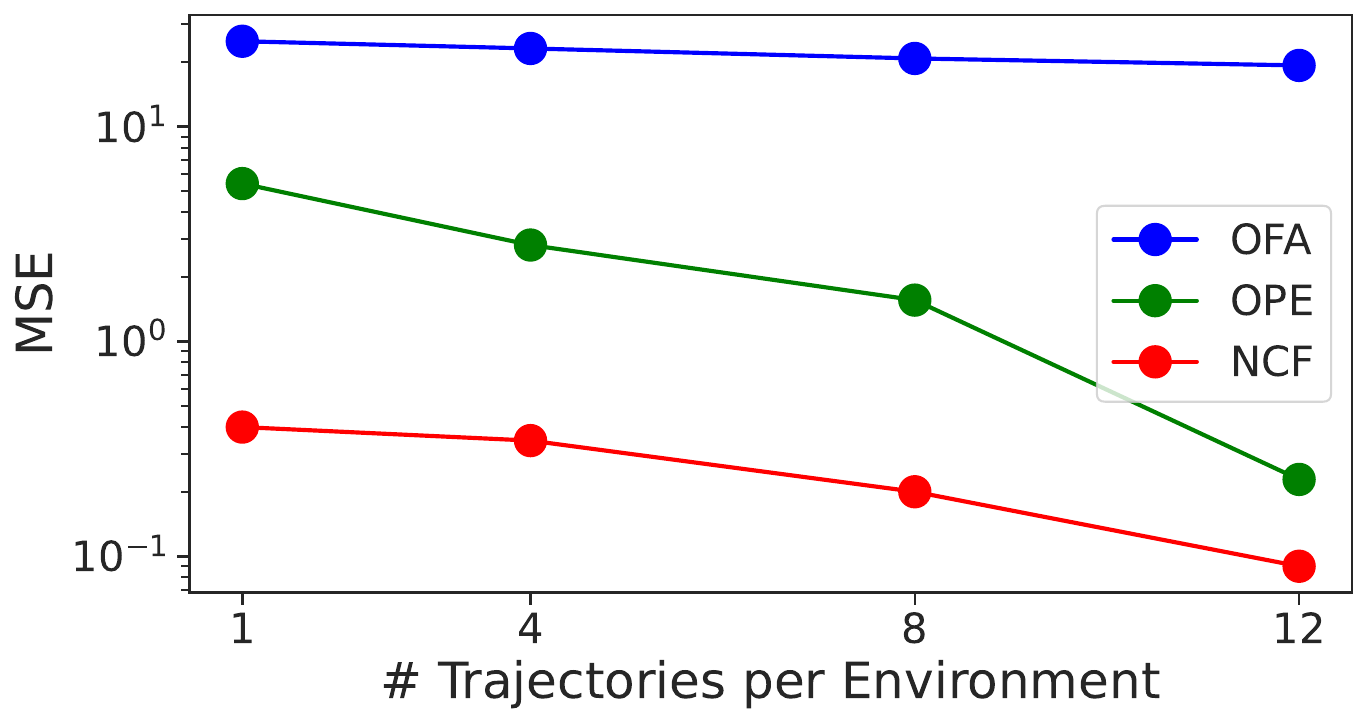}}
\caption{\rebut{In-Domain MSEs on the BT problem comparing the sample efficiency of \ttwo against OPE and OFA. NCF is effective in low-data regimes, and OPE closes the gap as data increases.}}
\label{fig:sampleeffbt}
\end{center}
\end{figure}

}

\subsection{Gray-Scott (GS)}

We aim to validate the empirical potential of our method on spatiotemporal systems beyond ODEs. We thus identify the Gray-Scott (GS) model for reaction-diffusion, a partial differential equation (PDE) defined over a spatial $32\times32$ grid:
\begin{align*}
\frac{\partial U}{\partial t} &= D_u\Delta U - UV^2 + \textcolor{red}{F}(1 - U)  \\
\frac{\partial V}{\partial t} &= D_v\Delta V + UV^2 - (\textcolor{red}{F} + \textcolor{red}{k})V
\end{align*}

Like in the Brusselator case above, $U$ and $V$ represent the concentrations of the two chemical components in the spatial domain with periodic boundary conditions. $D_u, D_v$ denote their diffusion coefficients respectively, while $F$ and $k$ are the reaction parameters. We generate trajectories on a temporal grid with $\Delta t = 40$ and temporal horizon $T = 400$.

The parameters we use to generate our environments are the reaction parameters $F$ and $k$. We consider 4 environments for meta-training: $F \in \{0.30, 0.39\}, k \in \{0.058, 0.062\}$; and 4 others for adaption: $F \in \{0.33, 0.36\}, k \in \{0.59, 0.61\}$. Other simulation parameters, as well as the initial condition-generating distribution, are inherited from \cite{kirchmeyer2022generalizing}, where we direct readers for additional details.



\subsection{Navier-Stokes (NS)}

Like the Gray-Scott case, the 2D incompressible Navier-Stokes case is inherited from \cite{kirchmeyer2022generalizing}. The PDE is defined on a $32 \times 32 $ spatial grid as:
\begin{align*}
\frac{\partial \omega}{\partial t}  &= - v \nabla \omega + \textcolor{red}{\nu} \Delta \omega + f \\
\nabla v &= 0
\end{align*}

where:
\begin{itemize}
    \item \(\omega =  \nabla \times v \) is the vorticity,
    \item $v$ is the velocity field,
    \item \(\nu\) is the kinematic viscosity.
\end{itemize}

The trajectory data from $t=0$ to $T=10$ with $\Delta t= 1$ is obtained through a custom Euler integration scheme. By varying the viscosity from $\nu \in \{ 8\cdot 10^{-4}, 9\cdot 10^{-4}, 1.0\cdot 10^{-3}, 1.1\cdot 10^{-3}, 1.2\cdot 10^{-3} \}$ we gather 5 meta-training environments, each with 16 trajectories for training and 32 for testing. Similarly, we collect 4 adaptation environments with $\nu \in \{ 8.5\cdot 10^{-4}, 9.5\cdot 10^{-4}, 1.05\cdot 10^{-3}, 1.15\cdot 10^{-3}$ each with 1 trajectory for training, and 32 for testing. Other parameters of the simulation,  as well as the initial condition generating distribution, are inherited from \cite{kirchmeyer2022generalizing}, where we encourage the readers to find more details.

\newpage
\section{Ablation Studies}
\label{app:ablation}

The ablation studies described in this section are designed to investigate how the context pool size, the context size, the pool-filling strategy, and the 3-networks architecture affect the performance of NCFs.

\subsection{Limiting the context pool size}
\label{app:contextpoolsize}

The context pool $\mathrm{P}$ from which environments $j$ are randomly sampled contributes significantly to the computational and memory complexity of the algorithm. As evidenced in \cref{alg:ncf,alg:ncf_proximal}, the computation of the loss and hence its gradient can be parallelized across the context vectors $\xi^j$. With the goal of assessing the associated computational burden, we vary the size of the pool size for the LV problem from 1 to 9, reporting the In-Domain and OoD metrics, and computational time in \cref{fig:contextpoolsize}.

\begin{figure}[H]
\centering
\subfigure[MSE Losses]{\includegraphics[width=0.395\textwidth]{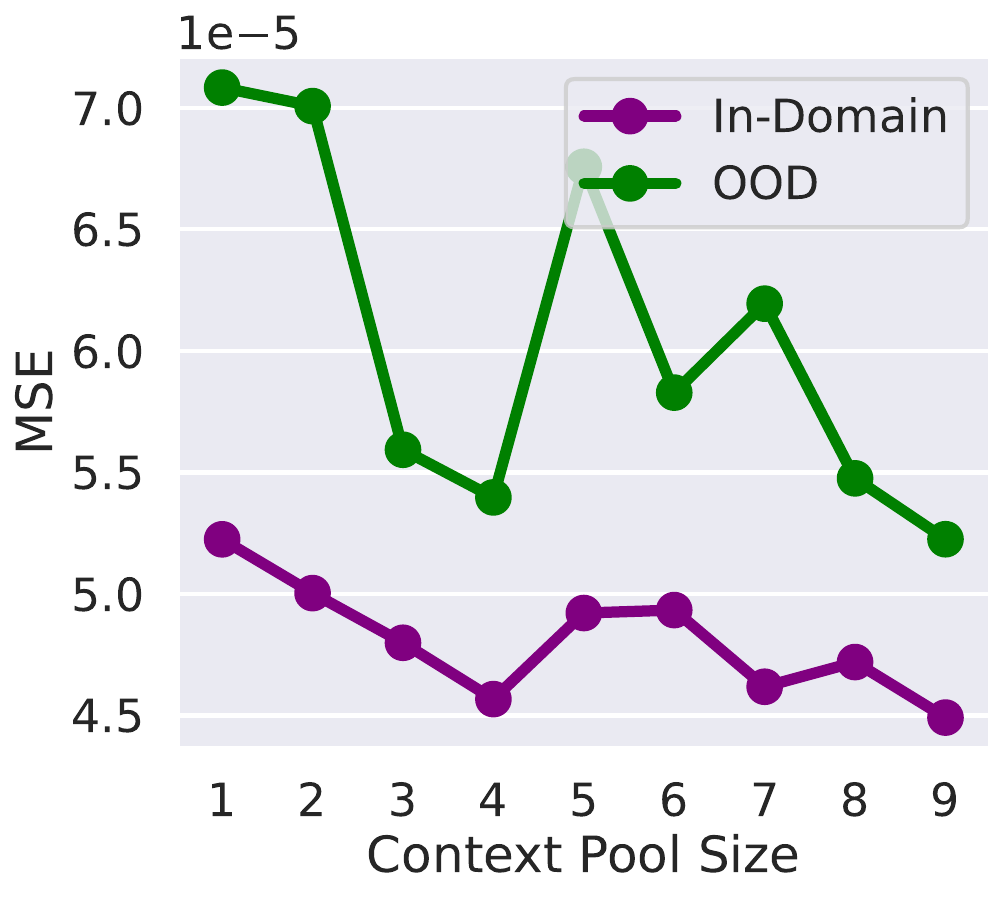}\label{fig:poolsizemsess}}
\subfigure[Training Times]{\includegraphics[width=0.4\textwidth]{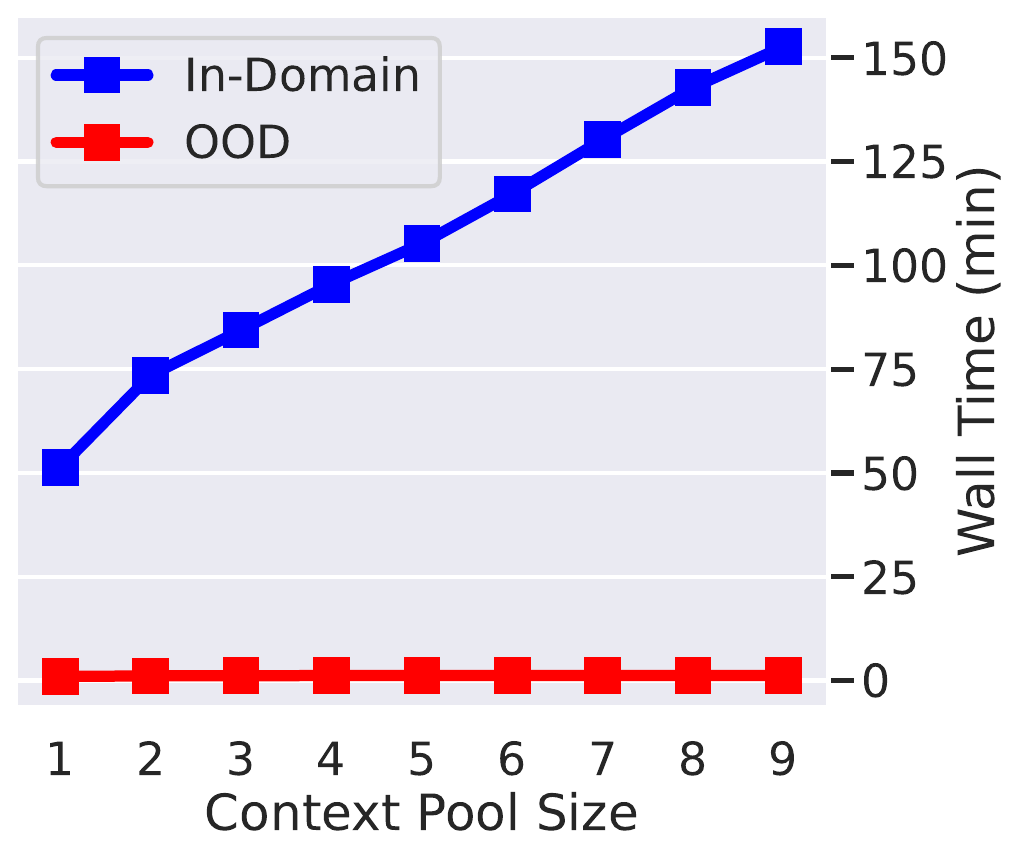}\label{fig:poolsizetimes}}
\caption{(a) In-domain and OoD MSEs, and (b) training and adaptation times when varying the context pool size on the LV problem with \ttwo. }
\label{fig:contextpoolsize}
\end{figure}

\begin{figure}[H]
\centering
\subfigure[In-Domain]{\includegraphics[width=0.48\textwidth]{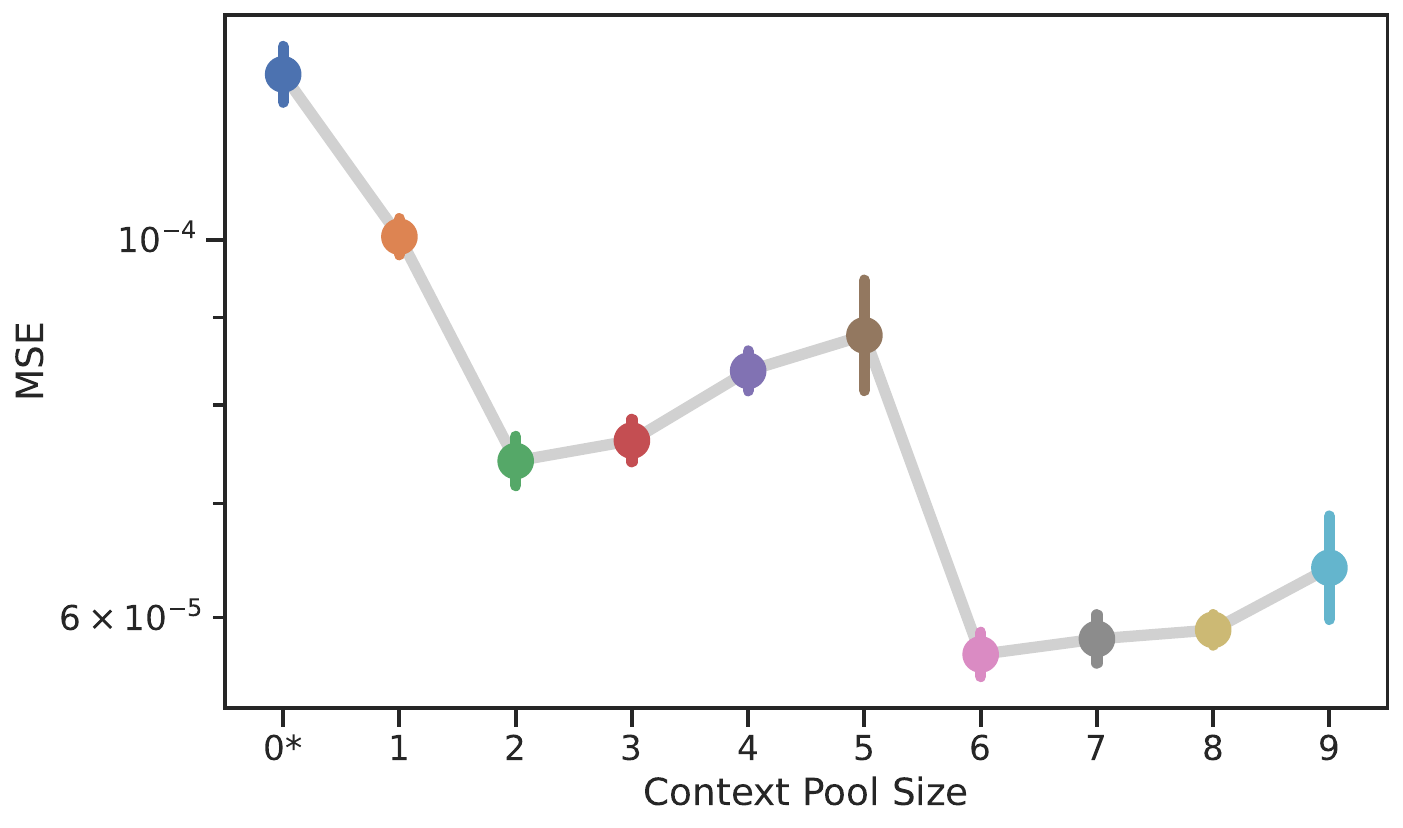}\label{fig:poolsizemsess_T1}}
\subfigure[OoD]{\includegraphics[width=0.46\textwidth]{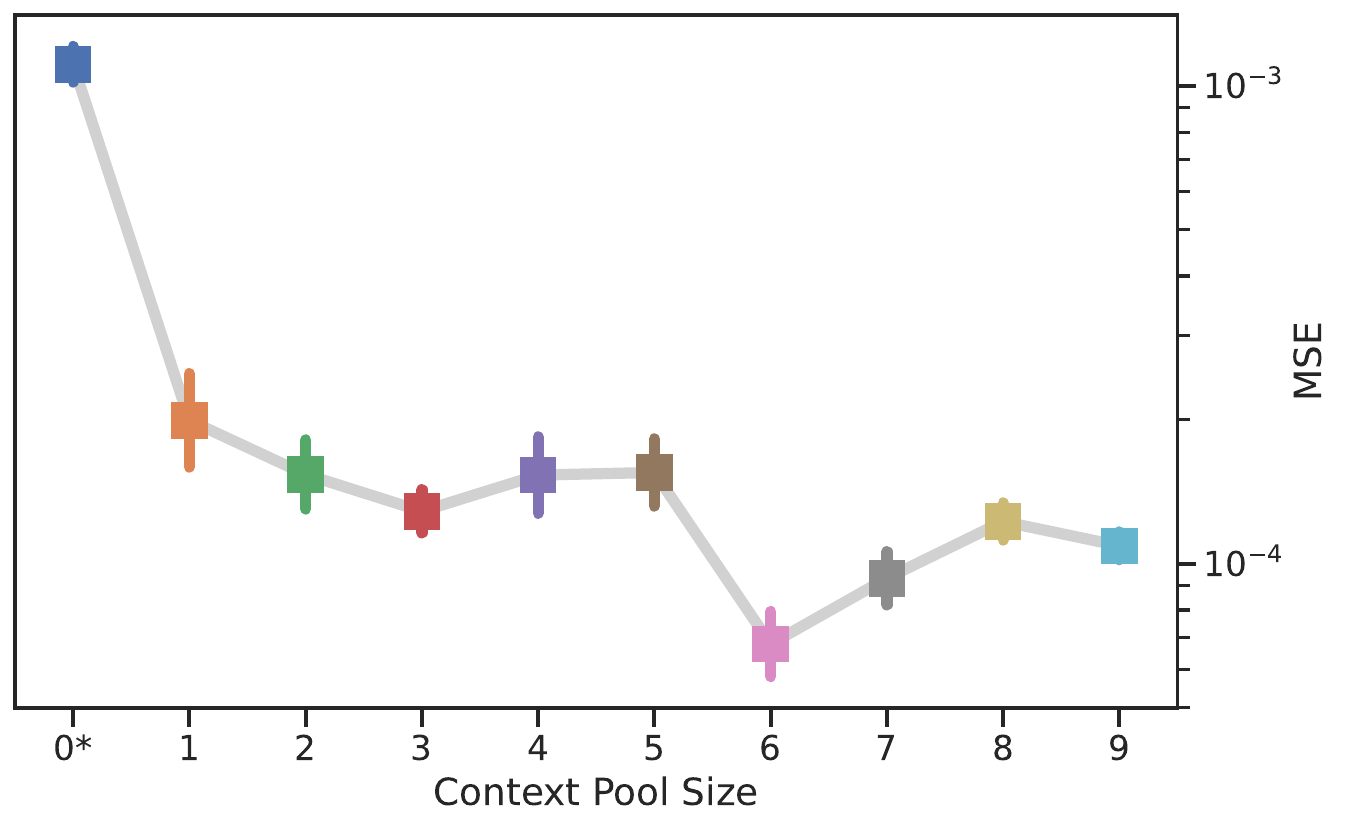}\label{fig:poolsizetimes_T1}}
\caption{Average MSEs for the ablation of the context pool size for \tone, with evaluation carried over many seeds, hence the vertical bars indicating the standard deviations.}
\label{fig:contextpoolsize_T1}
\end{figure}

We dig further into the influence of the context pool size. While its ideal value might be difficult to conclude based on \cref{fig:contextpoolsize}, \cref{fig:contextpoolsize_T1} investigates \tone to highlight how important any value bigger than 1 is (with $p=6$ proving to be exceptionally adequate). Indeed, 0* indicates no context pool was used, and the vector field is evaluated without Taylor expansion. This ablation results in a significant drop in MSE of about one order of magnitude. While computational meta-training time scales linearly with the pool size (see \cref{fig:poolsizetimes}), the various losses do not. Overall, these results suggest that the context pool size $p$ largely remains a hyperparameter that should be tuned for maximum balance of accuracy and training time.

Our motivations for the above conclusion lie in the fact that Taylor expansion is the key to information flowing from one environment to another, along with the concept of ``task relatedness'' or ``context proximity''. Indeed, depending on the pool-filling strategy used, some environments that are prohibitively far from one another may be required to interact in context space. As a result, the Taylor approximation incurs a non-reducible residual error term. To counter this effect, one could restrict the pool size to limit the impact of those far-apart environments since they will be less often sampled from the pool as the training progresses.

\subsection{Restriction of the context size}
\label{app:contextsize}

Rather than limiting the pool size, what if the context vectors themselves were limited? The latent context vectors are the building blocks of NCF, and having shown that they encode useful representations vital for downstream tasks in \cref{subsec:sp}, we now inquire as to how their size influences the overall learning performance. This further provides the opportunity to test the contextual self-modulation process. Indeed, over-parametrization should not degrade the performance of NCFs since context vectors must be automatically kept small and close to each other (in $L^1$ norm) for the Taylor approximations to be accurate.

Like in CoDA \cite{kirchmeyer2022generalizing}, the context size $d_\xi$ is directly related to the parameter count of the model; and limiting the parameter count bears practical importance for computational efficiency and interpretability. Thus, we perform the LV experiment as described in \cref{subsec:lv} with $d_\xi \in \{2, 4, 8, 16, 32, 64, 128, 256, 512, 1024\}$. The results observed in \cref{subfig:spctsize} align with our intuitive understanding of increased expressiveness with bigger latent vectors. Together, \cref{subfig:spctsize,subfig:loss_landscape} shed light on the relationship between $\xi$ and the underlying physical parameters pair $(\beta, \delta)$, suggesting that NCFs would benefit from the vast body of research in representation learning.



We appreciate that while selecting $d_{\xi}$ small is more interpretable (as can be observed by the clustering of the training environment \cref{subfig:loss_landscape}), it is important to choose $d_{\xi}$ sufficiently big. Indeed, using the JVP-based implementation we provide in \cref{app:code}, large context vectors come at a reduced extra cost in both speed and memory.

\begin{figure}[h]
\centering
\subfigure[MSE Loss]{\includegraphics[width=0.42\textwidth]{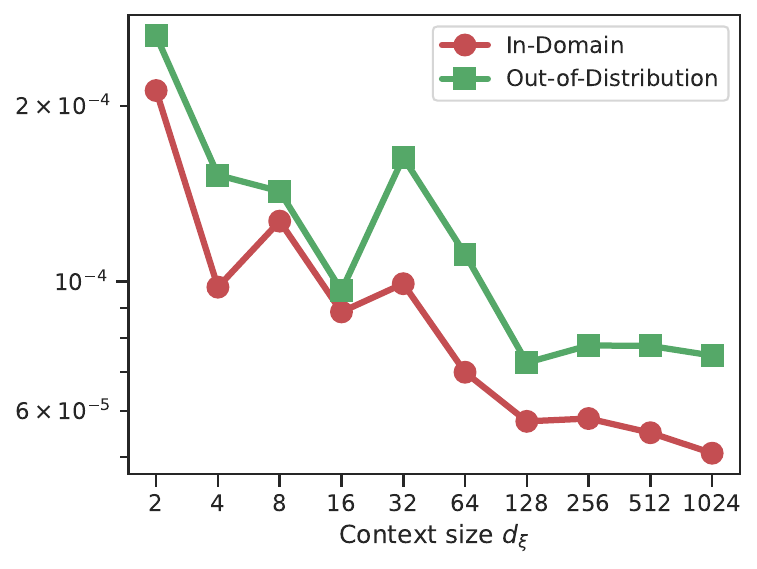}\label{subfig:spctsize}}
\subfigure[Loss landscape]{\includegraphics[width=0.43\textwidth]{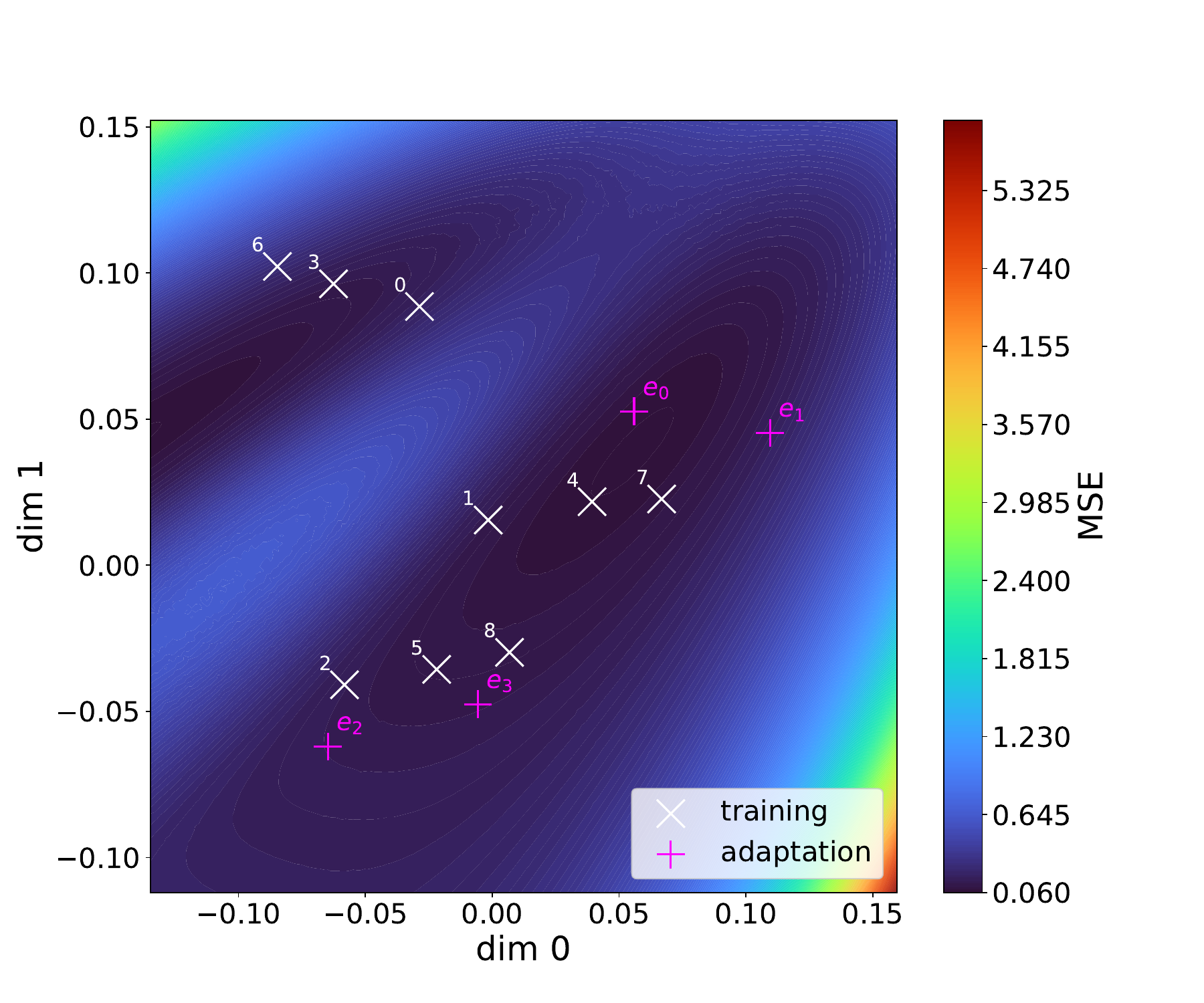}\label{subfig:loss_landscape}}
\caption{(a) InD and OoD evaluation MSEs for the LV problem as the context size is increased. (b) Loss landscape plotted in context vectors space for $d_{\xi} = 2$. The clustering of the training contexts mirrors the physical parameter pairs $(\beta, \delta)$ from \cref{fig:huge_adapt}.}
\label{fig:context_size_experiment}
\end{figure}

{\color{rebutcolor}
\subsection{Variation of the Pooling Strategy}
\label{app:pooling}

The context pool plays an important role in the NCF training process. In order to complement our comments in \cref{subsec:whatsinapool}, we test the effect of choosing various pooling strategies, namely Nearest-First (NF), Random-All (RA), and Smallest-First (SF). In this experiment, we focus on the LV case with $d_{\xi}=2$ due to its linearity and our knowledge of the interpretable behavior of its contexts (see \cref{subsec:interpret}). We train all three strategies with all hyperparameters identical (including the neural network vector field initialization). We monitor the validation loss to select the best model across all 10000 epochs. The experiment is repeated 3 times with different seeds.

\begin{figure}[h]
\centering
\includegraphics[width=.8\textwidth]{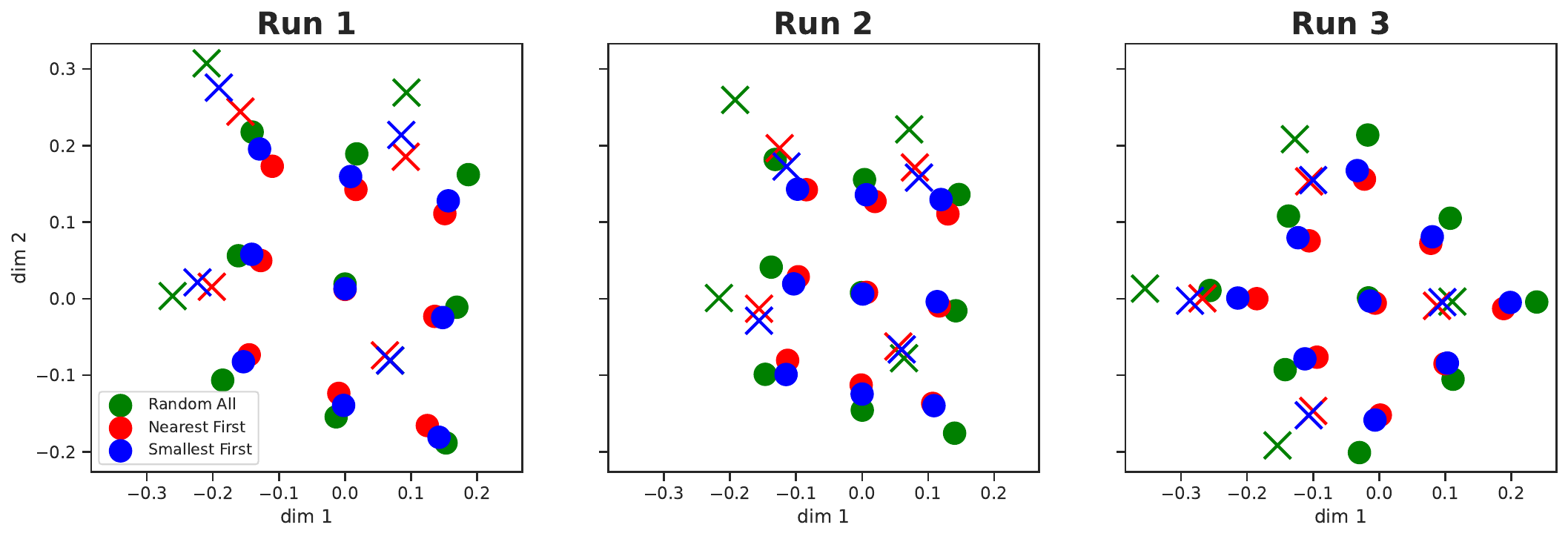}
\caption{\rebut{Both dimensions of the context vector post-training (indicated by dots), and post-adaptation (crosses) across several runs. Given the same neural network initialization, all pooling strategies are able to find the structure mimicking that of the underlying physical parameters observed in \cref{fig:interpret_ncf_coda}.}}
\label{fig:poolingcontexts}
\end{figure}

Although all strategies capture the underlying structure of the LV physical system (see \cref{fig:poolingcontexts}),
\cref{tab:poolingstrategy} indicates that RA and NF have the best performance, contrasting with high metrics and uncertainties displayed by SF. The training and validation dynamics in
\cref{fig:poolinglosses} provide clearer insight into this discrepancy, highlighting an important spread of validation values for SF. Interestingly, \cref{fig:poolinglosses} shows that RA takes much longer to converge during training, despite ultimately producing the best OoD performance. This slow convergence is because very early on, the model is still forced to find commonalities between \emph{all} pairs of environments, despite some pairs being less related than others. All in all, NF is the most balanced strategy on the LV problem since it converges fast, shows signs of increased stability, and produces excellent results InD and OoD.

\begin{table}[h]
\caption{\rebut{Comparison of pooling strategies on the LV problem. The reported mean and standard deviation of training and (sequential) adaptation times are expressed in minutes. All strategies are trained for the same number of epochs across 3 separate runs with identical hyperparameters.}}
\label{tab:poolingstrategy}
\begin{center}
\begin{small}
\begin{tabular}{lccc}
\toprule
& \textsc{\textbf{Random-All (RA)}} & \textsc{\textbf{Nearest-First (NF)}} & \textsc{\textbf{Smallest-First (NF)}}     \\
\midrule
\textsc{Train Time} (mins)  & 22.90 $\pm$ 0.04 & 22.82 $\pm$ 0.03 &  22.94 $\pm$ 0.04  \\
\textsc{Adapt Time} (mins)  & 2.80 $\pm$ 0.01 & 2.80 $\pm$ 0.01 &  2.80 $\pm$ 0.01  \\
\textsc{InD MSE} ($\times  10^{-4}$) & 2.65 $\pm$ 0.16 & 2.52 $\pm$ 0.66 & 5.03 $\pm$ 3.01   \\
\textsc{OoD MSE} ($\times  10^{-4}$) & 2.38 $\pm$ 0.43 & 2.69$\pm$0.35 & 3.34 $\pm$ 0.96   \\
\bottomrule
\end{tabular}
\end{small}
\end{center}
\end{table}

\begin{figure}[h]
\centering
\includegraphics[width=.8\textwidth]{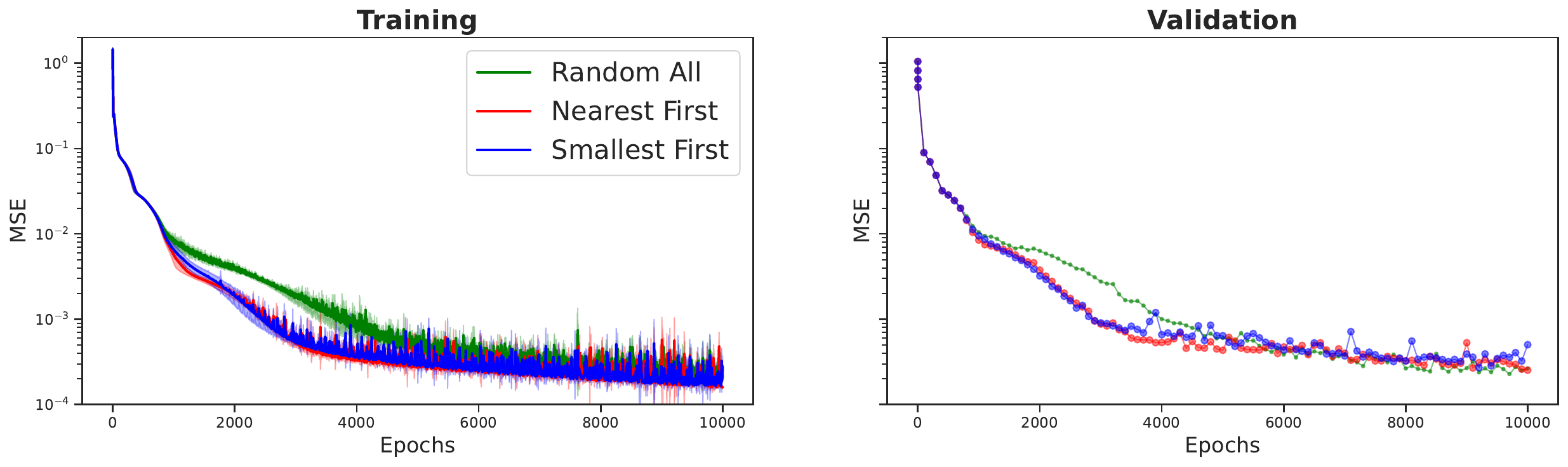}
\caption{\rebut{Loss curves with varying pooling strategies on the LV problem. (Left) Training; (Right) Mean validation losses across 3 runs, color-coded to match training curve labels.}}
\label{fig:poolinglosses}
\end{figure}

}

\subsection{Ablation of the 3-Networks Architecture}
\label{app:3networks}

Another key element of the NCF framework is the 3-networks architecture described in \cref{fig:method}b. In it, the vector field consists of state and context-specific networks that help bring the inputs into the same latent representational space before passing to another network to approximate the local derivative. Like the context size, its ablation directly contributes to a reduction in learnable parameter count.

For this experiment, we consider \tone. We remove the data- and context-specific neural networks in the vector field, and we directly concatenate $x^e_i(t)$, and $\xi$; the result of which is passed to a (single) neural network. In this regard, NCF* (which designates NCF when its 3-networks architecture is removed) is analogous to CAVIA \cite{zintgraf2019fast}. That said, NCF still benefits from the flow of contextual information, which allows for self-modulation.

\begin{figure}[H]
\begin{center}
\centerline{\includegraphics[width=0.6\columnwidth]{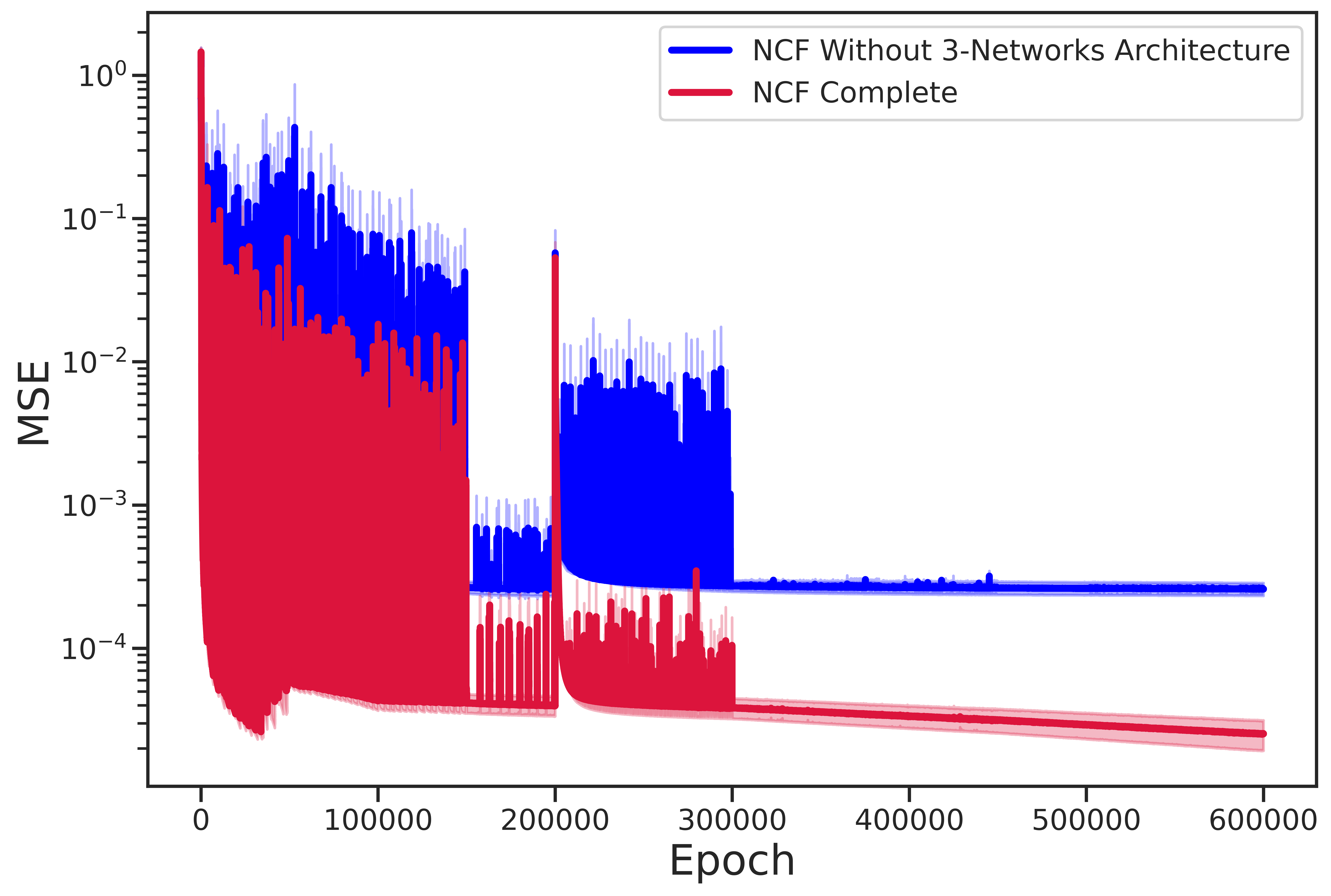}}
\caption{MSE loss curves when training the LV problem on a complete NCF, and on NCF*, the NCF variant deprived of the 3-networks architecture.}
\label{fig:losses_ablation}
\end{center}
\end{figure}


\begin{table*}
\caption{MSE upon ablation of the 3-networks architecture (NCF*) on both SP and LV problems, highlighting difficulties adapting to new environments.}
\label{tab:network_ablation}
\begin{center}
\begin{small}
\begin{sc}
\begin{tabular}{lcc|cc}
    \toprule
    & \multicolumn{2}{c}{SP ($\times 10^{-2}$)} & \multicolumn{2}{c}{LV ($\times 10^{-5}$)} \\
    & In-Domain & Adaptation & In-Domain & Adaptation \\
    \midrule
    NCF     & 11.0 $\pm$ 0.3 & 0.0003 $\pm$ 0.00001 & 6.73 $\pm$ 0.87  & 7.92  $\pm$ 1.04 \\
    NCF*    & 79.8 $\pm$ 0.8  & 1.6402  $\pm$ 0.7539 & 11.03 $\pm$ 1.1  & 69.98 $\pm$ 84.36 \\
    \bottomrule
\end{tabular}
\end{sc}
\end{small}
\end{center}
\end{table*}

This study is of particular significance since, if in addition to foregoing the 3-networks architecture, we eliminated the Taylor expansion step, NCF would turn into the data-controlled neural ODE \cite{massaroli2020dissecting}. So we run both dynamics forecasting problems without the 3-networks architecture, and we report the training MSE in \cref{fig:losses_ablation}. In-domain and adaptation MSEs for both LV and SP problems are reported in \cref{tab:network_ablation}.

While \cref{fig:losses_ablation} highlights a performance discrepancy of nearly 1 order of magnitude during training, the key insight is hidden in the adaptation columns of \cref{tab:network_ablation}. Indeed, the removal of the data- and context-specific networks in the vector field considerably restricts the model's ability to generalize to unseen environments, both for the SP and LV problems. \rebut{We observed in our experiments, similar performance drops when only one of the two state or context networks was removed.} This consolidates the 3-networks architecture as an essential piece of the NCF framework.











\newpage
\section{Experimental Details}
\label{app:detailes}

This section shares crucial details that went into the experiments conducted in \cref{experiments} and expanded upon in \cref{app:additionalresults,app:ablation}. Unless otherwise specified, the hyperparameters in the next 3 paragraphs were applied to produce all figures and tables in this work. 

Each method was tuned to provide an excellent balance of accuracy and efficiency. The main motivation behind our choices was to restrain model sizes both to allow efficient training of all baselines and fair comparison at roughly equal parameter counts. Irreconcilable differences with the baseline adaptation rules made it difficult to perform a systematic comparison, which is why we focused on parameter count as done in \cite{qingeneralizing}; all the while noting that all models involved had sufficient capacity to approximate the problems at hand. For NCF and CAVIA, we counted the number of learnable parameters in the main network being modulated. For CoDA, we counted the root network plus the hypernetwork's learnable parameters as outlined in \cref{eq:coda}.

Our main workstation for training and adaptation was fitted with an Nvidia GeForce RTX 4080 graphics card which was used for the SP, LV, SM, and NS problems. Additionally, we used an RTX 3090 GPU of the same generation for the GO problem, and an NVIDIA A100 Tensor Core GPU for the GS problem, as its CNN-based architecture was more memory-intensive. In addition to the training hardware and deep learning frameworks (JAX for NCFs, and PyTorch for CAVIA and CoDA) that made apple-to-apple comparison challenging, our network architectures varied slightly across methods, as the subsections below highlight. This is part of the reason we decided to focus on total parameter count as the great equalizer. 

To ensure fair comparison, we made sure each model was sufficiently large to represent the task at hand (by comparing to commonly used hyperparameters in the literature, e.g., in \cite{kirchmeyer2022generalizing}). Since an ``epoch'' meant something different for the frameworks under comparison, we simply made sure the models were trained for sufficiently long, i.e., the lowest-performing method for each task was trained for at least as long as the second-best for that task, and its checkpoint with the best validation error was restored at the end.

\subsection{NCFs}

\paragraph{Model architecture} The MLPs used as our context, state and main networks (as depicted in \cref{fig:method}b) typically had a uniform width. When that was not the case, we use the notation $[h_{\text{in}} \rightarrow h_1 \rightarrow ... \rightarrow h_{d-1} \rightarrow h_{\text{out}}]$ to summarize the number of units in each layer of an MLP of depth $d$ for brevity. We used a similar notation to summarize the number of channels in a Convolutional Neural Network (CNN). For instance, on the \textbf{SP}\xspace problem, the context network was the MLP $[256 \rightarrow 64 \rightarrow 64 \rightarrow 64 ]$ (which indicates a context size of $d_{\xi}=256$), the state network was $[2 \rightarrow 64 \rightarrow 64 \rightarrow 64 ]$, and the main network was $[128 \rightarrow 64 \rightarrow 64 \rightarrow 64 \rightarrow 2 ]$, all with Swish activations \cite{ramachandran2017swish}. For the OFA and OPE experiments that do not require contextual information (cf. \cref{tab:pendulum,tab:pendulum_times}), we delete the state and context networks then increase the hidden units of the main network to $156$ to match the NCF parameter count. Below, we follow the same convention to detail the networks used for other problems in this paper's main comparison (see \cref{tab:sota_results}).

\textbullet{} \lv (context) $[1024 \rightarrow 256 \rightarrow 256 \rightarrow 64 ]$; (state) $[2 \rightarrow 64 \rightarrow 64 \rightarrow 64 ]$; (main) $[128 \rightarrow 64 \rightarrow 64 \rightarrow 64 \rightarrow 2 ]$ \textbullet{} \go (context) $[256 \rightarrow 64 \rightarrow 64 \rightarrow 122 ]$; (state) $[7 \rightarrow 122 \rightarrow 122 \rightarrow 122 ]$; (main) $[244 \rightarrow 64 \rightarrow 64 \rightarrow 64 \rightarrow 7 ]$  \textbullet{} \sm (context, state, main) Exact same architectures as with SP above \textbullet{} \bt (context) linear layer with 256 input and 128 output features followed by an appropriate reshaping then a 2D convolution with 8 channels and circular padding to maintain image size, a kernel of size $3\times 3$, and Swish activation\footnote{Except for the number of channels, these were default convolution settings we used throughout. The circular padding enforced the periodic boundary condition observed in the trajectory data.}; (state) appropriate reshaping before 2D convolution with 8 channels; (main) CNN $[16 \rightarrow 64 \rightarrow 64 \rightarrow 64 \rightarrow 2 ]$  \textbullet{} \gs Same network architectures as for the previous BT problem, with the exception that the linear layer in the context network outputted 2048 features \textbullet{} \ns (context) A single linear layer with 202 input and 1024 output features reshaped and concatenated to the state and the grid coordinates before processing; (state) No state network was used for this problem; (main) 2-dimensional Fourier Neural Operator (FNO) \cite{li2020fourier} with 4 spectral convolution layers, each with 8 frequency modes and hidden layers of width 10. Its lifting operator was a $1\times 1$ convolution with 4 input and 10 output channels respectively -- functionally similar to a fully connected layer with weight sharing. The projection operator had two such layers, with 10 inputs, 1 output, and 16 hidden channels in-between.

\paragraph{Training hyperparameters}
As for the propagation of gradients from the loss function to the learnable parameters in the right-hand side of the neural ODEs (\ref{eq:method}), we opted to differentiate through our solvers, rather than numerically solving for adjoint states \cite{chen2018neural}. Since our main solvers were the less demanding RK4 and Dopri5 solvers, we didn't incur the heavy memory cost that Differentiable Programming methods are typically known for \cite{nzoyem2023comparison, kidger2020neural}.

Specifically, we used the following integration schemes: Dopri5 \cite{wanner1996solving} for GO, SM, GS, and BT, with relative and absolute tolerances of $10^{-3}$ and $10^{-6}$ respectively; RK4 for LV with $\Delta t = 0.1$; Explicit Euler for NS with $\Delta t = 1$. Across all NCF variants, we kept the context pool size under 4 to reduce computational workload ($p=2$ for LV and SM, and $p=4$ for LV and GO, and $p=3$ for all PDE problems). We used the Adam optimizers \cite{diederik2014adam} for both model weights and contexts on ODE problems, and Adabelief \cite{zhuang2020adabelief} for PDE problems. Their initial learning rates were as follows: $3\times10^{-4}$ for LV and NS, $10^{-3}$ for GO, BT, and GS, $10^{-4}$ for SM. That learning rate was kept constant throughout the various trainings, except for BT, GS, and NS where it was multiplied by a factor (0.1, 0.5, and 0.1, respectively) after a third of the total number of training steps, and again by the same factor at two-thirds completion. The same initial learning rates were used during adaptation, with the number of iterations typically set to 1500.

As per \cref{eq:bigloss}, the NCF implementation was batched across both environments and trajectories for distributed and faster training. This was in contrast to the two baselines, in which predictions were only batched across trajectories. Using full batches to accelerate training meant that with LV for instance, all 4, 32, or 1 trajectories were used at once depending on the data split (meta-training vs. meta-testing, support vs. query). For regularization of the loss function \cref{eq:bigloss}, we set $\lambda_1 = 10^{-3}$ for all problems; but $\lambda_2 = 0$ for ODE problems and $\lambda_2 = 10^{-3}$ for PDE problems. For \ttwo we always used a proximal coefficient $\beta=10$ except for the LV where $\beta=100$. 

As for the simple pendulum SP problem used in this work, it leveraged \tone with $p=4$ and $d_{\xi}=256$. The integrator was Dopri5 with default tolerances, a proximal coefficient $\beta=10^2$, a constant learning rate of $10^{-4}$, and 12000 epochs. The OFA and OPE comparisons used the same hyperparameters, but with 6000 epochs for OPE and 2000 for OFA. For the other problems highlighted in \cref{tab:sota_results}, the remaining crucial hyperparameters (not stated above) are defined in \cref{tab:hpsncf1,tab:hpsncf2}.

\begin{table}[h!]
\centering
\caption{Hyperparameters for \tone }
\label{tab:hpsncf1}
\begin{tabular}{>{\bfseries}l*{6}{>{\centering\arraybackslash}p{1.3cm}}}
\toprule
\textbf{Hyperparameters} & \textbf{LV} & \textbf{GO} & \textbf{SM} & \textbf{BT} & \textbf{GS} & \textbf{NS} \\
\midrule
Context size $d_\xi$       & 1024      &  256   &  256   &  256  &  256  &  202  \\
Pool-filling strategy              &  NF   &  RA   &  RA  &  NF  & RA  & NF \\
\# Epochs        &  10000  &  2400  &  24000  &  10000  &  10000 & 5000 \\
\bottomrule
\end{tabular}
\label{tab:hyperparameters_ncf1}
\end{table}

\begin{table}[h!]
\centering
\caption{Hyperparameters for \ttwo}
\label{tab:hpsncf2}
\begin{tabular}{>{\bfseries}l*{6}{>{\centering\arraybackslash}p{1.3cm}}}
\toprule
\textbf{Hyperparameters} & \textbf{LV} & \textbf{GO} & \textbf{SM} & \textbf{BT} & \textbf{GS} & \textbf{NS} \\
\midrule
Context size $d_\xi$        & 1024    &  256   &  256   &  256  &  256  &  202  \\
Pool-filling strategy             &  NF   &  NF   &  RA  &  NF  & RA  & NF \\
\# Inner iterations\footnotemark        &  25  &  10  &  10  &  20  &  20  & 25 \\
\# Outer iterations        &  250  &  1000  &  1500  &  1000  & 700 & 250 \\
\bottomrule
\end{tabular}
\label{tab:hyperparameters_ncf2}
\end{table}

\footnotetext{The inner iterations used in \ttwo are not to be confused with the inner gradient update steps in CAVIA, since NCF performs alternating rather than bi-level optimization.}

\subsection{CoDA}

The reference CoDA implementation from \cite{kirchmeyer2022generalizing} was readily usable for most problems. We adapted its data generation process to incorporate two new benchmarks introduced in this paper (the SM and BT problems). 

\paragraph{Model architecture} Neural networks without physics priors were employed throughout, all with the Swish activation function. \textbullet{} \lv: 4-layers MLP with 224 neurons per hidden layers (width) \textbullet{} \go: 4-layers MLP with width 146 \textbullet{} \sm:  4-layers MLP with width 90 \textbullet{} \bt: 4-layers ConvNet with 46 hidden convolutional filters and $3\times3$ kernels. Its output was rescaled by $10^{-4}$ to stabilize rollouts \textbullet{} \gs: Same ConvNet as BT, but with 106 filters. \textbullet{} \ns: 2-dimensional Fourier Neural Operator (FNO) \cite{li2020fourier} with 4 spectral convolution layers, each with 8 frequency modes and hidden layers of width 10. Its lifting operator was a single fully connected layer, while its projection operator had two such layers, with 16 hidden neurons.

\paragraph{Training hyperparameters} Using \texttt{TorchDiffEq} \cite{torchdiffeq}, CoDA backpropagates gradients through the numerical integrator. We used the same integration schemes as NCF above. We stabilized its training by applying exponential Scheduled Sampling \cite{bengio2015scheduled} with constant $k=0.99$ and initial gain $\epsilon_0=0.99$ updated every 10 epochs (except for the \ns problem updated every 15 epochs). The Adam optimizer with a constant learning rate of $10^{-4}$ was used. The batch size and the number of epochs are given in the \cref{tab:hpscoda}.

\begin{table}[h!]
\centering
\caption{Hyperparameters for CoDA}
\label{tab:hpscoda}
\begin{tabular}{>{\bfseries}l*{6}{>{\centering\arraybackslash}p{1.3cm}}}
\toprule
\textbf{Hyperparameters} & \textbf{LV} & \textbf{GO} & \textbf{SM} & \textbf{BT} & \textbf{GS} & \textbf{NS} \\
\midrule
Context size $d_{\xi}$              & 2    & 2    & 2   & 2  & 2   & 2 \\
Minibatch size              & 4    & 32    & 4   & 1  & 1   & 16 \\
\# Epochs        & 40000   & 40000   & 12000   & 10000 & 120000     & 30000\\
\bottomrule
\end{tabular}
\label{tab:hyperparameters_coda}
\end{table}

We note that in line with CoDA's fundamental low-rank assumption, we maintained a context size $d_\xi = 2$ throughout this work since no more than 2 parameters varied for any given problem.

\subsection{CAVIA}

The reference implementation of CAVIA-Concat \cite{zintgraf2019fast} required substantial modifications to fit dynamical systems. Importantly, we incorporated the \texttt{TorchDiffEq} \cite{torchdiffeq} open-source package and we adjusted other hyperparameters accordingly to match NCF and CoDA on parameter count and other relevant aspects for fair comparison. The performance of CAVIA-Concat was heavily dependent on the context size, which we adjusted depending on the problem.

\paragraph{Model architecture} Like CoDA, the Swish activation was used for all neural networks \textbullet{} \lv: 4-layers MLP with width 278 \textbullet{} \go: 4-layers MLP with width 168 \textbullet{} \sm: 4-layers MLP with width 84  \textbullet{} \bt: 7-layers ConvNet with 46 hidden convolutional filters and $3\times3$ kernels. Its outputs was rescaled by $0.1$ to stabilize rollouts \textbullet{} \gs: 4-layers ConvNet with 184 hidden convolutional filters and $3\times3$ kernels. Its outputs was rescaled by $0.1$ to stabilize rollouts. \textbullet{} \ns: Same FNO as with NCF above, including the single-layer context network with 1024 output neurons.

\paragraph{Training hyperparameters} We used full batches to accelerate training. We use the Adam optimizer with a learning rate of 0.001 for the meta-update step. The single inner update step had a learning rate of 0.1. The number of iterations is given in \cref{tab:hpscavia}.

\begin{table}[h!]
\centering
\caption{Hyperparameters for CAVIA}
\label{tab:hpscavia}
\begin{tabular}{>{\bfseries}l*{6}{>{\centering\arraybackslash}p{1.3cm}}}
\toprule
\textbf{Hyperparameters} & \textbf{LV} & \textbf{GO} & \textbf{SM} & \textbf{BT} & \textbf{GS} & \textbf{NS} \\
\midrule
Context size $d_\xi$              &   1024  &  256   &  256  & 64  &  1024  & 202 \\
\# Iterations        &  200  &  100  &  1200  & 500  &  50  & 1200 \\
\bottomrule
\end{tabular}
\label{tab:hyperparameters_cavia}
\end{table}

\newpage
\section{Example Implementation of NCFs}
\label{app:code}

A highly performant JAX implementation \cite{jax2018github} of our algorithms is available at \url{https://github.com/ddrous/ncflow} \footnote{A PyTorch codebase is equally made available at \url{https://github.com/ddrous/ncflow-torch}.}. We provide below a few central pieces of our codebase using the ever-growing JAX ecosystem, in particular Optax \cite{deepmind2020jax} for optimization, and Equinox \cite{kidger2021equinox} for neural network definition.

\paragraph{Vector field}

The vector field takes center stage when modeling using ODEs. It is critical to use JVPs, since we never want to materialize the Jacobian (the context size can be prohibitively large). The Jacobian-Vector Product primitive from Equinox \cite{kidger2021equinox} \texttt{filter\_jvp} as illustrated below, coupled with \texttt{jit}-compilation, enabled fast runtimes for all problems. 

{
\tiny

\begin{lstlisting}[language=Python, caption=Second-order Taylor expansion in the NCF vector field]
class ContextFlowVectorField(eqx.Module):
    physics: eqx.Module
    augmentation: eqx.Module

    def __init__(self, augmentation, physics=None):
        self.augmentation = augmentation
        self.physics = physics

    def __call__(self, t, x, ctxs):
        ctx, ctx_ = ctxs
        # ctx = \xi^e, while ctx_ = \xi^j

        if self.physics is None:
            vf = lambda xi: self.augmentation(t, x, xi)
        else:
            vf = lambda xi: self.physics(t, x, xi) + self.augmentation(t, x, xi)

        gradvf = lambda xi_: eqx.filter_jvp(vf, (xi_,), (ctx-xi_,))[1]
        scd_order_term = eqx.filter_jvp(gradvf, (ctx_,), (ctx-ctx_,))[1]

        return vf(ctx_) + 1.5*gradvf(ctx_) + 0.5*scd_order_term
\end{lstlisting}
}

\paragraph{Loss function} \cref{eq:bigloss} provides a structured summation layout particularly suited for a function transformation like JAX's \texttt{vmap}. During implementation, the loss functions can be defined in two stages, the innermost summation term along with \cref{eq:loss} making up the first stage; while the two outermost summations of \eqref{eq:bigloss} make up the second stage. Once the first stage is complete, the second is relatively easy to implement. We provide a vectorized implementation of the first stage below.

\begin{lstlisting}[language=Python, caption=Inner NCF loss function with vectorization support]
def loss_fn_ctx(model, trajs, t_eval, ctx, all_ctx_s, key):
    """ Inner loss function Eq. 9 """

    ind = jax.random.permutation(key, all_ctx_s.shape[0])[:context_pool_size]
    ctx_s = all_ctx_s[ind, :]                   # construction of the context pool P

    batched_model = jax.vmap(model, in_axes=(None, None, None, 0))

    trajs_hat, nb_steps = batched_model(trajs[:, 0, :], t_eval, ctx, ctx_s)
    new_trajs = jnp.broadcast_to(trajs, trajs_hat.shape)

    term1 = jnp.mean((new_trajs-trajs_hat)**2)  # reconstruction error

    term2 = jnp.mean(jnp.abs(ctx))              # context regularisation

    term3 = params_norm_squared(model)          # weights regularisation

    loss_val = term1 + 1e-3*term2 + 1e-3*term3

    return loss_val
\end{lstlisting}



\newpage
\section{Trajectories Visualization}

\begin{figure}[H]
\begin{center}
\centerline{\includegraphics[width=.35\columnwidth,trim=0mm 3mm 0mm 0mm]{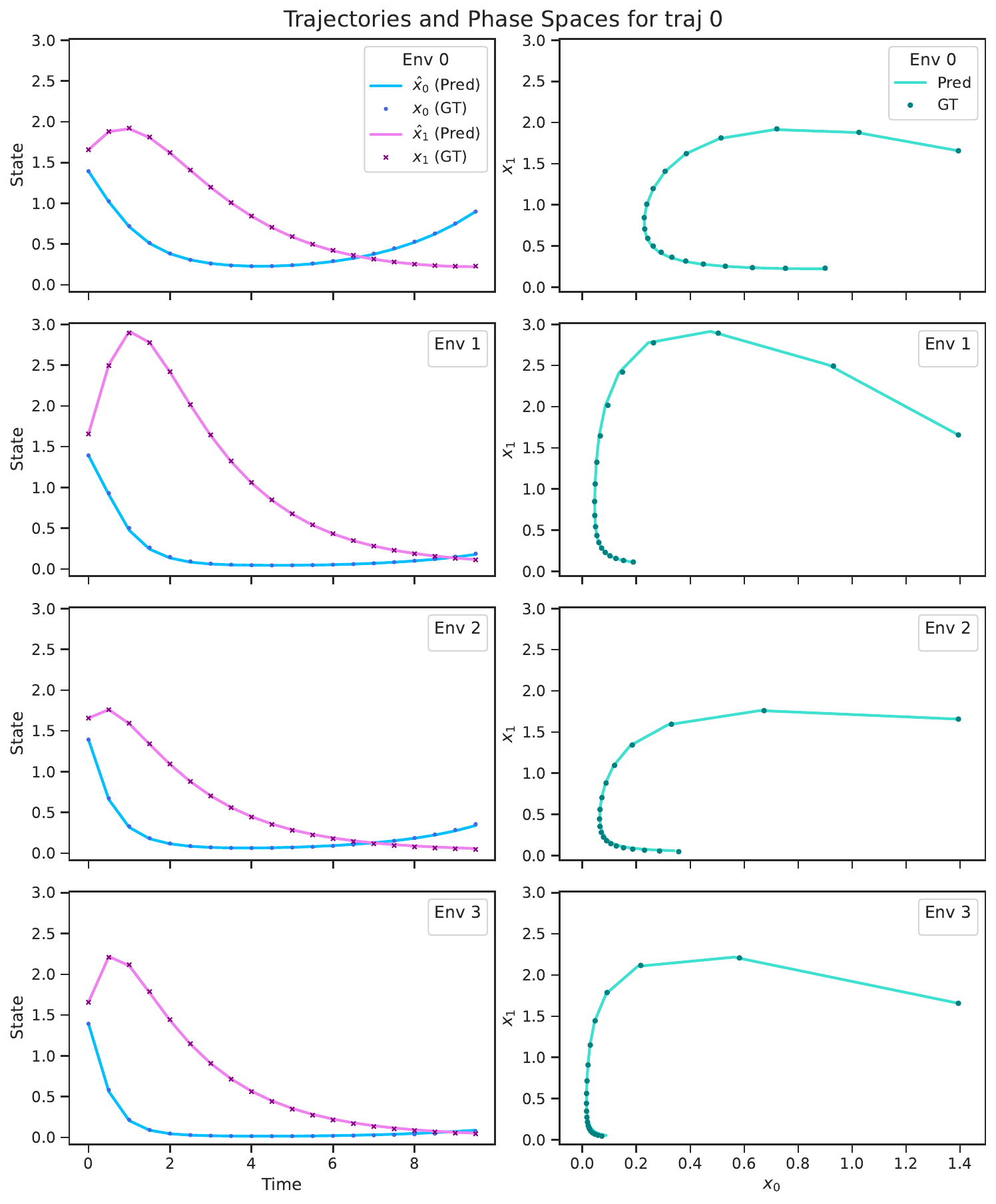} \hspace*{1cm} \includegraphics[width=.1885\columnwidth]{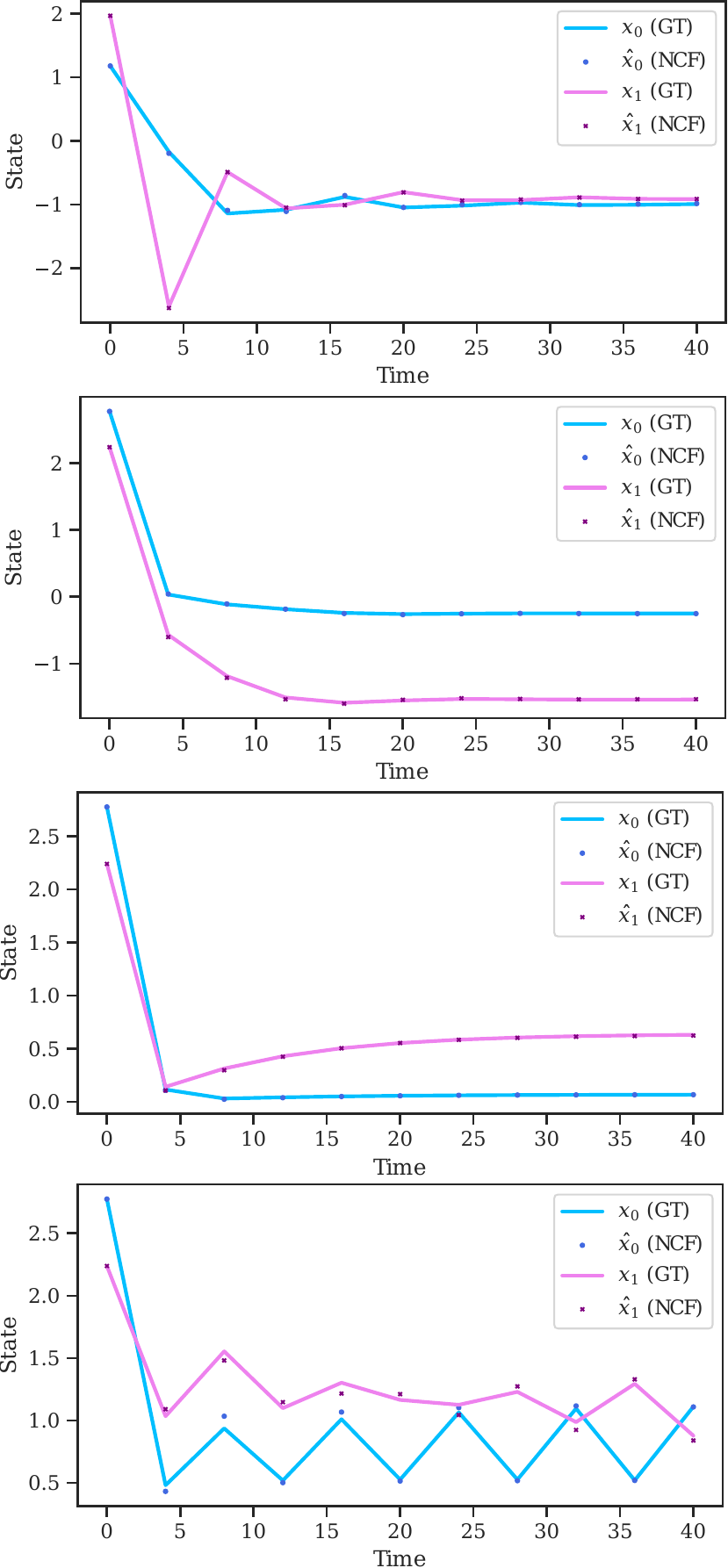}}
\caption{\textbf{(Right)} Visualizing the first ground truth and predicted testing trajectories and phase spaces in all 4 adaptation environments of the LV problem. The initial condition is the same across all 4 adaptation environments. \textbf{(Left)} \rebut{Visualizing the first ground truth testing 
trajectory in 4 meta-training environments found in various attractors of the SM problem: the first in (L1), the second and third in (E), and the fourth in (L2).}}
\label{fig:trajvis_lv}
\end{center}
\end{figure}

\begin{figure}[H]
\begin{center}
\centerline{\includegraphics[width=.9\columnwidth]{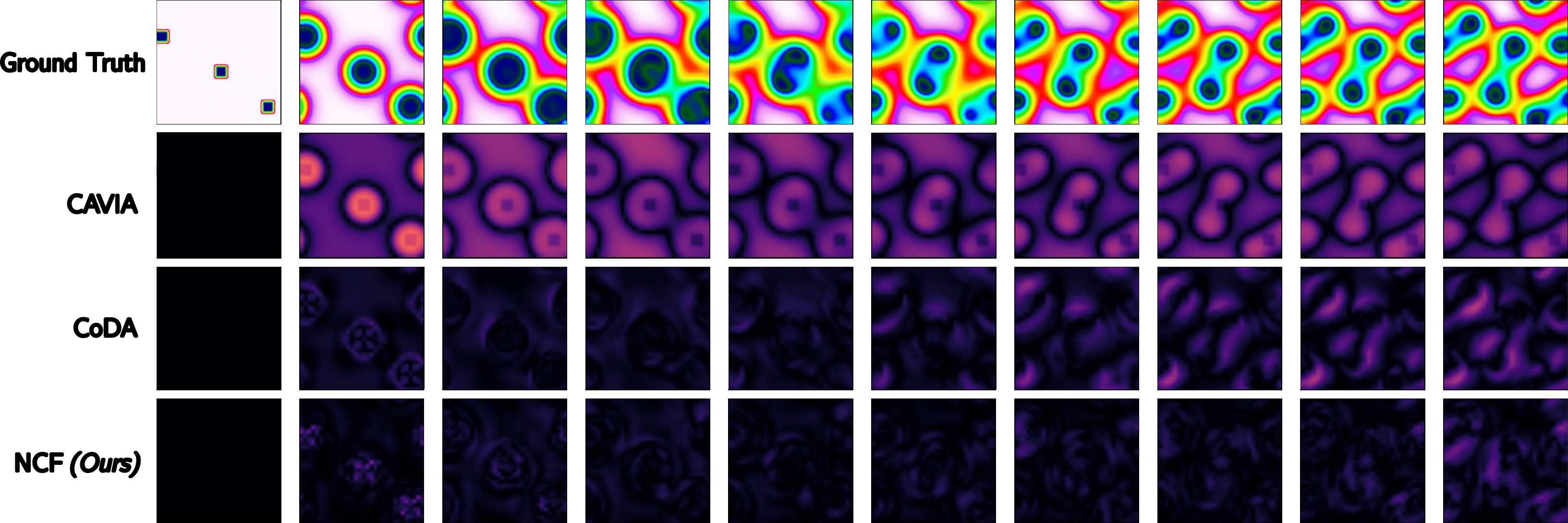}}
\caption{Sample absolute reconstruction error of a trajectory during adaptation of the Gray-Scott system with \ttwo. Trajectories begin at $t=0$ (left) and end at $t=400$. This describes the sole trajectory in the fourth adaptation environment.}
\label{fig:trajvis}
\end{center}
\end{figure}
\vspace*{-0.8cm}

\begin{figure}[H]
\begin{center}
\centerline{\includegraphics[width=.9\columnwidth]{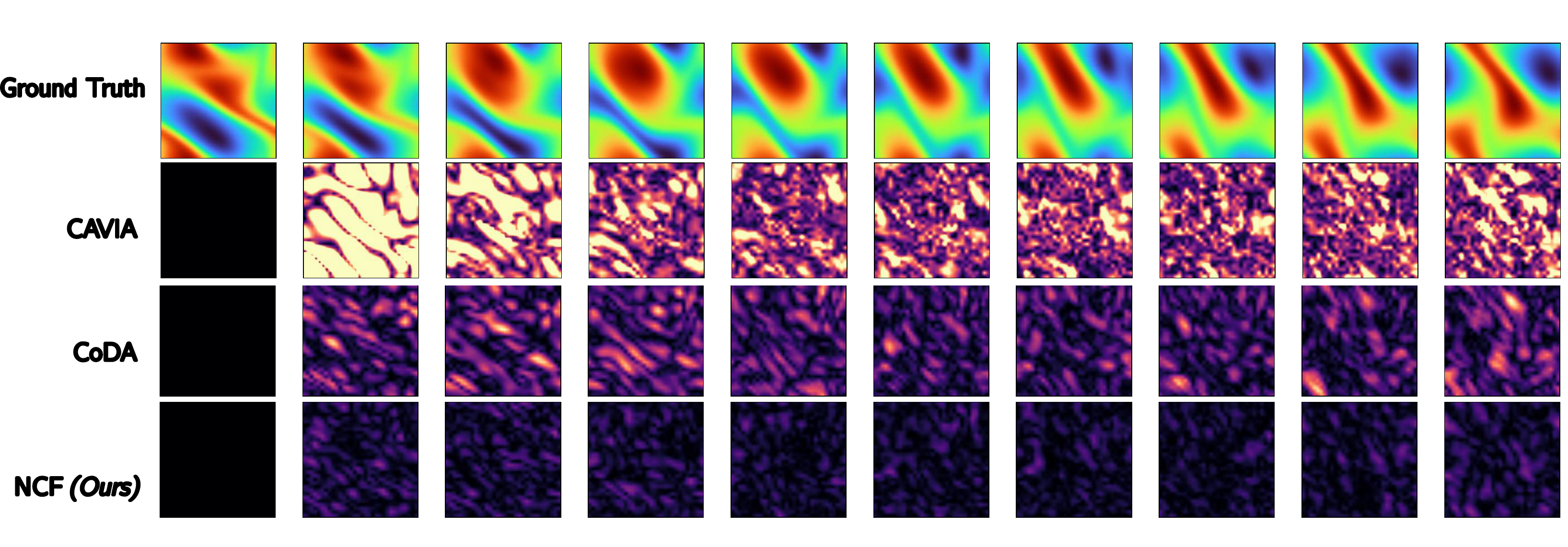}}
\caption{Sample absolute reconstruction error of a trajectory during adaptation of the Navier-Stokes system with \ttwo. Trajectories begin at $t=0$ (left) and end at $t=10$. The viscosity for the reported environment is $\nu = 1.15\times 10^{-3}$.}
\label{fig:trajvis_ns}
\end{center}
\end{figure}

\end{document}